\newcommand{\pip}[1][]{\theta#1}
\newcommand{\fnbound}[1]{ {\cal F}^{\rm #1} }
\newcommand{\fnLexp}[1]{ L^{\rm{#1}} }
\newcommand{\fnL}[1]{ \hat  L^{\rm{#1}} }
\NewDocumentCommand{\ratioa}{ O{t} O{}  }{r_{#1}(\theta_{#2}) }
\NewDocumentCommand{\ratio}{ O{} O{} O{}  }{r_{#1}^{#3}({#2}) }%第一个参数是(s,a)，第二个参数是\pi
\newcommand{\advstd}[1][]{A^{\pi_{\rm old}}(s,a)}
\NewDocumentCommand{\adv}{ O{} O{}  }{ A_{#1}^{#2} }
\newcommand{\rbweight}[1][]{ \alpha_{#1} }
\newcommand{\FBTEXT}{{RB}}
\newcommand{\FB}{{\rm \FBTEXT}}
\def\rollback/{rollback}
\def\Rollback/{Rollback}
\def\pmethodfallback/{PPO-\FBTEXT}
\def\pmethodfallbackfull/{PPO with \Rollback/}
\def\pmethodklfull/{trust region-based PPO}
\def\Pmethodklfull/{Trust Region-based PPO}
\newcommand{\KLTEXT}{{TR}}
\newcommand{\KL}{{\rm\KLTEXT}}
\def\pmethodkl/{\KLTEXT-PPO}
\def\pmethodklsimple/{\pmethodkl/-direct}
\def\pmethodratiosimple/{PPO-direct}
\def\pmethodhybridfull/{{Truly PPO}} % \textcolor{red}
\def\Pmethodhybridfull/{{Truly PPO}} %\textcolor{red}
\def\pmethodhybrid/{Truly PPO} 
\def\pmethodhybridf{\rm truly}
\newcommand{\clip}{{\rm CLIP}}
\def\oldsub{_{\rm old}}
\def\newsub{_{\rm new}}
\def\newp{_{\rm new}}
\def\clippingmechanism/{clipping mechanism}
\def\clippingratio/{clipping range}
\def\Klbased/{Trust region-based}
\def\Ratiobased/{Ratio-based}
\def\klbased/{trust region-based}
\def\ratiobased/{ratio-based}
\def\stable/{stable}
\NewDocumentCommand{\pinew}{ O{}  }{ \pi\newsub^{\rm #1} }
\def\PG{\rm PG}
\def \LPG{ L^{\rm PG} }
\def \LM{ M } % _{\pi\oldsub}
\begin{document}

\title{Truly Proximal Policy Optimization}

\author{\name Yuhui Wang \email y.wang@nuaa.edu.cn \\
%       \AND
       \name Hao He \email hugo@nuaa.edu.cn \\
       \name Chao Wen \email chaowen@nuaa.edu.cn \\
       \name Xiaoyang Tan\thanks{Corresponding author.}  \email x.tan@nuaa.edu.cn \\
       \addr College of Computer Science and Technology, Nanjing University of Aeronautics and Astronautics\\
       \addr MIIT Key Laboratory of Pattern Analysis and Machine Intelligence\\
       \addr Collaborative Innovation Center of Novel Software Technology and Industrialization\\
       Nanjing 210016, China.}

\editor{}

\maketitle

\begin{abstract}
Proximal policy optimization (PPO) is one of the most successful deep reinforcement-learning methods, achieving state-of-the-art performance across a wide range of challenging tasks. However, its optimization behavior is still far from being fully understood. In this paper, we show that PPO could neither strictly restrict the likelihood ratio as it attempts to do nor enforce a well-defined trust region constraint, which means that it may still suffer from the risk of performance instability. To address this issue, we present an enhanced PPO method, named \Pmethodhybridfull/. Two critical improvements are made in our method: 
1) it adopts a new clipping function to support a rollback behavior to restrict the difference between the new policy and the old one; 
2) the triggering condition for clipping is replaced with a trust region-based one, 
%such that the resulting surrogate objective function provides an explicit lower bound to the ultimate policy performance measurement of interest.
{such that optimizing the resulted surrogate objective function provides guaranteed monotonic improvement of the ultimate policy performance.}
%which is theoretically justified according to the trust region theorem.
It seems, by adhering more truly to making the algorithm proximal --- confining the policy within the trust region, the new algorithm improves the original PPO on both sample efficiency and performance.
\end{abstract}

\begin{keywords}
proximal policy optimization, trust region policy optimization, policy constraint, policy metric, policy gradient
\end{keywords}

\section{Introduction}

Deep model-free reinforcement learning has achieved great successes in recent years, notably in video games \citep{mnih2015human}, board games \citep{silver2017mastering}, robotics \citep{levine2016end, liu2018adaptive}, and challenging control tasks \citep{schulman2016high, Akshara2019Robot}. Policy gradient (PG) methods are useful model-free policy search algorithms, updating the policy with an estimator of the gradient of the expected return \citep{peters2008reinforcement, hu2019reinforcement}. {One major challenge of PG-based methods is to estimate the right step size for the policy updating, and an improper step size may result in severe policy degradation due to the fact that the input data strongly depends on the current policy \citep{kakade2002approximately, schulman2015trust}.} For this reason, the trade-off between learning stability and learning speed is an essential issue to be considered for a PG method.

The well-known trust region policy optimization (TRPO) method addressed this problem by imposing onto the objective function a trust region constraint so as to control the KL divergence between the old policy and the new one \citep{schulman2015trust}. This can be theoretically justified by showing that optimizing the policy within the trust region leads to guaranteed monotonic performance improvement. However, the complicated second-order optimization involved in TRPO makes it computationally inefficient and difficult to scale up for large scale problems when extending to complex network architectures. Proximal Policy Optimization (PPO) significantly reduces the complexity by adopting a clipping mechanism so as to avoid imposing the hard constraint completely, allowing it to use a first-order optimizer like the Gradient Descent method to optimize the objective \citep{schulman2017proximal}. As for the mechanism for dealing with the learning stability issue, in contrast with the trust region method of TRPO, PPO tries to remove the incentive for pushing the policy away from the old one when the likelihood ratio between them is out of a clipping range. PPO is proven to be very effective in dealing with a wide range of challenging tasks, while being simple to implement and tune.

However, despite its success, the actual optimization behavior of PPO is less studied, highlighting the need to study the “proximal” property of PPO. Some researchers have raised concerns about whether PPO could restrict the likelihood ratio as it attempts to do \citep{wang2019trust,ilyas2018deep}, and since there exists an obvious gap between the heuristic likelihood ratio constraint and the theoretically justified trust region constraint, it is natural to ask whether PPO enforces a trust region-like constraint as well to ensure its stability in learning?

In this paper, we formally address both the above questions and give negative answers to both of them. In particular, we found that PPO could neither strictly restrict the likelihood ratio nor enforce a trust region constraint. The former issue is mainly caused by the fact that PPO could not entirely
remove the incentive for pushing the policy away, while the latter is mainly due to the inherent difference between the two types of constraints adopted by PPO and TRPO respectively.

Inspired by the insights above, we propose an enhanced PPO method, named \Pmethodhybridfull/. In particular, we apply a negative incentive to prevent the policy from being pushed away during training, which we called a \emph{\rollback/} operation. Furthermore, we replace the triggering condition for clipping with a trust region-based one, 
%which is theoretically justified according to the trust region theorem that optimizing the policy within the trust region lead to guaranteed monotonic improvement \citep{schulman2015trust}.
%such that the resulting surrogate objective function provides an explicit lower bound to the ultimate policy performance measurement of interest \citep{schulman2015trust}. 
such that optimizing the resulting surrogate objective function provides guaranteed monotonic improvement of the ultimate policy performance.
\pmethodhybrid/ actually combines the strengths of TRPO and PPO --- it is theoretically justified and is simple to implement with first-order optimization. Extensive results on several benchmark tasks show that the proposed methods significantly improve both the policy performance and the sample efficiency.
Source code is available at \url{https://github.com/wangyuhuix/TrulyPPO}.

\new{
A preliminary version of this work appears in \citep{wang2019truly}. This expanded version aims to provide more in-depth investigations of the characteristics of the proposed methods. 
Specifically, we propose a new objective function combining trust region-based clipping with \rollback/ operation on KL divergence. We show that it owns a better theoretical characteristic of the monotonic improvement and performs much better in practice. 
Moreover, the detailed proofs of the theorems and more derivation details are included to provide a more detailed description of the theoretical properties.
More experiments and comparison with the state-of-art methods have been added to show the effectiveness of the proposed methods. 
Last but not least, we give an elaborate summary of the relation with prior works on constraining policy and show how could our finds guide future research.
}

In what follows, we first introduce the preliminaries of proximal policy optimization in \Cref{sec_preliminaries}, then we give an analysis of the ``proximal'' property of PPO in \Cref{sec_analysis}. 
Next, we propose three variants of PPO to enhance its ability in restricting policy in \Cref{sec_method}.
We give a detailed relation with prior works in \Cref{sec_relatedwork}. 
The main experimental results are given in \Cref{sec_experiment}, and the paper is concluded in \Cref{sec_conclusion}.

\section{Preliminaries}\label{sec_preliminaries}

A Markov Decision Processes (MDP) is described by the tuple $(\mathcal{S},\mathcal{A},{\cal T},c,\rho_1,\gamma)$. $\mathcal{S}$ and $\mathcal{A}$ are the state space and action space; ${\cal T}: \mathcal{S} \times \mathcal{A} \times \mathcal{S} \rightarrow {\mathbb{R}}$ is the transition probability distribution; $c:\mathcal{S} \times \mathcal{A} \rightarrow \mathbb{R} $ is the reward function; $\rho_1$ is the distribution of the initial state $s_1$, and $\gamma \in (0,1)$ is the discount factor. 
%The return is the accumulated discounted reward from timestep $t$ onwards, $R_t^\gamma=\sum_{k=0}^\infty \gamma^k c(s_{t+k},a_{t+k})$. 
The performance of a policy $\pi$ is defined as
$
%\begin{equation}
	\eta (\pi ) = {\mathbb{E}_{s\sim{\rho ^\pi },a\sim\pi }}\left[ {c(s,a)} \right]%(\cdot|s)
%\end{equation}
$
where ${\rho ^{{\pi  }}}(s) = (1-\gamma)\mathop \sum_{t = 1}^{\infty} {\gamma ^{t - 1}}{\rho_t^{{\pi }}}(s)$, $\rho_t^{\pi}$ is the density function of state at time $t$.

Policy gradients methods \citep{sutton1999policy} update the policy by the following surrogate performance objective,
\begin{equation}\label{eq_policy_gradient}
\begin{aligned}
\LPG_{\pi\oldsub}(\pi )={{\mathbb{E}_{s,a}}\left[ { \ratio[s,a][\pi][\pi\oldsub] \adv[s,a][\pi\oldsub] } \right] + \eta(\pi\oldsub)} %(\cdot|s)
%{s\sim{\rho ^{\pi\oldsub}},a\sim\pi\oldsub}}
\end{aligned}
\end{equation}
where $\ratio[s,a][\pi][\pi\oldsub]= {\pi (a|s)}/{\pi\oldsub(a|s)}$ is the \emph{likelihood ratio} between the new policy $\pi$ and the old policy $\pi\oldsub$,
$\adv[s,a][\pi\oldsub]\triangleq \mathbb{E}[R_t^\gamma|s_t=s,a_t=a; \pi\oldsub ]-\mathbb{E}[R_t^\gamma|s_t=s; \pi\oldsub ]$ is the advantage value function of the old policy $\pi\oldsub$.
\emph{For simplicity, we will omit writing superscript/subscript $\pi\oldsub$ explicitly, e.g., $\LPG(\pi), \ratio[s,a][\pi], A_{s,a}$.}
%\ttt{We will omit the sample distribution in the paper and ...}
%\tore{\citep{kakade2002approximately} and \citep{schulman2015trust}} {stated} that excessively optimizing the policy by \eqref{eq_policy_gradient} without limit may {lead to a worse policy}.
%\citep{kakade2002approximately, schulman2015trust}.
%, and let $D_{{\rm{KL}}}^{{\rm{max}}}\left( {\pi\oldsub,\pi } \right) \triangleq {\max _{s\in\mathcal{S}}} D_{{\rm{KL}}}^{{s}}\left( {\pi\oldsub,\pi } \right)$,

%To faithfully investigate how the algorithms work in practice, we consider a parametrized policy.
In practical deep RL algorithms, the policy are usually parametrized by Deep Neural Networks (DNNs).
For discrete action space tasks where $|{\cal A}|=D$, the policy is parametrized by $\pi_\theta(s_t)=f^p_\theta(s_t)$,
 where $f^p_\theta$ is the DNN outputting a vector which represents a $D$-dimensional discrete distribution.
For continuous action space tasks, it is standard to represent the policy by a Gaussian policy, i.e., $\pi_\theta(a|s_t)=\mathcal{N}(a|f^\mu_\theta(s_t),f^\Sigma_\theta(s_t))$ \citep{williams1992simple,mnih2016asynchronous}, where $f^\mu_\theta$ and $f^\Sigma_\theta$ are the DNNs which output the mean and the covariance matrix of the Gaussian distribution. 
%For simplicity, we will also use the notation of $\theta$ to denote the policy $\pi$ in the our paper, e.g., 
%$L_{{\theta\oldsub}} \triangleq L_{\pi_{\theta\oldsub}}$,
%$A^{\theta}(s,a)$ 
%$D_{\rm KL}^{s}(\theta\oldsub, \theta) \triangleq D_{\rm KL}^{s}(\pi_{\theta\oldsub}, \pi_{\theta})$.

\subsection{Trust Region Policy Optimization}
%Schulman et al. (2015) 
The well-known Trust Region Policy Optimization (TRPO) is derived the following performance bound:
\begin{theorem}\label{thm_lowerbound}
	Let
	
		$C=\mathop {\max }\limits_{s,a} \left| {{ \adv[s,a][] }} \right|
			 {4\gamma }  /{{{(1 - \gamma )}^2}}
		$,
		$D_{{\rm{KL}}}^{{s}}\left( {\pi\oldsub,\pi } \right) \triangleq D_{{\rm{KL}}}^{}\left( {\pi\oldsub(\cdot|s)||\pi (\cdot|s)} \right)$,
%	\begin{flalign}
$
	{\LM}(\pi ) = {\LPG}(\pi ) - C\max_{s \in {\cal S}}D_{\rm{KL}}^{\rm{s}}\left( {\pi\oldsub,\pi } \right).
$
%	\label{eq_lower_bound}
%	\end{flalign}
	{We have}
	\begin{align}
%$
	\eta(\pi)\geq{\LM}(\pi ),
	\eta(\pi\oldsub)={\LM}(\pi\oldsub ).
%$
	\label{eq_lower_bound}
	\end{align}
\end{theorem}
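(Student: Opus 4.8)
The plan is to recover this bound as the classical TRPO-style guarantee of \citet{kakade2002approximately} and \citet{schulman2015trust}, assembled from three ingredients: a performance-difference identity, an importance-sampling rewrite of $\LPG$, and a trajectory-coupling estimate comparing the two. The equality $\eta(\pi\oldsub)=\LM(\pi\oldsub)$ will be essentially free, so almost all the work goes into the inequality $\eta(\pi)\ge\LM(\pi)$.

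First I would establish the \emph{performance difference identity}
\[
  \eta(\pi)=\eta(\pi\oldsub)+\mathbb{E}_{s\sim\rho^{\pi},\,a\sim\pi}[\adv[s,a]],
\]
where the advantage is that of $\pi\oldsub$; this follows by expanding $\adv[s,a]=\mathbb{E}[R_t^\gamma|s_t=s,a_t=a;\pi\oldsub]-\mathbb{E}[R_t^\gamma|s_t=s;\pi\oldsub]$, taking the expectation over whole trajectories generated by $\pi$ from $\rho_1$, and telescoping --- the discounted occupancy $\rho^{\pi}$ (with its $(1-\gamma)$ factor) is exactly what the telescoped series collapses to. Next, by the definition of the likelihood ratio, $\mathbb{E}_{s,a}[\ratio[s,a][\pi]\adv[s,a]]=\mathbb{E}_{s\sim\rho^{\pi\oldsub}}[\mathbb{E}_{a\sim\pi}[\adv[s,a]]]$, so $\LPG(\pi)=\eta(\pi\oldsub)+\mathbb{E}_{s\sim\rho^{\pi\oldsub},\,a\sim\pi}[\adv[s,a]]$. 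Subtracting,
\[
  \eta(\pi)-\LPG(\pi)=\sum_{s}\big(\rho^{\pi}(s)-\rho^{\pi\oldsub}(s)\big)\bar A(s),\qquad \bar A(s)\triangleq\mathbb{E}_{a\sim\pi}[\adv[s,a]],
\]
and it remains to bound the absolute value of the right-hand side by $C\max_{s}D_{\rm KL}^{s}(\pi\oldsub,\pi)$.

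For that estimate, observe first that $\mathbb{E}_{a\sim\pi\oldsub}[\adv[s,a]]=0$, so $\bar A(s)=\sum_a(\pi(a|s)-\pi\oldsub(a|s))\adv[s,a]$ and hence $|\bar A(s)|\le\big(\max_{s,a}|\adv[s,a]|\big)\|\pi(\cdot|s)-\pi\oldsub(\cdot|s)\|_1$. To control $\|\rho^{\pi}-\rho^{\pi\oldsub}\|_1$ I would set $\alpha\triangleq\tfrac12\max_{s}\|\pi(\cdot|s)-\pi\oldsub(\cdot|s)\|_1$ and run $\pi$ and $\pi\oldsub$ on a shared state sequence under a maximal coupling of the two action distributions at each visited state, so that the two trajectories remain identical through step $t$ with probability at least $(1-\alpha)^{t-1}$; this bounds $\|\rho_t^{\pi}-\rho_t^{\pi\oldsub}\|_1$, and summing the $\gamma$-weighted geometric series bounds $\|\rho^{\pi}-\rho^{\pi\oldsub}\|_1$. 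Feeding both bounds into the displayed difference yields $|\eta(\pi)-\LPG(\pi)|\le(\text{const})\cdot\alpha^{2}$, with the constant built out of $\max_{s,a}|\adv[s,a]|$, $\gamma$ and $(1-\gamma)$; a final application of Pinsker's inequality at the worst state, $\alpha^{2}\le\tfrac12\max_s D_{\rm KL}^{s}(\pi\oldsub,\pi)$, turns this into the stated constant $C$ and gives $\eta(\pi)\ge\LPG(\pi)-C\max_s D_{\rm KL}^{s}(\pi\oldsub,\pi)=\LM(\pi)$.

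The step I expect to be the main obstacle is the coupling estimate of $\|\rho^{\pi}-\rho^{\pi\oldsub}\|_1$: one has to set up the joint process so that the per-step disagreement probability is exactly the total-variation distance, and then do the bookkeeping --- conditioning on the first step at which the coupled trajectories diverge, handling the time indexing, and collapsing the $\gamma$-weighted sums --- carefully enough that the two independent factors of $\alpha$ appear and the power of $(1-\gamma)$ matches the statement. By contrast the second assertion is immediate: at $\pi=\pi\oldsub$ the ratio $\ratio[s,a][\pi\oldsub]$ equals $1$ and $\mathbb{E}_{s,a}[\adv[s,a]]=0$, so $\LPG(\pi\oldsub)=\eta(\pi\oldsub)$, while $D_{\rm KL}^{s}(\pi\oldsub,\pi\oldsub)=0$ for every $s$, so the penalty term vanishes and $\LM(\pi\oldsub)=\eta(\pi\oldsub)$.
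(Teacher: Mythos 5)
Your proposal is correct, and it is essentially the argument the paper relies on: Theorem~1 is imported verbatim from \citet{schulman2015trust} (building on \citet{kakade2002approximately}) with no proof given in this paper, and your reconstruction --- performance-difference lemma, importance-sampling rewrite of $\LPG$, maximal-coupling bound on the occupancy discrepancy, and Pinsker's inequality --- is exactly the standard proof from that source. The only remarks worth making are that with the paper's $(1-\gamma)$-normalized occupancy $\rho^\pi$ and the tight form of Pinsker your bookkeeping actually yields a constant smaller than the stated $C$, which still implies the claimed inequality a fortiori, so nothing is missing.
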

This theorem implies that maximizing ${\LM}(\pi )$ guarantee non-decreasing of the performance of the new policy $\pi$.
TRPO imposed a constraint on the KL divergence:
\begin{subequations}\label{eq_TRPO}
\begin{align}
\mathop {\max }\limits_\pi  & \LPG(\pi)
\\
\text{s.t.}& \max_{s \in {\cal S}}D_{\rm{KL}}^{\rm{s}}\left( {\pi\oldsub,\pi } \right) \le \delta  \label{eq_trust_region}
% %D_{\rm{KL}}^{\rm{max}}\left( {\pi\oldsub,\pi } \right)
\end{align}
\end{subequations}
{Constraint \eqref{eq_trust_region}} is called the \emph{trust region-based constraint}, which is a constraint on the KL divergence between the old policy and the new one.
%
%$L_{{\theta\oldsub}} \triangleq L_{\pi_{\theta\oldsub}}$,
%$A^{\theta}(s,a)$ 
% $D_{\rm KL}^{s}(\theta\oldsub, \theta) \triangleq D_{\rm KL}^{s}(\pi_{\theta\oldsub}, \pi_{\theta})$.

\subsection{Proximal Policy Optimization}\label{sec_PPO}

%PPO employs a clipped surrogate objective to prevent the new policy from straying away from the old one. 
Proximal policy optimization (PPO) attempts to restrict the policy by a clipping function \citep{schulman2017proximal}.\footnote{
There are two variants of PPO:
% which are implemented by clipping function and adaptive KL penalty coefficient , and the former one performs  in practice and is more widely used. In this paper, 
we refer to the one with clipping function as \emph{PPO}, and refer to the one with adaptive KL penalty coefficient as \emph{PPO-penalty} \citep{schulman2017proximal}.
}
\begin{equation}\label{eq_L_PPO_expectation}
\begin{aligned}
%\begin{dmath}
&\fnLexp{\clip}(\pi)
%\\ =& %two column
=
%\mathbb{E}\left[
%	\min \left( \ratio[s,a][\pi] \adv[s,a][],
%	\\
%	{\quad\quad\quad\quad\quad}	{\fnbound{\clip}}\left( {\ratio[s,a][\pi],\epsilon} \right) \adv[s,a][] \right) 
%\right]
\mathbb{E}\left[
	\min \left( \ratio[s,a][\pi] \adv[s,a][],
	{\fnbound{\clip}}\left( {\ratio[s,a][\pi],\epsilon} \right) \adv[s,a][] \right) 
\right]
%\end{dmath}
\end{aligned}
\end{equation}
%\footnote{\ttt{We do not focus on the accuracy of advantage value function $A^{\pi\oldsub}$ and assume exact evaluation of $A^{\pi\oldsub}$.}}; 
where $\fnbound{\clip}$ is defined as
\begin{equation}\label{eq_clip}
\begin{aligned}
	& \fnbound{\clip}( {\ratio[s,a][\pi],\epsilon} )  
	=
	\begin{cases}
		1-\epsilon & \ratio[s,a][\pi] \leq 1-\epsilon \\
		1+\epsilon & \ratio[s,a][\pi] \geq 1+\epsilon \\
		\ratio[s,a][\pi] & \text{otherwise}
	\end{cases}
\end{aligned}
\end{equation}
where $(1-\epsilon, 1+\epsilon)$ is called the \emph{\clippingratio/}, $0<\epsilon<1$ is the parameter.

%Given a parametrized policy $\pi_{\pip\oldsub}$, let $s_t\sim{\rho ^{\pi_{\theta\oldsub}}},a_t\sim{\pi_{\theta\oldsub}}(\cdot|s_t)$ denote the sampled states and actions.
Given $s_t\sim{\rho ^{\pi_{\theta\oldsub}}},a_t\sim{\pi_{\theta\oldsub}}(\cdot|s_t)$, which are sampled using the parametrized policy $\pi_{\theta\oldsub}$.
For simplicity, 
%let $\ratio[t][\theta] \triangleq \ratio[s_t,a_t][\pi_\pip] = \pi_\pip(a_t|s_t)/\pi_{\pip\oldsub}(a_t|s_t)$, $\adv[t]$ denote the estimated advantage value of $\adv[s_t,a_t][\pi_{\theta\oldsub}] $; 
we will use subscript $t$ to denote the corresponding value for sample $(s_t,a_t)$, e.g., $\ratio[t][\pi_\pip] \triangleq \ratio[s_t,a_t][\pi_\pip]$, $A_{t} \triangleq \adv[s_t,a_t]$;
and we will also use functions of $\theta$ and the ones of $\pi$ alternatively, e.g., $\ratio[t][\pip] \triangleq \ratio[t][\pi_\pip] $, $\fnLexp{CLIP}(\theta)\triangleq \fnLexp{CLIP}(\pi_\theta)$ and $D_{\rm KL}^{s}(\theta\oldsub, \theta) \triangleq D_{\rm KL}^{s}(\pi_{\theta\oldsub}, \pi_{\theta})$.
The overall empirical objective function of data $\{s_t,a_t,A_t\}_{t=1}^T$ is $\fnL{\clip}(\theta)$. 
To provide a more intuitive form on how the clipping function works, the objective function for a single sample $(s_t,a_t)$ can be rewritten in the {following form}:
\begin{subequations}\label{eq_L_PPO_case}
%\small

 \if@twocolumn
 		\def\lenjdiucdsawd{0.55}
 \else
 	 	\def\lenjdiucdsawd{0.3}	 	
 \fi
\begin{align}[left=	{\fnLexp{\clip}_{t}(\theta)=\empheqlbrace}]
	  	&{(1-\epsilon)}\adv[t] &  
	  	  	\parbox[c]{\lenjdiucdsawd\linewidth}{
	  				  	$\ratioa \leq 1-\epsilon$ \text{and} $ \adv[t]<0 $
	  		} \label{eq_PPO_case_1}
	  	\\
	  	&{(1+\epsilon)}\adv[t] &   
	  	  	\parbox[c]{\lenjdiucdsawd\linewidth}{
	  				  	$\ratioa \geq 1+\epsilon$ \text{and} $ \adv[t]>0 $
	  		} \label{eq_PPO_case_2}
	  	\\
	  	&\ratioa\adv[t] &  \text{otherwise \quad\quad\quad\quad} \notag
\end{align}
\end{subequations}
The case \eqref{eq_PPO_case_1} and \eqref{eq_PPO_case_2} are called the \emph{clipping condition}.
As the equation implies, once $\ratioa$ is out of the clipping range (with a certain condition of $\adv[t]$), the gradient of $\fnLexp{\clip}_t(\theta)$ w.r.t. $\theta$ will be zero.
As a result, $\ratio[t][\theta]$ {could} stop moving outward and the policy could be restricted.

%\ttt{Finally}, we should note the importance of the $\min(\cdot,\cdot)$ operation for all variants of PPO.
%Take \pmethodkl/ as an example, the objective function incorporating the extra $\min(\cdot,\cdot)$ operation is
%\begin{equation}\label{eq_L_KL}
%\begin{aligned}
% \fnL{\KL}_t(\pip ) 
%= 
%	\min \left( \ratioa \adv[t],
%		{\fnbound{\KL}}\left( {\ratioa,\delta} \right)\adv[t] \right) 
%\end{aligned}
%\end{equation}
%For the $\min(\cdot,\cdot)$ operation, 

\begin{figure}[!t]
	 \if@twocolumn
 		\def\lenjdiucchdhjs{0.5}
	 \else
 	 	\def\lenjdiucchdhjs{0.4}	 	
	 \fi
	\centering
	\centerline{
	\begin{subfigure}{\lenjdiucchdhjs\linewidth}
	{\includegraphics[width=1.0\linewidth]{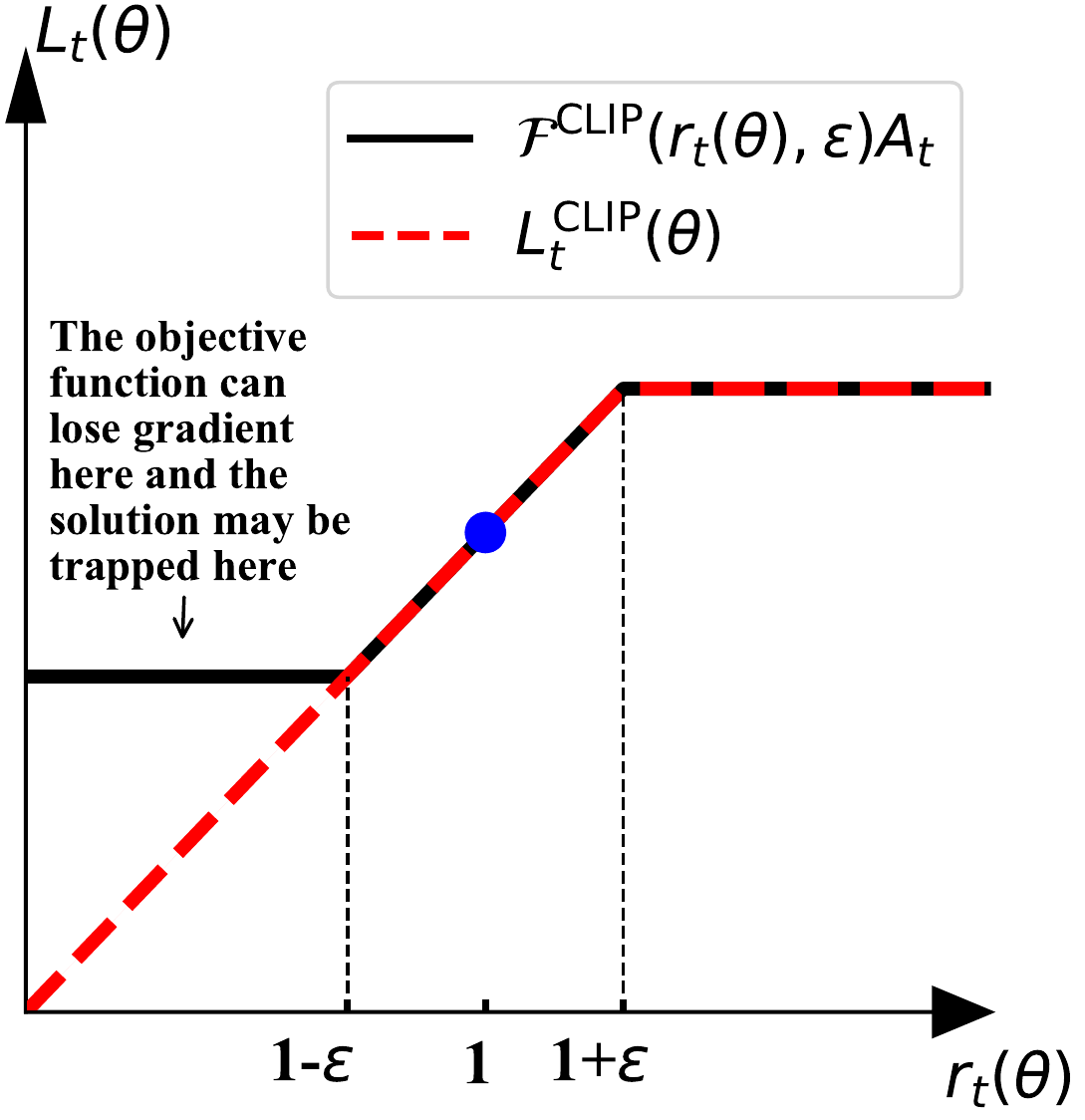} }
	\setlength{\abovecaptionskip}{0mm}
	\caption{$A_t>0$}
	\end{subfigure}
	\begin{subfigure}{\lenjdiucchdhjs\linewidth}
	{\includegraphics[width=1.0\linewidth]{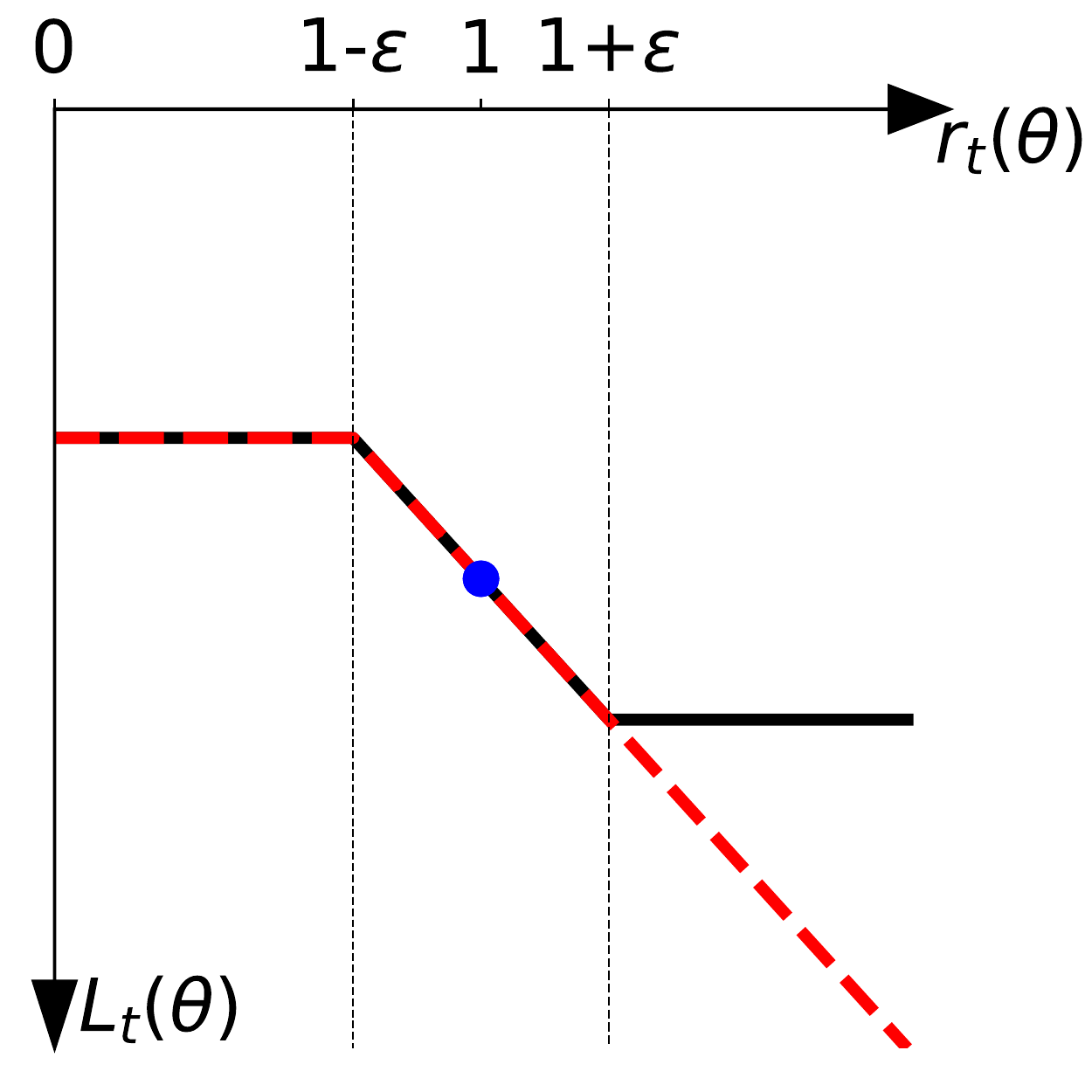} }
	\setlength{\abovecaptionskip}{0mm}
	\caption{$A_t<0$}
	\end{subfigure}
	}
	\iffastcompile
		\caption{
		}\label{fig_min_operation}
	\else
		\caption{
			Plots showing the surrogate function as functions of the likelihood ratio $\ratio[t][\theta]$ for positive advantages (left) and negative advantages (right).
			For the one without the minimum operation (black solid line), i.e., $\fnbound{\clip}( {\ratioa,\epsilon} ) A_t$, it can lose gradient for improving $\ratio[t][\theta]$ once $|\ratio[t][\theta]-1|>\epsilon$, even the $\ratio[t][\theta]$ is not improved ($\ratio[t][\theta]<1-\epsilon$ when $A_t<0$ or $\ratio[t][\theta]>1+\epsilon$ when $A_t<0$).
			While the one with the minimum operation (red dashed line), i.e., $\fnLexp{CLIP}_{t}(\theta )$, does not suffer from this issue.
	%		The blue circle shows the starting point for the optimization, i.e., $\ratio[t][\theta]=1$.
	%		When $r_t(\theta)$ crosses the clipping range, the slope of $\fnLexp{\FB}_t$ is reversed, while that of $\fnLexp{\clip}_t$ is zero.
		}\label{fig_min_operation}
	\fi

%	\includegraphics[width=.6\linewidth]{figs/fig_min_operation}
%	\ifshowtmp
%	\else
%	\fi
\end{figure}

{The minimum between the clipped and unclipped objective in \cref{eq_L_PPO_expectation} is designed to make the final objective $\fnLexp{CLIP}(\theta)$ to be a lower bound on the unclipped objective \citep{schulman2017proximal}.
%Schulman et al. (2017) stated that the $\min(\cdot,\cdot)$ operation could make the clipped objective be a lower bound on the unclipped objective.
It should also be noted that such operation is important for optimization, {which is not referred to by prior literature}.
As implied in \cref{eq_clip}, the clipped objective without minimum operation, i.e., $\fnbound{CLIP}( { \ratioa,\epsilon} ) \adv[t]$, would lose gradient for improving the ratio $\ratio[t][\theta]$ once the ratio goes past the clipping range, even the objective value is worse than the initial one, i.e., $\ratioa \adv[t] < \ratioa[t][old] \adv[t] $.
The minimum operation actually provides a remedy for this issue.
To see this, \cref{eq_L_PPO_case} is rewritten as
%\footnote{Condition \eqref{eq_PPO_case_1} with \eqref{eq_PPO_case_2} is equivalent with condition \eqref{eq_L_PPO_case_general_condition}.}
%\begin{equation}
\begin{subequations}\label{eq_L_PPO_case_general}
\if@twocolumn
	\small
	\def\lenjdiucdiuwyrchdhjs{0.32}
\else
 	\def\lenjdiucdiuwyrchdhjs{0.5}	 	
\fi

%\begin{align}
\begin{align}[left=	{\fnLexp{CLIP}_{t}(\theta )=\empheqlbrace}]
%\fnL{CLIP}_{s,a}(\pi )=
%\begin{cases}
	& \text{const} &  % Note: It cannot use -infty here, since we it means that it is impossible here.
  	\parbox[c]{\lenjdiucdiuwyrchdhjs\linewidth}{
%			  	${ D_{\rm KL}^{s_t}(\pip{\oldsub}, \pip) \geq \delta}$ 
				$|\ratioa-1|\geq \epsilon$
			  	\text{ and } 
				%\\ %twocolumn
  				$ \ratioa \adv[t] \geq \ratioa[t][\oldsub] \adv[t]$  
	} \label{eq_L_PPO_case_general_condition}
  	\\
  	& \ratioa\adv[t] &  \text{otherwise \quad\quad\quad\quad} \notag
%\end{cases}
\end{align}
\end{subequations}
Condition \eqref{eq_PPO_case_1} combined with \eqref{eq_PPO_case_2} is equivalent to condition \eqref{eq_L_PPO_case_general_condition}.
As can be seen, the ratio $\ratio[t][\theta]$ is clipped only if the objective value is improved $ \ratioa \adv[t] \geq \ratioa[t][\oldsub] \adv[t]$ .
\Cref{fig_min_operation} depicts the mechanism of the minimum operation.
%(and the policy violates the constraint).
%\ttt{Such importance has not been referred by prior works.}
We also experimented with the direct-clipping method, i.e., $\fnbound{\clip}( {\ratioa,\epsilon} ) A_t$, and found it performs extremely bad in practice. 
See \Cref{sec_experiment_min} for more detail.
}

\section{{Analysis of the ``Proximal'' Property of PPO}}\label{sec_analysis}
%In this section, we will give a detail of the ``proximal'' property of PPO. 
%We refer to ``proximal'' property as whether the algorithm could restrict the policy difference, regarding the likelihood ratio or the KL divergence between the new policy and the old one.

PPO attempts to restrict the policy by clipping the likelihood ratio between the new policy and the old one.
% between them.
{Recently, researchers have raised concerns about whether this \clippingmechanism/ can really restrict the policy} \citep{wang2019trust,ilyas2018deep}.
We investigate the following questions of PPO.
The first one is that whether PPO could bound the likelihood ratio as it attempts to do.
The second one is that whether PPO could enforce a well-defined trust region constraint, which is primarily concerned since that it is a theoretical indicator on the performance guarantee (see \eqrefnop[eq_lower_bound]) \citep{schulman2015trust}.
%the performance guarantee would decrease as the policy stray away from the old one. 
We give an elaborate analysis of PPO to answer these two questions.

\begin{question}\label{que_ratio}
	Could PPO bound the likelihood ratio within the clipping range as {it attempts to do}?
\end{question}

%\begin{equation}\label{eq_PPO_case}
%\begin{gathered}
%\begin{numcases}{}
%{(1-\epsilon)}\adv[t] & r_\pi(s_t,a_t) < 1-\epsilon \text{ and } \adv[t]<0 \label{eq_1}
%\\
%{(1+\epsilon)}\adv[t] &  r_\pi(s_t,a_t) > 1+\epsilon \text{  and } \adv[t] >0 \label{eq_2}
%\\
%{r_\pi }({s_t},{a_t})\adv[t] & \text{otherwise}
%\end{numcases}
%\end{gathered}
%\end{equation}
%\begin{numcases}{}
%	${(1-\epsilon)}\adv[t] & r_\pi(s_t,a_t) < 1-\epsilon \text{ and } \adv[t]<0 \label{eq_1}
%	\\
%	{(1+\epsilon)}\adv[t] &  r_\pi(s_t,a_t) > 1+\epsilon \text{  and } \adv[t] >0 \label{eq_2}
%	\\
%	{r_\pi }({s_t},{a_t})\adv[t] & \text{otherwise}
%\end{numcases}

%$
%\min ( {r_\pi }({s_t},{a_t})\adv[t],
%	\mathop{clip}( {r_\pi }({s_t},{a_t}),{l_{{s_t},{a_t}}},{u_{{s_t},{a_t}}} )
%	\adv[t] )
%$
%\todo{intro...}
%The objective function \eqref{eq_L_clip} on $(s_t,a_t)$ can be rewritten as:
In general, PPO could generate an effect of preventing the likelihood ratio from exceeding the clipping range too much, but it could not strictly bound the likelihood ratio.

%The case \eqref{eq_PPO_case_1} and \eqref{eq_PPO_case_2} are called the \emph{clipping condition}.
As we have discussed in Sec \ref{sec_PPO}, the gradient of $\fnLexp{\clip}_{t}(\theta)$ w.r.t. $\theta$ will be zero only if $\ratioa$ is out of the clipping range.
%}
{As a result, the incentive, deriving from $\fnLexp{\clip}_{t}(\theta)$, for driving $\ratioa$ to go farther beyond the clipping range is removed.}%

However, in practice the likelihood ratios are known to be not bounded within the clipping range \citep{ilyas2018deep}. 
The likelihood ratios are larger than 4. on almost all the tasks, which are much larger than the upper clipping range 1.2 ($\epsilon=0.2$, see our empirical results in \Cref{sec_experiment}).
One main factor for this problem is that
%the clipping mechanism could only remove the incentive derived from $\fnLexp{\clip}_{t}(\theta)$.
{the clipping mechanism could not entirely remove incentive deriving from the overall objective $\fnL{\clip}(\theta)$, which possibly push these out-of-the-range $\ratioa$ to go farther beyond the clipping range.}
We formally describe this claim as follows.
% v2
%One main factor for this problem is that, although the gradients from those stopped-updating state-actions are zero, the gradients from other unstopped-updating state-actions are still possible to push these ratios outward the clipping range.
% v2 end

\begin{theorem}
{Given $\theta_0$ that $r_t(\theta_0)$} satisfies the clipping condition (either condition \ref{eq_PPO_case_1} or \ref{eq_PPO_case_2}).
Let $\nabla \fnL{\clip}(\theta_0 )$ denote the gradient of $\fnL{\clip}$ at $\theta_0$, %
and similarly $\nabla r_t(\theta_0)$.
Let $\theta_1=\theta_0+\beta \nabla\fnL{\clip}(\theta_0 )$, where $\beta$ is the step size.
 %denote the gradient of the unclipped objective for the $t$-th sample.
If 
\begin{equation}
\langle \nabla \fnL{\clip}(\theta_0 ), \nabla r_t(\theta_0)  \rangle A_t  > 0 \label{eq_PPO_outward_condition}
\end{equation}
then there exists some $\bar \beta>0$ such that for any $\beta \in (0, \bar \beta)$, we have 
\begin{equation}
 \left|r_t({\theta_1}) -1 \right| > \left|r_t({\theta_0}) -1 \right| > \epsilon .
 \label{eq_PPO_outward}
\end{equation}
\end{theorem}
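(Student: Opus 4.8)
The plan is to reduce the claim to a one‑dimensional first‑order expansion along the update direction. I would set $\phi(\beta)\triangleq r_t\bigl(\theta_0+\beta\,\nabla\fnL{\clip}(\theta_0)\bigr)$, which is differentiable at $\beta=0$ because $r_t$ is differentiable at $\theta_0$ (the statement already presupposes this, since $\nabla r_t(\theta_0)$ appears in it), with $\phi(0)=r_t(\theta_0)$ and, by the chain rule, $\phi'(0)=\langle \nabla r_t(\theta_0),\,\nabla\fnL{\clip}(\theta_0)\rangle$. Write $g\triangleq\phi'(0)$; then hypothesis \eqref{eq_PPO_outward_condition} reads $g\,A_t>0$, so $g\neq0$ and $g$ has the same sign as $A_t$. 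From $\phi'(0)=g\neq0$ and the definition of the derivative alone, there exists $\bar\beta>0$ such that for every $\beta\in(0,\bar\beta)$ the increment $\phi(\beta)-\phi(0)$ has the same sign as $g$, hence the same sign as $A_t$.

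Next I would split on the sign of $A_t$, using that the clipping condition pins $r_t(\theta_0)$ to a definite side of the clipping range. If $A_t>0$ the active case is \eqref{eq_PPO_case_2}, so $r_t(\theta_0)>1+\epsilon$ (reading the clipping condition in its strict form, which is what makes $|r_t(\theta_0)-1|>\epsilon$); since $g>0$ we get $r_t(\theta_1)=\phi(\beta)>\phi(0)=r_t(\theta_0)>1+\epsilon>1$ for $\beta\in(0,\bar\beta)$, hence $|r_t(\theta_1)-1|=r_t(\theta_1)-1>r_t(\theta_0)-1=|r_t(\theta_0)-1|>\epsilon$. If $A_t<0$ the active case is \eqref{eq_PPO_case_1}, so $r_t(\theta_0)<1-\epsilon<1$; since $g<0$ we get $r_t(\theta_1)=\phi(\beta)<\phi(0)=r_t(\theta_0)<1-\epsilon$, hence $|r_t(\theta_1)-1|=1-r_t(\theta_1)>1-r_t(\theta_0)=|r_t(\theta_0)-1|>\epsilon$. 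In both cases \eqref{eq_PPO_outward} holds for all $\beta\in(0,\bar\beta)$, which is exactly the assertion; everything else is sign bookkeeping.

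Since the whole argument is a one‑line first‑order expansion, there is no substantial obstacle; the only points needing care are (i) deducing the existence of $\bar\beta$ purely from $\phi'(0)=g\neq0$, so that no regularity of $\fnL{\clip}$ beyond the already‑presupposed differentiability of $r_t$ at $\theta_0$ is invoked, and (ii) keeping straight which side of the clipping range $r_t(\theta_0)$ lies on in each case, so that moving $r_t$ in the direction dictated by $g$ genuinely increases $|r_t-1|$ rather than decreasing it. I would also append a one‑line remark that this is consistent with the $t$‑th summand $\fnLexp{\clip}_t$ having zero gradient at $\theta_0$: the component of $\nabla\fnL{\clip}(\theta_0)$ along $\nabla r_t(\theta_0)$ is contributed entirely by the other samples, which is precisely the mechanism the theorem is meant to expose.
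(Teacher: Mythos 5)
Your proof is correct and follows essentially the same route as the paper's: both define $\phi(\beta)=r_t(\theta_0+\beta\,\nabla\fnL{\clip}(\theta_0))$, compute $\phi'(0)$ by the chain rule, use the sign hypothesis to conclude $\phi$ moves in the direction of $A_t$ for small $\beta$, and split on the two clipping cases. Your explicit handling of the strict-inequality reading of the clipping condition and the closing remark about the per-sample gradient being zero are minor additions, not a different argument.
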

\begin{proof}
Consider $\phi(\beta)=r_t( \theta_0+\beta \nabla\fnL{\clip}(\theta_0 ) )$.

By chain rule, we have 
\[
	\phi'(0)= \langle \nabla \fnL{\clip}(\theta_0 ), \nabla r_t(\theta_0)  \rangle 
\]

For the case where $r_t(\theta_0) \geq 1+\epsilon$ and $ \adv[t]>0 $, we have $\phi'(0)>0$.
Hence, there exists $\bar \beta >0 $ such that for any $\beta \in (0, \bar \beta)$
\[
	\phi(\beta) > \phi(0)
\]
Thus, we have 
\[
	r_t(\theta_1) > r_t(\theta_0) \geq 1+\epsilon
\]
We obtain
\[
	|r_t(\theta_1) -1 | > |r_t(\theta_0) -1|
\]

Similarly, for the case where $r_t(\theta_0) \leq 1-\epsilon$ and $ \adv[t]<0 $, we also have $|r_t(\theta_1) -1 | > |r_t(\theta_0) -1|$.
\end{proof}

%This theorem implies that even the ratio $\ratioa$ has already lied outside of the clipping range, it still possibly update outward by the incentive from the overall 
%as long as the gradient satisfy a certain condition. 
%\cref{eq_PPO_outward} means that after a update step $\theta_0 \rightarrow \theta_1$, the likelihood ratio is moving outward the clipping range. 
%We provide the proof in Appendix \ref{app-sec_proof}.
{As this theorem implies, even the likelihood ratio $r_t(\theta_0)$ is already out of the clipping range, it could be driven to go farther beyond the range (see \eqrefnop[eq_PPO_outward]).}
%\ttt{Note that this theorem could also be extended to the case of $\adv[t]<0$ and $r_t(\theta_0) \leq 1-\epsilon$ with little change.}
%The condition is easy to be satisfied, 
%This theorem also holds for the case 
%We provide a certain condition when this result could happen, as \cref{eq_PPO_outward_condition} shows.
The condition \eqref{eq_PPO_outward_condition} requires the gradient of the overall objective $ \fnL{\clip}(\theta_0 )$ to be similar in direction to that of $r_t(\theta_0) A_t$.
%The overall objective $\fnL{\clip}(\theta)$ contains many items like $\fnLexp{\clip}_{t}(\theta)$.
%Although the gradient of $\fnLexp{\clip}_{t}(\theta)$ is zero, that of some other $\fnL{\clip}_{t'}(\theta)$ ($t'\neq t$) are not.
%Therefore, the gradients deriving from other $\fnL{\clip}_{t'}(\theta)$ could result in excessive $\ratioa$.
%The overall gradients $\nabla \fnL{\clip}(\theta_0 )$ incorporates gradients from other $\nabla \fnL{\clip}_{t'}(\theta_0 )$ ($t'\neq t$).
This condition possibly happens due to the similar gradients of different samples or optimization tricks.
%Some factors could also make this condition holds. 
%This condition may caused by optimization methods in realistic.
In addition, the Momentum optimization methods preserve the gradients attained before, which could possibly make this situation happen.
Such condition occurs quite often in practice.
We made statistics over 1 million samples on benchmark tasks in \Cref{sec_experiment}, and the condition occurs at a percentage from 25\% to 45\% across different tasks.
%make the out-of-the-range likelihood ratio continue moving outward.

%Our result is similar to that in \citep{wang2019trust}. 
%They analyse the optimal solutions of the objective function and found that the probability \new{ratios} of these optimal solutions are not strictly bounded.
%Nevertheless, they do not analyse whether PPO could achieve these unbounded solutions.
%In contrast, we explicitly give the reason how these solutions could be achieved.
%There are two main possible reasons which could account for this result as we speculate.

\begin{question}\label{que_KL}
	Could PPO enforce a trust region constraint?
\end{question}
%A particular concerned question is that whether PPO could constrain the KL divergence, since as \cref{eq_lower_bound} revealed that over large KL divergence often means low performance guarantee bound.
PPO does not explicitly attempt to impose a trust region constraint, i.e., the KL divergence between the old policy and the new one.
%Nevertheless, the clipping bound can affect the bound of KL divergence between the resulted policy and old policy.
Nevertheless, our previous work revealed that a different scale of the clipping range can affect the scale of the KL divergence \citep{wang2019trust}.
Under state-action $(s_t,a_t)$, if the likelihood ratio $\ratioa$ is not bounded, then \emph{neither} could the corresponding KL divergence $D_{\rm KL}^{s_t}(\pip{\oldsub},\pip)$ be bounded. 
Thus, together with the previous conclusion in Question \ref{que_ratio}, we can know that PPO could not bound KL divergence.
{In fact, even the likelihood ratio $\ratioa$ is bounded, the corresponding KL divergence $D_{\rm KL}^{s_t}(\pip{\oldsub}, \pip)$ is not necessarily bounded. Formally, we have the following theorem.
}

\begin{theorem}
{Assume} that 
for discrete action space tasks where $|{\cal A}|\geq 3$, the policy is parametrized by $\pi_\theta(s_t)=p_t \in {\mathbb R}^{+|{\cal A}|} $, where $ \sum_d ^{|{\cal A}|} p^{(d)}_t=1$;
for continuous action space tasks, the policy is parametrized by $\pi_\theta(a|s_t)=\mathcal{N}(a|\mu_t,\Sigma_t)$.
%{Assume} that 
%for discrete action space tasks where $|{\cal A}|\geq 3$ and the policy is $\pi_\theta(s)=f^p_\theta(s)$, we have $\{ f_\theta^p(s_t)| \theta \in {\mathbb R} \} = 
%\{ p | p \in {\mathbb R}^{+D}, \sum_d ^{D} p^{(d)}=1 \} $;
%for continuous action space tasks where the policy is $\pi_\theta(a|s)=\mathcal{N}(a|f^\mu_\theta(s),f^\Sigma_\theta(s))$, we have
%$\{ (f_\theta^\mu(s_t), f_\theta^\Sigma(s_t))| \theta \in {\mathbb R} \} = 
%\{ (\mu, \Sigma) | \mu \in {\mathbb R}^D, \Sigma \text{ is a symmetric semidefinite } D\times D \text{ matrix} \} $.
Let $\Theta=\{ \theta| 1-\epsilon \leq \ratioa \leq 1+\epsilon \}$.
We have $\max_{\theta \in \Theta} D_{\rm KL}^{s_t}(\pip{\oldsub}, \pip)=+\infty$ for both discrete and continuous action space tasks.
\end{theorem}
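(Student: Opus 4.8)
The plan is to exploit an asymmetry: membership in $\Theta$ pins down only the single scalar $\pi_\theta(a_t|s_t)$ (recall $r_t(\theta)=\pi_\theta(a_t|s_t)/\pi_{\rm old}(a_t|s_t)$), whereas $D_{\rm KL}^{s_t}(\pi_{\rm old},\pi_\theta)=D_{\rm KL}\big(\pi_{\rm old}(\cdot|s_t)\,\|\,\pi_\theta(\cdot|s_t)\big)$ depends on the whole output distribution $\pi_\theta(\cdot|s_t)$. So in both cases I would keep $\pi_\theta(a_t|s_t)=\pi_{\rm old}(a_t|s_t)$ fixed — which forces $r_t(\theta)=1\in[1-\epsilon,1+\epsilon]$, so the associated $\theta$ is in $\Theta$ — and then deform the rest of $\pi_\theta(\cdot|s_t)$ along a one-parameter family whose KL to $\pi_{\rm old}(\cdot|s_t)$ tends to $+\infty$. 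Since the KL depends only on $\pi_\theta(\cdot|s_t)$ and, by the assumed parametrization, this output distribution may be taken to be any distribution of the stated form, it suffices to exhibit the families of distributions.

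\emph{Discrete case.} Let $d_0$ be the index of $a_t$, so $r_t(\theta)=p_t^{(d_0)}/p_{\rm old}^{(d_0)}$, and set $p_t^{(d_0)}:=p_{\rm old}^{(d_0)}$. Since $|\mathcal{A}|\ge 3$, after fixing coordinate $d_0$ at least two free coordinates $d_1\ne d_2$ remain. Put small fixed positive masses on all coordinates other than $d_0,d_1,d_2$, set $p_t^{(d_1)}:=\delta$, and let $p_t^{(d_2)}$ absorb the slack; for $\delta$ small this is a valid strictly positive distribution with $r_t=1$. Then
\[
D_{\rm KL}^{s_t}(\pi_{\rm old},\pi_\theta)=\sum_{d}p_{\rm old}^{(d)}\log\frac{p_{\rm old}^{(d)}}{p_t^{(d)}}=p_{\rm old}^{(d_1)}\log\frac{p_{\rm old}^{(d_1)}}{\delta}+R(\delta),
\]
where $R(\delta)$ collects the remaining terms and stays bounded as $\delta\to0^+$ (the $d_0$-term vanishes, the $d_2$-mass converges to a positive constant, all other masses are fixed). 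As $p_{\rm old}^{(d_1)}>0$, the right-hand side $\to+\infty$ as $\delta\to0^+$. This is precisely where $|\mathcal{A}|\ge 3$ is used: with $|\mathcal{A}|=2$, fixing $p_t^{(d_0)}$ leaves no freedom.

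\emph{Continuous case.} Write $k=\dim\mathcal{A}$ and $v:=\pi_{\rm old}(a_t|s_t)=\mathcal{N}(a_t|\mu_{\rm old},\Sigma_{\rm old})>0$. Take the family $\Sigma_t:=\lambda\Sigma_{\rm old}$ for $\lambda\in(0,1]$ and choose $\mu_t=\mu_t(\lambda)$ so that $\mathcal{N}(a_t|\mu_t,\lambda\Sigma_{\rm old})=v$; writing this condition out, it amounts to $(a_t-\mu_t)^\top\Sigma_{\rm old}^{-1}(a_t-\mu_t)=\lambda\big(-k\log\lambda+c\big)$ for a constant $c=c(v,k,\Sigma_{\rm old})$. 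The right-hand side is nonnegative for all sufficiently small $\lambda$ and tends to $0$ as $\lambda\to0^+$, so a solution $\mu_t$ exists (any point of the corresponding ellipsoid) and $\mu_t\to a_t$. By construction $r_t=1$, so the associated $\theta\in\Theta$. Substituting into the closed form of the Gaussian KL divergence,
\[
D_{\rm KL}^{s_t}(\pi_{\rm old},\pi_\theta)=\frac12\Big(\mathrm{tr}\big(\Sigma_t^{-1}\Sigma_{\rm old}\big)+(\mu_t-\mu_{\rm old})^\top\Sigma_t^{-1}(\mu_t-\mu_{\rm old})-k+\log\frac{\det\Sigma_t}{\det\Sigma_{\rm old}}\Big),
\]
which with $\Sigma_t=\lambda\Sigma_{\rm old}$ equals $\frac12\big(k/\lambda-k+k\log\lambda+\lambda^{-1}(\mu_t-\mu_{\rm old})^\top\Sigma_{\rm old}^{-1}(\mu_t-\mu_{\rm old})\big)$; since the last term is $\ge 0$ and $k/\lambda$ dominates $|k\log\lambda|$, this $\to+\infty$ as $\lambda\to0^+$.

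I expect the continuous case to be the only real obstacle. The naive idea — simply shrinking or inflating the covariance while keeping $\mu_t=\mu_{\rm old}$ — fails, because it drives $\pi_\theta(a_t|s_t)$, hence $r_t(\theta)$, to $+\infty$ or to $0$ and so leaves $\Theta$. The fix is to move along the curve in $(\mu,\Sigma)$-space on which $\pi_\theta(a_t|s_t)$ is held constant; the one slightly delicate verification is that the defining equation $\mathcal{N}(a_t|\mu_t,\lambda\Sigma_{\rm old})=v$ is solvable for all small $\lambda$ (equivalently, that the required squared Mahalanobis distance $\lambda(-k\log\lambda+c)$ is nonnegative and vanishes as $\lambda\to0^+$, so that $\mu_t\to a_t$ and the extra quadratic term in the KL remains finite rather than entering with the wrong sign). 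Granting that, both cases reduce to routine substitution into the closed-form KL expressions and taking a limit.
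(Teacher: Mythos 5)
Your proof is correct and follows essentially the same strategy as the paper's: hold the ratio at $r_t=1$ (hence stay in $\Theta$) and blow up the KL by sending a non-$a_t$ probability mass to zero in the discrete case, and by shrinking the covariance while relocating the mean so that the density at $a_t$ is preserved in the continuous case. The only differences are presentational --- the paper sets $p_{\rm new}^{(d')}=0$ outright rather than taking a limit $\delta\to 0^+$, and it argues existence of the compensating mean via an intermediate-value argument in dimension one (deferring higher dimensions to ``similarly''), whereas you solve the defining equation explicitly for general $k$.
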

%\tore{This is because that bounding a point in the probability function does not necessarily lead to bounded divergence of the probability distribution.}
%This theorem implies that even the ratio $\ratioa$ is bounded, the corresponding KL divergence $D_{\rm KL}^{s_t}(\pip{\oldsub}, \pip)$ is not necessarily bounded.

\begin{proof}
The problem $\max_{\theta \in \Theta} D_{\rm KL}^{s_t}(\pip{\oldsub}, \pip)$ is formalized as 
\begin{equation}
	\begin{aligned}
		\max_\theta & D_{\rm KL}^{s_t}(\pip{\oldsub}, \pip)
		\\
		s.t. &  1-\epsilon \leq  r_t(\theta) \leq 1+\epsilon
	\end{aligned}
\end{equation}
We first prove the discrete action space case, where the problem can be transformed into the following form,
\begin{equation}\label{eq_max_kl_origin}
	\begin{aligned}
		\max_p & \sum_d p\oldsub^{(d)} \log \frac{  p\oldsub^{(d)} }{ p^{(d)} }
		\\
		s.t. &  1-\epsilon \leq \frac{ p^{(a_t)} }{  p\oldsub^{(a_t)} }\leq 1+\epsilon
		\\
		& \sum_d { p^{(d)} } =1
	\end{aligned}
\end{equation}
where $p\oldsub = f^p_{\theta\oldsub}(s_t)$.
%if  $d'$
We could construct a $p_{\rm new}$ satisfies 1) $p_{\rm new}^{(d')}=0$ for a $d' \neq a_t$ where $p_{\rm old}^{(d')}>0$; 2) $1-\epsilon\leq\frac{ p^{(a_t)}_{\rm new} }{  p\oldsub^{(a_t)} } \leq 1+\epsilon$.
Thus we have \[\sum_d p\oldsub^{(d)} \log \frac{  p\oldsub^{(d)} }{ p_{\rm new}^{(d)} }=+\infty\]
Then we prove the continuous action space case where $dim({\cal A})=1$. The problem \eqref{eq_max_kl_origin} can be transformed into the following form, 
\begin{equation}
	\begin{aligned}
		\max_{\mu, \sigma} & F(\mu,\sigma)= 
%		\\ & % twocolumn
		  \frac{1}{2} {\left[ { - 2\log \frac{ \sigma }{\sigma{\oldsub}} + \frac{ \sigma }{\sigma{\oldsub}} + { (\mu-\mu\oldsub)^2}\sigma\oldsub^{-1} - 1} \right]}
		\\
		s.t. &  1-\epsilon \leq \frac{{\cal N}(a_t| \mu, \sigma)}{{\cal N}(a_t| \mu\oldsub, \sigma\oldsub)} \leq 1+\epsilon
	\end{aligned}
\end{equation}
where $\mu\oldsub=f^\mu_{\theta\oldsub}(s_t)$, $\sigma\oldsub=f^\Sigma_{\theta\oldsub}(s_t)$,
\[
	{{\cal N}(a| \mu, \sigma)} = \frac{1}{\sqrt{2\pi}\sigma}\exp{ \left( -\frac{(\mu-a)^2}{2\sigma^2} \right) }
\]
As can be seen, $\lim_{\sigma \rightarrow 0} F(\mu,\sigma) = +\infty $, we just need to prove that given any $\sigma_{\rm new}<\sigma\oldsub$, there exists $\mu_{\rm new}$ such that 
\[{{\cal N}(a_t| \mu_{\rm new}, \sigma_{\rm new})}={{\cal N}(a_t| \mu\oldsub, \sigma\oldsub)}\]
In fact, if $\sigma_{\rm new}<\sigma\oldsub$, then $\max_a {{\cal N}(a| \mu_{\rm new}, \sigma_{\rm new})} > \max_a {{\cal N}(a| \mu\oldsub, \sigma\oldsub)} $ for any $\mu_{\rm new}$.
Thus given any $\sigma_{\rm new}<\sigma\oldsub$, there always exists $\mu_{\rm new}$ such that ${{\cal N}(a_t| \mu_{\rm new}, \sigma_{\rm new})} =  {{\cal N}(a_t| \mu\oldsub, \sigma\oldsub)}$.

Similarly, for the case where $dim({\cal A})>1$, we also have $\max_{\theta \in \Theta} D_{\rm KL}^{s_t}(\pip{\oldsub}, \pip)=+\infty$.
\end{proof}

\begin{figure*}[!b]

	 \if@twocolumn
 		\def\lenjdiuceueurchdhjs{0.35}
	 \else
 	 	\def\lenjdiuceueurchdhjs{0.4}	 	
	 \fi
%	\centering
%	\includegraphics[width=0.30\textwidth]{figs/fig_kl2clip_discrete.eps}
	\centerline{
	\begin{subfigure}{\lenjdiuceueurchdhjs\linewidth}
		\includegraphics[width=1.0\linewidth]{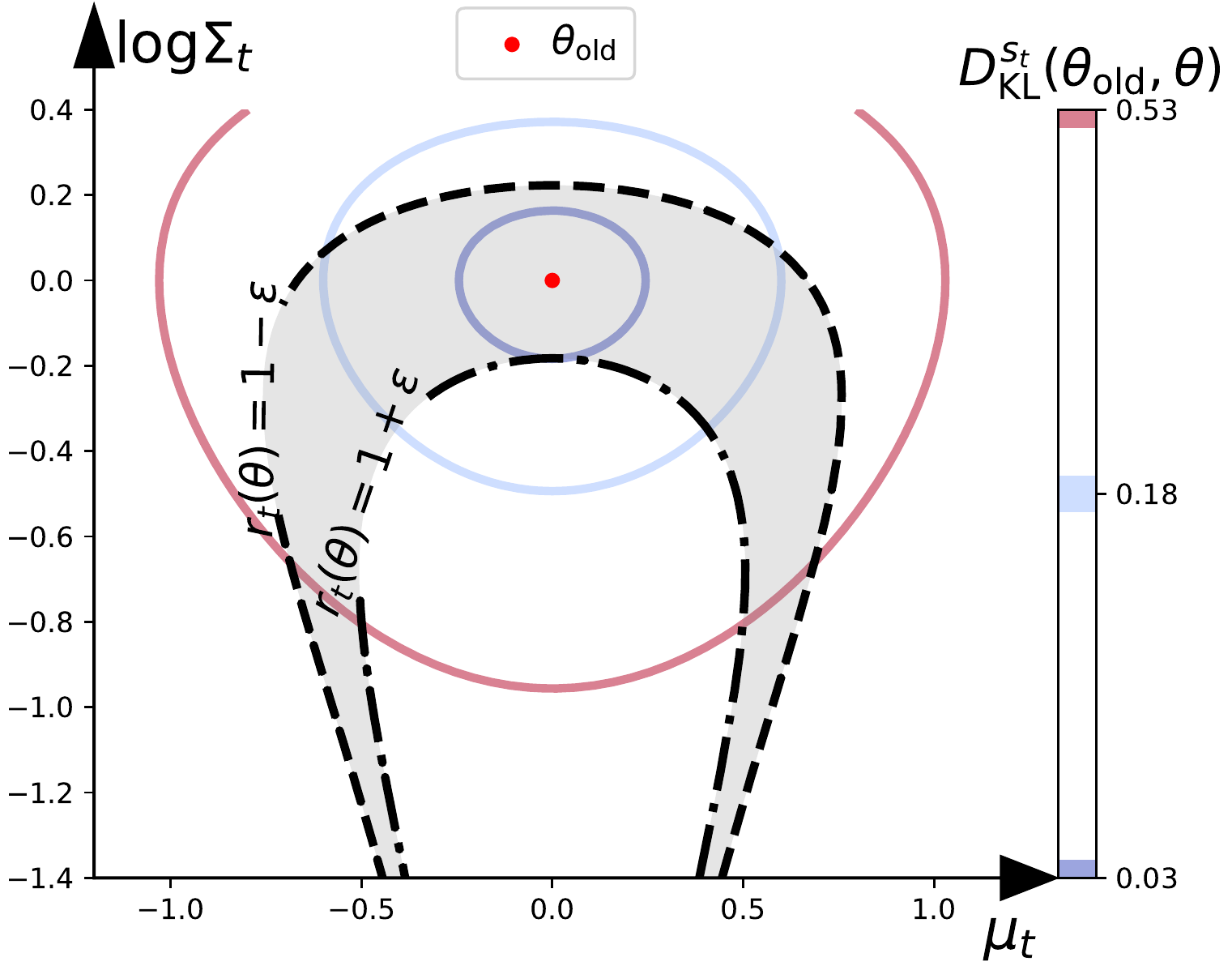}
		\caption{\small Case in Continuous Action Space}\label{fig_levelset_gaussian}
	\end{subfigure}
	\hspace*{0.5in}
	\begin{subfigure}{\lenjdiuceueurchdhjs\linewidth}
		\includegraphics[width=1.0\linewidth]{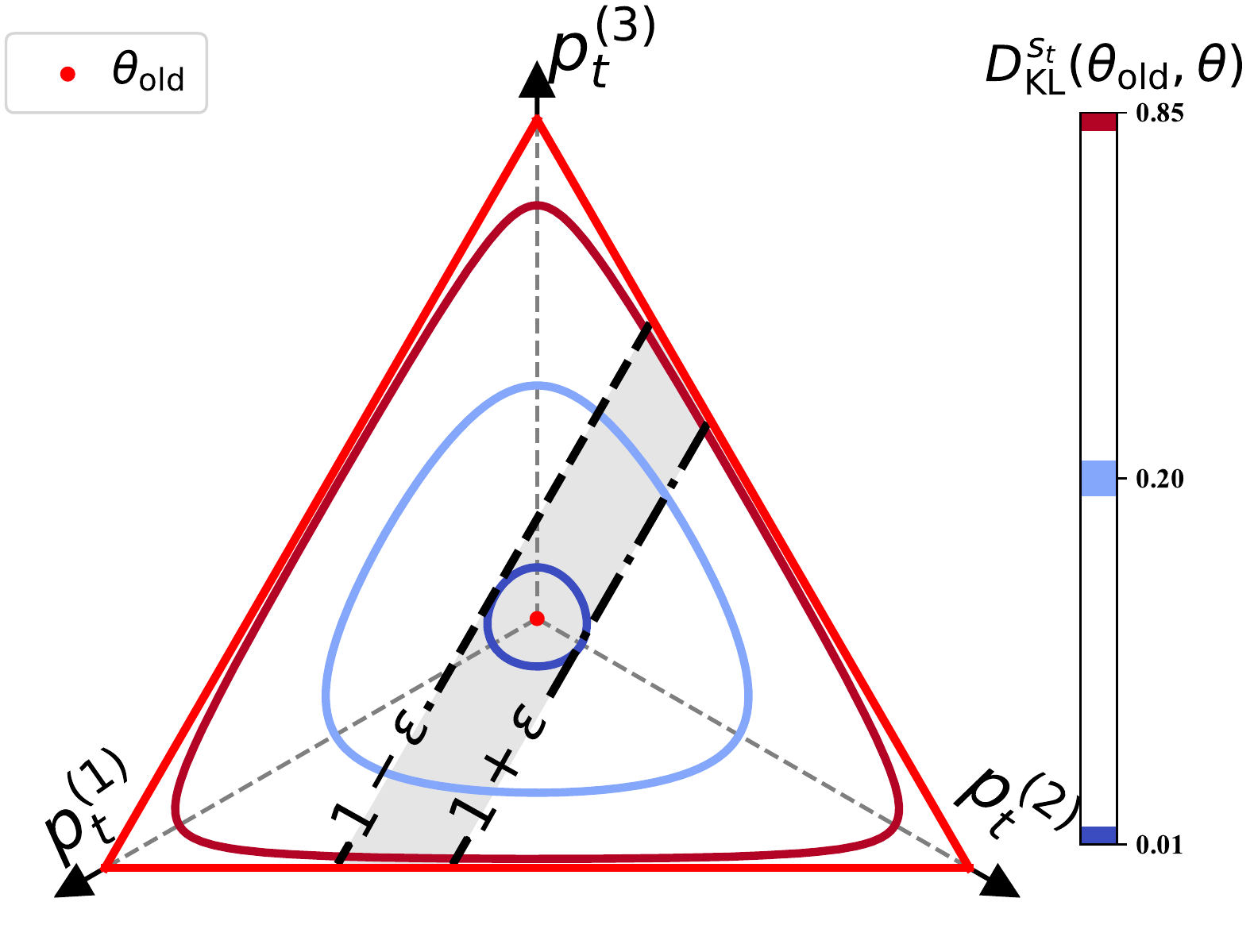}
		\caption{\small Case in Discrete Action Space}\label{fig_levelset_discrete}
	\end{subfigure}
	}
	\iffastcompile
		\caption{
		}\label{fig_levelset}
	\else
		\caption{
		The {grey} area shows the sublevel sets of $\ratioa$, i.e., $\Theta=\{ \theta| 1-\epsilon \leq \ratioa \leq 1+\epsilon \}$.
		The solid lines are the level sets of the KL divergence, i.e., $\{\theta|D_{\rm KL}^{s_t}(\pip{\oldsub},\pip)=\delta\}$.
		(a) A case of continuous action space, where $dim({\cal A})=1$.
		The action distribution under state $s_t$ is $\pi_\theta(s_t)=\mathcal{N}(\mu_t,\Sigma_t)$.
	%	, where $\mu_t=f_\theta^\mu(s_t), \Sigma_t=f_\theta^\Sigma(s_t)$.
		(b) A case of discrete action space, where $|{\cal A}|=3$.
		The action distribution under state $s_t$ is $\pi_\theta(s_t)=(p_t^{(1)},p_t^{(2)},p_t^{(3)})$.
		{Note that the level sets are plotted on the hyperplane $\sum_{d=1}^3{p_t^{(d)}}=1$ and the figure is showed from the view of elevation$=45^\circ$ and azimuth$=45^\circ$. }
		}\label{fig_levelset}
	\fi
%	\ifshowtmp
%	\else
%	\fi
\end{figure*}

To attain an intuition on how this theorem holds, we plot the sublevel sets of $\ratioa$ and the level sets of $D_{\rm KL}^{s_t}(\pip{\oldsub},\pip)$ for the continuous and discrete action space tasks respectively.
As \Cref{fig_levelset} illustrates, the KL divergences (solid lines) within the sublevel sets of likelihood ratio (grey area) could go to infinity.
%The main reason accounting for this result is that .

It can be concluded that there is an obvious gap between bounding the likelihood ratio and bounding the KL divergence. 
Approaches which manage to bound the likelihood ratio could not necessarily bound KL divergence theoretically.

\section{Method}\label{sec_method}

%In the previous section, we reveal that the likelihood ratio of PPO could not be bounded, and accordingly, the KL divergence could neither be bounded. 
%Moreover, as we stated, even the likelihood ratio is bounded, the KL divergence is not necessarily bounded. 
%We now detail how to address these two problems in the following sections.
In the previous section, we have shown that PPO could neither strictly restrict the likelihood ratio nor enforce a trust-region constraint.
We address these problems
%modifying the clipping function $\fnbound{}$.
in the scheme of PPO with a general form for sample $(s,a)$,
\begin{equation}\label{eq_L_general}
\begin{aligned}
 \fnLexp{}_{s,a}(\pi ) 
= 
	\min \left( \ratio[s,a][\pi] \adv[s,a],
		{\fnbound{}}\left( {\ratio[s,a][\pi],\cdot} \right)\adv[s,a] \right) 
\end{aligned}
\end{equation}
where ${\fnbound{}}$ is a clipping function which attempts to restrict the policy $\pi$, ``$\cdot$'' in ${\fnbound{}}$ means any hyperparameters of it.
For example, in PPO, ${\fnbound{}}$ is a \ratiobased/ clipping function $\fnbound{\clip}(\ratio[s,a][\pi],\epsilon)$ (see \cref{eq_clip}). 
We modify this function to promote the ability in bounding the likelihood ratio and the KL divergence.
We now detail how to achieve this goal in the following sections. 
For simplicity, we will use functions of $\theta$ and use subscript $t$ to denote the function for sample $(s_t,a_t)$, 
e.g., $\ratio[t][\theta] \triangleq \ratio[s_t,a_t][\pi_\theta] $ and $\fnLexp{}_{t}(\theta) \triangleq \fnLexp{}_{s_t,a_t}(\pi_\theta) $.

%We address the former problem by a \rollback/ approach and the latter by replacing the judgement on ratio with that on KL divergence.
%We now detail these two methods and a combination of these two methods.

\subsection{ \pmethodfallbackfull/ (\pmethodfallback/) }\label{sec_ppo_rb}

As discussed in Question \ref{que_ratio}, PPO can not strictly confine the likelihood ratio within the clipping range:
% v1
%although the ratios that have exceeded the clipping range would produce gradient with zero which would not push them forward by themselves,
%they are possible to be pushed continuously by the gradient deriving from other likelihood ratios.
% v2
%PPO just remove the incentive from $\ratioa \adv[t]$ for moving $\ratioa[t]$ outside the clipping range.
%However, the incentive from other $\ratioa[t'] \adv[t][t']$ still possibly push $\ratioa[t]$ outward continuously.
% v3
\emph{the likelihood ratio $\ratio[t][\theta]$ could be driving to go farther beyond the clipping range $(1-\epsilon, 1+\epsilon)$, as the incentive for moving $\ratio[t][\theta]$ could derive from the overall objective $\fnL{CLIP}(\theta)$, which can not be removed by the clipping function.}
% deriving from the overall objective $\fnL{\clip}(\theta)$.
We address this issue by introducing a \emph{\rollback/ operation} once the likelihood ratio exceeds, which is defined as
%The \rollback/ mechanism generates a negative incentive if $r_t$ lies outside of the clipping range.
%Formally, we replace the clipping function $\fnbound{\clip}$ with a \rollback/ function $\fnbound{\FB}$ in \eq \eqref{eq_L_clip}. 
%The $\fnbound{\FB}$ is defined as
\begin{equation}
\begin{aligned}
	& \fnbound{\FB}( {\ratio[s,a][\pi],\epsilon, \rbweight} )  
%	\\ % twocolumn
	=&
	\begin{cases}
%		\parbox[t]{.2\textwidth}{
%			$ {-\rbweight \ratio[s,a][\pi]}$  ${\quad\quad\quad+(1+\rbweight)(1-\epsilon)}$
%		} 
		{-\rbweight \ratio[s,a][\pi]} {+(1+\rbweight)(1-\epsilon)}
		& \ratio[s,a][\pi] \leq 1-\epsilon \\
%		\parbox[t]{.2\textwidth}{
%			$-\rbweight \ratio[s,a][\pi]$  ${\quad\quad\quad+(1+\rbweight)(1+\epsilon)}$
%		}
		{-\rbweight \ratio[s,a][\pi]} {+(1+\rbweight)(1+\epsilon)}
		& \ratio[s,a][\pi] \geq 1+\epsilon \\
		\ratio[s,a][\pi] & \text{otherwise}
	\end{cases}
\end{aligned}
\end{equation}
where $\rbweight>0$ is a hyperparameter to decide the force of the \rollback/. 
\Cref{fig_fallback} plots $\fnLexp{\FB}_{s,a}(\pi)$ and $\fnLexp{\clip}_{s,a}(\pi)$ as functions of the likelihood ratio $\ratio[s,a][\pi]$. As the figure depicted, when $\ratio[s,a][\pi]$ is over the clipping range, the slope of $\fnLexp{\FB}_{s,a}(\pi)$ is reversed, while that of $\fnLexp{\clip}_{s,a}(\pi)$ is zero.

We now show how the \rollback/ operation can improve the ability in confining the likelihood ratio.
Let $\fnLexp{\FB}_t(\pip )$ denote the corresponding objective function for sample $(s_t,a_t)$; and let $\fnL{\FB}(\pip )$ denote the overall empirical objective.
The \rollback/ function $\fnbound{\FB}\left( {\ratio[t][\theta],\epsilon,\rbweight} \right)$ generates a negative incentive when $r_t(\theta)$ is outside of the clipping range. 
Thus it could somewhat neutralize the incentive deriving from the overall objective $\fnL{\FB}(\theta)$.
%It has an effect of preventing $r_t(\theta)$ to be pushed far from the bound.
%\begin{equation}\label{eq_L_fallback}
%\begin{aligned}
% \fnLexp{\FB}_t(\pip ) 
%= 
%	\min \left( \ratioa \adv[t],
%		{\fnbound{\FB}}\left( {\ratioa,\epsilon} \right)\adv[t] \right) 
%\end{aligned}
%\end{equation}
%which is similar to \eq \eqref{eq_L_clip} except that $\fnbound{\clip}$ is replaced with $\fnbound{\FB}$.
The \rollback/ operation could more forcefully prevent the likelihood ratio from being pushed away compared to the original clipping function. Formally, we have the following theorem.

\begin{figure}[!t]
	\if@twocolumn
	 		\def\lenjdiucsddeechdhjs{0.5}
	 \else
	 	 	\def\lenjdiucsddeechdhjs{0.4}	 	
	 \fi
	\centering
	\iffastcompile
		\caption{
		}\label{fig_fallback}
	\else
		\centerline{
		\begin{subfigure}{\lenjdiucsddeechdhjs\linewidth}
		{\includegraphics[width=1.0\linewidth]{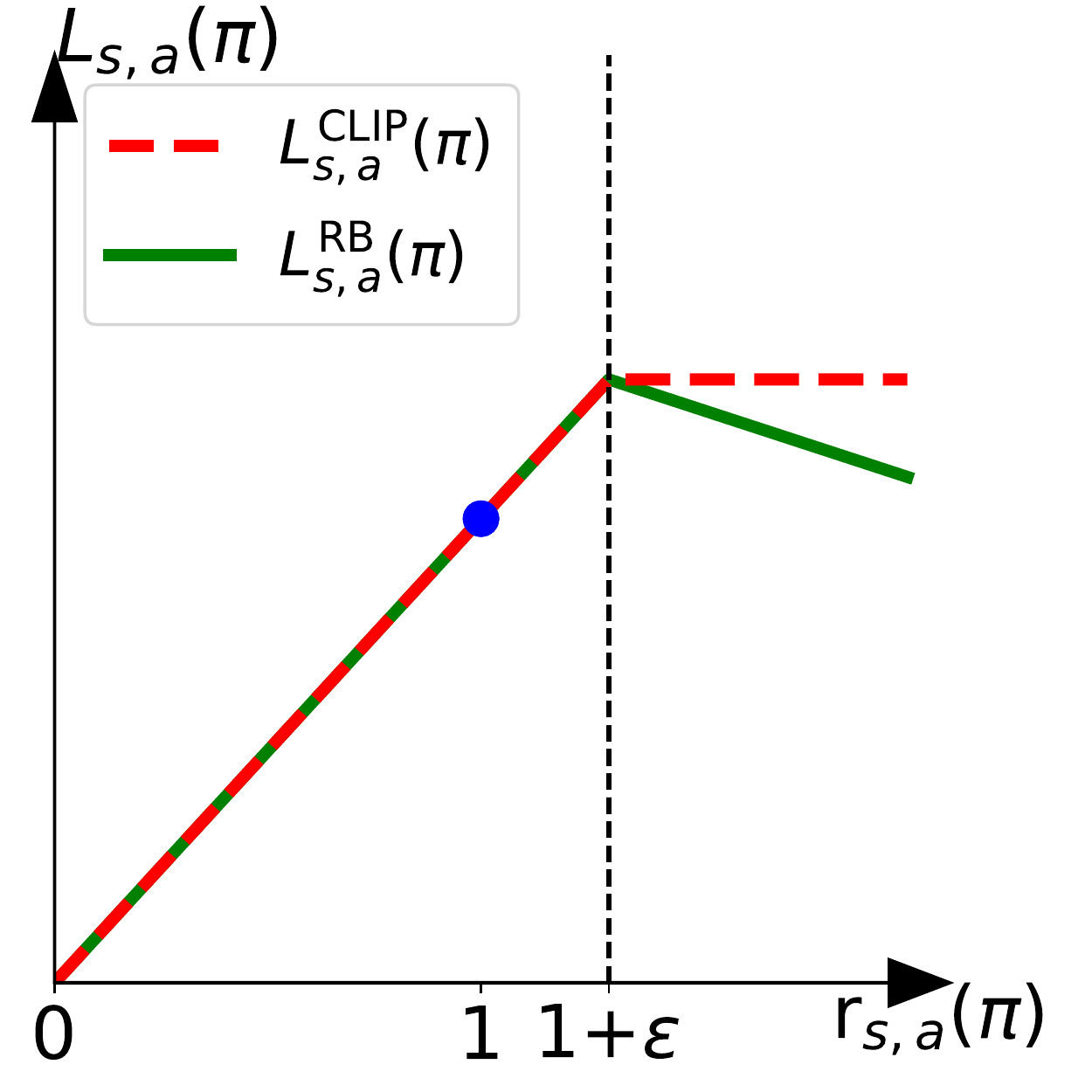} \label{fig_fallback_positive}}
		\setlength{\abovecaptionskip}{0mm}
		\caption{$A_{s,a}>0$}
		\end{subfigure}
		\begin{subfigure}{\lenjdiucsddeechdhjs\linewidth}
		{\includegraphics[width=1.0\linewidth]{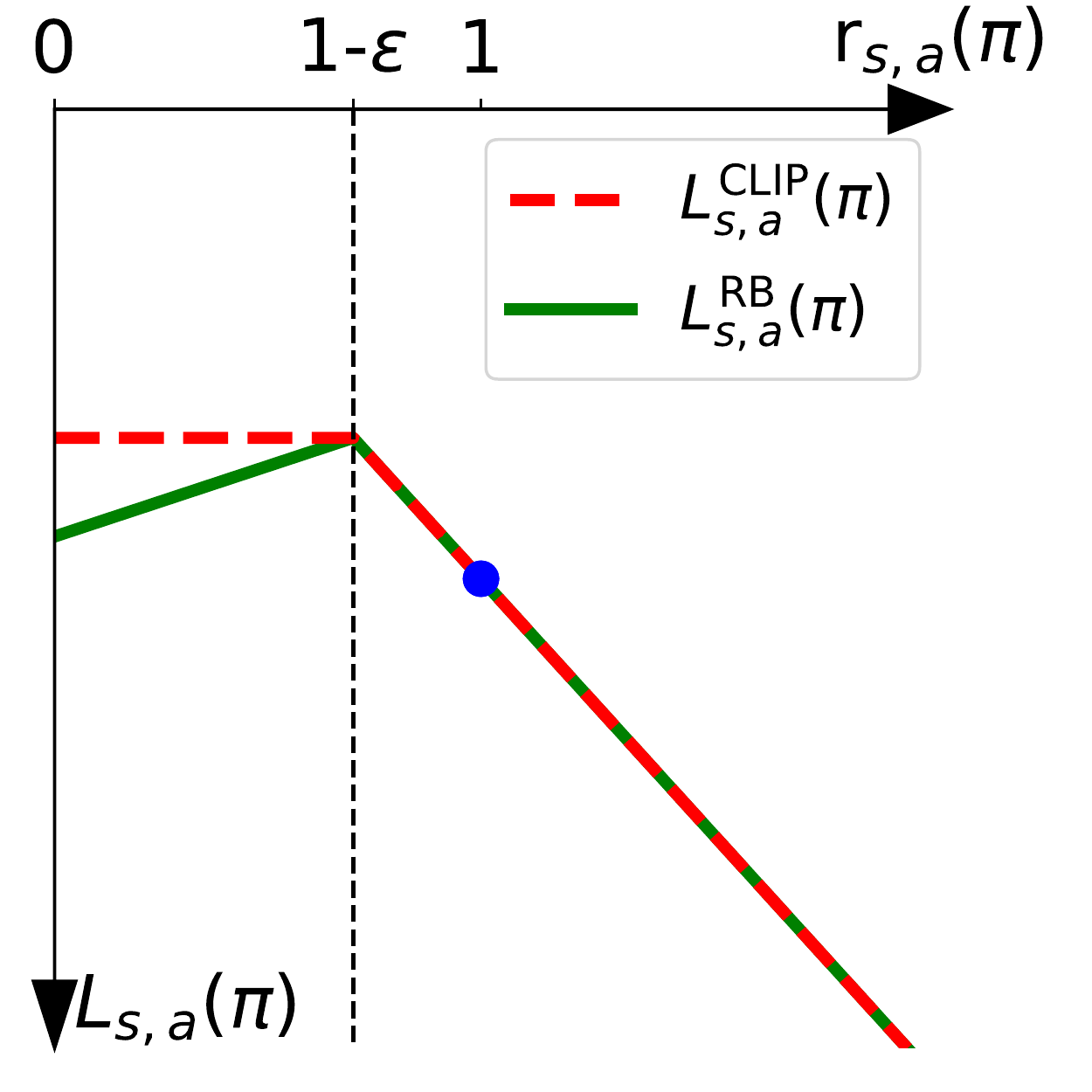} \label{fig_fallback_negative}}
		\setlength{\abovecaptionskip}{0mm}
		\caption{$A_{s,a}<0$}
		\end{subfigure}
		}
		\caption{
			Plots showing $\fnLexp{\FB}_{s,a}(\pi)$ and $\fnLexp{\clip}_{s,a}(\pi)$ as functions of the likelihood ratio $\ratio[s,a][\pi]$, for positive advantages (left) and negative advantages (right). 
			The blue circle on each plot shows the starting point for the optimization, i.e., $\ratio[s,a][\pi]=1$.
			When $\ratio[s,a][\pi]$ crosses the clipping range, the slope of $\fnLexp{\FB}_{s,a}$ is reversed, while that of $\fnLexp{\clip}_{s,a}$ is flattened.
		}\label{fig_fallback}
	\fi
%	\ifshowtmp
%	\else
%	\fi
\end{figure}

\begin{theorem}
Given parameter $\theta_0$, %Assume $\adv[t]>0$ and $r_t(\theta_0) \geq 1+\epsilon $ with the parameter $\theta_0$.
let $\theta_1^{\clip}=\theta_0+\beta \nabla\fnL{\clip}(\theta_0 )$, 
$\theta_1^{\FB}=\theta_0+\beta \nabla\fnL{\FB}(\theta_0 )$.
The set of indexes of the samples which satisfy the clipping condition is denoted as 
$\Omega=\{ t | 1\leq t \leq T, |\ratioa[t][0]-1|\geq \epsilon \text{ and } \ratioa[t][0] \adv[t] \geq \ratioa[t][\oldsub] \adv[t] \}$. 
%There exists some $\bar \beta>0$ such that for any $\beta \in (0, \bar \beta)$, we have 
%\begin{equation}
% \sum_{t' \in \Omega}{ r_{t'}({\theta_1^{\FB}}) \adv[t] } <  \sum_{t' \in \Omega} {r_{t'}({\theta_1^{\clip}}) \adv[t]}
% \label{eq_RB_better_CLIP_all}
%\end{equation}
%Particularly, 
If $t \in \Omega$ and $r_t(\theta_0)$ satisfies $\sum_{t' \in \Omega} { \langle \nabla r_{t}({ \theta_0 }), \nabla r_{t'}({ \theta_0 }) \rangle \adv[t] \adv[t']  }>0$, then there exists some $\bar \beta>0$ such that for any $\beta \in (0, \bar \beta)$, we have 
\begin{equation}
  \left|r_t({\theta_1^{\FB}}) -1 \right| < \left|r_t({\theta_1^{\clip}}) -1 \right|. \label{eq_RB_better_CLIP}
\end{equation}
%\begin{equation}
%\begin{cases}
%{ r_{t}({\theta_1^{\FB}})  }<  {r_{t}({\theta_1^{\clip}}) } & \text{ if} \quad r_{t}(\theta_0) > 1+\epsilon
%\\
%{ r_{t}({\theta_1^{\FB}})  } >  {r_{t}({\theta_1^{\clip}})  } & \text{ if} \quad r_{t}(\theta_0) < 1-\epsilon
%\end{cases}
%\end{equation}
\end{theorem}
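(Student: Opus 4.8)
The plan is to compare the first-order motion of $r_t$ under the two gradient-ascent steps and exploit the fact that, in the clipping region, the RB objective's gradient contribution from sample $t$ differs from the CLIP one by exactly the extra term $-(1+\alpha)\nabla r_t(\theta_0)A_t$. First I would write out $\nabla \fnL{\clip}(\theta_0)$ and $\nabla \fnL{\FB}(\theta_0)$ as sums over all samples $t'$. For $t' \notin \Omega$ the per-sample objective is just $r_{t'}\adv[t']$ in both cases, so those contributions coincide. For $t' \in \Omega$, the CLIP per-sample objective is locally constant (zero gradient), whereas the RB per-sample objective is $\bigl(-\alpha r_{t'} + (1+\alpha)(1\pm\epsilon)\bigr)\adv[t']$, whose gradient is $-\alpha \nabla r_{t'}(\theta_0)\adv[t']$. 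Hence
\begin{equation}
\nabla \fnL{\FB}(\theta_0) = \nabla \fnL{\clip}(\theta_0) - \alpha \sum_{t' \in \Omega} \nabla r_{t'}(\theta_0)\adv[t'].
\end{equation}

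Next I would introduce the scalar functions $\phi^{\clip}(\beta) = r_t(\theta_0 + \beta \nabla\fnL{\clip}(\theta_0))$ and $\phi^{\FB}(\beta) = r_t(\theta_0 + \beta \nabla\fnL{\FB}(\theta_0))$, both with $\phi^{\clip}(0) = \phi^{\FB}(0) = r_t(\theta_0)$. By the chain rule, $(\phi^{\clip})'(0) = \langle \nabla\fnL{\clip}(\theta_0), \nabla r_t(\theta_0)\rangle$ and, using the identity above, $(\phi^{\FB})'(0) = (\phi^{\clip})'(0) - \alpha \sum_{t' \in \Omega}\langle \nabla r_{t'}(\theta_0), \nabla r_t(\theta_0)\rangle \adv[t']$. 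The hypothesis $\sum_{t' \in \Omega}\langle \nabla r_t(\theta_0), \nabla r_{t'}(\theta_0)\rangle \adv[t]\adv[t'] > 0$ says precisely that $\adv[t]\sum_{t' \in \Omega}\langle \nabla r_{t'}(\theta_0),\nabla r_t(\theta_0)\rangle\adv[t'] > 0$, i.e. the correction term $-\alpha\sum_{t'\in\Omega}\langle\nabla r_{t'},\nabla r_t\rangle\adv[t']$ has sign opposite to $\adv[t]$. I would then split into the two cases of the clipping condition. If $r_t(\theta_0) \geq 1+\epsilon$ and $\adv[t] > 0$: the correction is negative, so $(\phi^{\FB})'(0) < (\phi^{\clip})'(0)$; for small enough $\beta$, a Taylor/mean-value argument gives $r_t(\theta_1^{\FB}) < r_t(\theta_1^{\clip})$, and since both stay near $r_t(\theta_0) \ge 1+\epsilon > 1$ for $\beta$ small (so both differences $r_t(\theta_1^{\cdot})-1$ are positive), $|r_t(\theta_1^{\FB})-1| < |r_t(\theta_1^{\clip})-1|$. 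If $r_t(\theta_0) \leq 1-\epsilon$ and $\adv[t] < 0$: the correction is positive, so $(\phi^{\FB})'(0) > (\phi^{\clip})'(0)$, giving $r_t(\theta_1^{\FB}) > r_t(\theta_1^{\clip})$ for small $\beta$; both stay below $1$, so again $|r_t(\theta_1^{\FB})-1| < |r_t(\theta_1^{\clip})-1|$.

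To make the ``for small enough $\beta$'' step rigorous I would note that $\psi(\beta) \triangleq \phi^{\clip}(\beta) - \phi^{\FB}(\beta)$ is $C^1$ near $0$ with $\psi(0) = 0$ and $\psi'(0) = \alpha\,\adv[t]^{-1}\cdot\bigl(\text{positive quantity}\bigr)$ having the sign of $\adv[t]$; hence there is $\bar\beta_1>0$ with $\psi(\beta)$ having the sign of $\adv[t]$ (equivalently $r_t(\theta_1^{\clip}) - r_t(\theta_1^{\FB})$ having the sign of $\adv[t]$) for all $\beta\in(0,\bar\beta_1)$. Separately, by continuity of $\beta \mapsto r_t(\theta_0 + \beta v)$ there is $\bar\beta_2 > 0$ such that for $\beta \in (0,\bar\beta_2)$ both $r_t(\theta_1^{\clip})$ and $r_t(\theta_1^{\FB})$ lie on the same side of $1$ as $r_t(\theta_0)$ (strictly outside $[1-\epsilon,1+\epsilon]$). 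Taking $\bar\beta = \min(\bar\beta_1,\bar\beta_2)$ and combining the sign information yields \eqref{eq_RB_better_CLIP} in both cases.

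The main obstacle I anticipate is bookkeeping the signs carefully: the hypothesis is phrased with the symmetric product $\adv[t]\adv[t']$, and one must translate it into a statement about the sign of the scalar correction to $\phi'(0)$ relative to $\adv[t]$, then track how "$r_t$ moves more/less" interacts with whether $r_t(\theta_0)$ is above or below $1$ so that the conclusion comes out as $|r_t-1|$ decreasing in both branches. There is also a mild subtlety that the theorem does not assume $r_t$ is differentiable only at $\theta_0$ but uses ascent on the \emph{non-smooth} objectives $\fnL{\clip}, \fnL{\FB}$; I would handle this by working with the (one-sided) directional derivative, or simply noting that for $\beta$ small enough no sample crosses its own clipping boundary so the objectives are smooth along the relevant segment — the set $\Omega$ is locally constant near $\theta_0$ for the samples strictly satisfying $|\ratioa[t][0]-1|>\epsilon$, and boundary cases $|\ratioa[t][0]-1|=\epsilon$ can be absorbed since they contribute zero to leading order.
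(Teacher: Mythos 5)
Your proposal is correct and follows essentially the same route as the paper: both compare the first-order motion of $r_t$ along the two ascent directions via the chain rule, using the identity $\nabla\fnL{\FB}(\theta_0)-\nabla\fnL{\clip}(\theta_0)=-\alpha\sum_{t'\in\Omega}\nabla r_{t'}(\theta_0)A_{t'}$ and the sign hypothesis to conclude. Your version is in fact somewhat more careful than the paper's, which leaves implicit the step that both perturbed ratios remain on the same side of $1$ for small $\beta$ and does not discuss the local smoothness of the clipped objectives.
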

\begin{proof}
Consider $\phi(\beta)=r_t( \theta_0+\beta \nabla\fnL{\FB}(\theta_0 ) )-r_t( \theta_0+\beta \nabla\fnL{\clip}(\theta_0 ) )$, 

By chain rule, we have 
\begin{equation}
	\begin{aligned}
		\phi'(0)
				& = \nabla r_t^\top(\theta_0)( \nabla\fnL{\FB}(\theta_0 ) - \nabla\fnL{\clip}(\theta_0 ) ) \\
				& = - \alpha \sum_{t' \in \Omega} { \langle \nabla r_{t}({ \theta_0 }), \nabla r_{t'}({ \theta_0 }) \rangle  \adv[t']  }  \\
	\end{aligned}
\end{equation}

For the case where $r_t(\theta_0) \geq 1+\epsilon$ and $ \adv[t]>0 $, we have $\phi'(0)<0$.

Hence, there exists $\bar \beta >0 $ such that for any $\beta \in (0, \bar \beta)$
\[
	\phi(\beta) < \phi(0)
\]
Thus, we have 
\[
	r_t(\theta_1^{\FB}) < r_t(\theta_1^{\FB})
\]
We obtain
\[
  \left|r_t({\theta_1^{\FB}}) -1 \right| < \left|r_t({\theta_1^{\clip}}) -1 \right|.
\]
Similarly, for the case where $r_t(\theta_0) \leq 1-\epsilon$ and $ \adv[t]<0 $, we also have $\left|r_t({\theta_1^{\FB}}) -1 \right| < \left|r_t({\theta_1^{\clip}}) -1 \right|$.
\end{proof}

This theorem implies that the rollback function can improve its ability in preventing the out-of-the-range ratios from going farther beyond the range.
%While \Eq \eqref{eq_RB_better_CLIP} implies that under a certain condition, the new likelihood ratio is closer to the clipping range centre.
{Ideally, if $\rbweight$ is sufficiently large, then the new policy are guaranteed to be confined within the clipping range.}

\begin{theorem}
%Assume that we have $s_i \neq s_j$ if $i \neq j$ for the sampled data $\{(s_t, a_t)\}$.
Let $\pi\newp=\mathop{argmax}_{\pi} \fnLexp{ \FB }(\pi) $.
If $\rbweight \rightarrow +\infty $, then for any $(s,a)$ we have 
$$ \left|r_{s,a}(\pi{\newp}) -1\right| \leq \epsilon. $$
%$$ 1 -\epsilon \leq r_t(\pi{\newp}) \leq 1+\epsilon. $$
% note: 这里的策略写成了一般形式。 这个定理在一般情况下是不能满足的,主要原因是假设空间不够大策略表示能力比较有限，因为有可能某个ratio底下的advantage特别大而其它的advanatage比较小，这个时候只要把它做得很大。
% note: 期刊版最好能把前面的r_t(\pi)全部都统一。
\end{theorem}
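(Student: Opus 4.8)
The plan is to analyse, for each fixed state-action pair $(s,a)$, the one-dimensional profile of $\fnLexp{\FB}_{s,a}(\pi)$ as a function of the scalar ratio $r\triangleq\ratio[s,a][\pi]$, and then to pit the maximizer $\pi\newp$ against the fixed competitor $\pi\oldsub$ as $\rbweight\to\infty$. First I would unfold the $\min$ in \eqref{eq_L_general} together with the definition of $\fnbound{\FB}$ to get the explicit piecewise-linear form: when $\adv[s,a]>0$, $\fnLexp{\FB}_{s,a}$ equals $r\adv[s,a]$ for $r\le 1+\epsilon$ and $\bigl[(1+\epsilon)-\rbweight(r-1-\epsilon)\bigr]\adv[s,a]$ for $r\ge 1+\epsilon$; symmetrically, with breakpoint at $1-\epsilon$, when $\adv[s,a]<0$; and $\fnLexp{\FB}_{s,a}\equiv 0$ when $\adv[s,a]=0$. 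From this I extract two facts. \textbf{(i)} $\fnLexp{\FB}_{s,a}(\pi)\le(1+\epsilon)|\adv[s,a]|$ for every $\pi$ and every $\rbweight>0$, since the ``tent'' peaks at $r=1\pm\epsilon$. \textbf{(ii)} On the side of the range that triggers clipping --- $r>1+\epsilon$ with $\adv[s,a]>0$, or $r<1-\epsilon$ with $\adv[s,a]<0$ --- if $r$ is a fixed distance $\delta>0$ outside the range, then $\fnLexp{\FB}_{s,a}(\pi)\le(1+\epsilon)|\adv[s,a]|-\rbweight\,\delta\,|\adv[s,a]|$, which tends to $-\infty$ as $\rbweight\to\infty$.

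Second, I would use $\pi\oldsub$ as an $\rbweight$-independent competitor: since $\ratio[s,a][\pi\oldsub]=1$ lies strictly inside the range for every $(s,a)$, we have $\fnLexp{\FB}(\pi\oldsub)=\mathbb{E}_{s,a}[\adv[s,a]]$, a fixed number $V_0$ independent of $\rbweight$, so by optimality $\fnLexp{\FB}(\pi\newp)\ge V_0$ for all $\rbweight$. Suppose, towards a contradiction, that along some sequence $\rbweight[n]\to\infty$ the corresponding maximizers keep a pair $(s_0,a_0)$ with $\adv[s_0,a_0]\ne 0$ strictly on the clipping side, at distance bounded below by a fixed $\delta>0$. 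Splitting $\fnLexp{\FB}(\pi\newp)$ into the $(s_0,a_0)$ contribution --- bounded above, by fact \textbf{(ii)}, by an affine function of $-\rbweight[n]$ --- plus the remaining terms, each bounded by fact \textbf{(i)}, the total tends to $-\infty$, contradicting $\fnLexp{\FB}(\pi\newp)\ge V_0$ once $n$ is large. Hence no pair with nonzero advantage can sit a fixed distance into the clipping side for $\rbweight$ large, and letting $\delta\downarrow 0$ yields, in the limit $\rbweight\to\infty$, the clipping-side half of the claim: $\ratio[s,a][\pi\newp]\le 1+\epsilon$ whenever $\adv[s,a]>0$ and $\ratio[s,a][\pi\newp]\ge 1-\epsilon$ whenever $\adv[s,a]<0$.

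The part I expect to require real care is the remaining half of the bound, together with making the limit airtight. There is a minor topological point --- excluding ratios that merely converge to the breakpoints $1\pm\epsilon$ from the forbidden side rather than lying in the closed range --- which is routine continuity/compactness bookkeeping. The substantive obstacle is that the rollback exerts \emph{no} direct pressure on the opposite side of the range: a positive-advantage ratio driven \emph{below} $1-\epsilon$, or a negative-advantage ratio driven \emph{above} $1+\epsilon$, is untouched by $\rbweight$, because there the $\min$ selects the unclipped term $r\adv[s,a]$, so the blow-up argument says nothing. In fact $\fnLexp{\FB}$ restricted to policies having no coordinate on the clipping side does not depend on $\rbweight$ at all, so for \emph{every} $\rbweight>0$ some maximizer already satisfies the clipping-side half; the genuinely new content is confinement on the \emph{other} side. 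Closing this needs an extra structural fact about maximizers of $\fnLexp{\FB}$ --- roughly, that if $\ratio[s_0,a_0][\pi\newp]$ sat on the wrong far side, then transferring probability on that coordinate of $\pi\newp(\cdot|s_0)$ back toward $\pi\oldsub$ would weakly raise that coordinate's term while the compensating change forced on the other actions at $s_0$ is itself weakly beneficial unless those actions are already on the clipping side, contradicting optimality --- and carrying this through, in particular handling the state-level probability-simplex constraint, is the crux.
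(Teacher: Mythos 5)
Your treatment of the genuinely clipped side ($A_{s,a}>0$ with $r_{s,a}(\pi)>1+\epsilon$, or $A_{s,a}<0$ with $r_{s,a}(\pi)<1-\epsilon$) is sound and is essentially the paper's argument: compare the maximizer against the $\rbweight$-independent competitor $\pi_{\rm old}$ and let the rollback term blow up as $\rbweight\to+\infty$. But the proposal stops exactly where the proof has to do its real work. On the opposite side ($A_{s,a}>0$ with $r_{s,a}(\pi)<1-\epsilon$, and symmetrically), the $\min$ selects the unclipped term $r_{s,a}(\pi)A_{s,a}$, so $\rbweight$ plays no role; you correctly identify this as the crux, sketch a mass-transfer repair, and then leave it uncarried-out. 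As written, the proposal therefore does not prove the stated two-sided bound.

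For comparison, the paper closes that case by the per-pair inequality $L^{\rm RB}_{s',a'}(\pi') < (1-\epsilon)A_{s',a'} < A_{s',a'} = L^{\rm RB}_{s',a'}(\pi_{\rm old})$ and then swaps $\pi'(\cdot|s)$ for $\pi_{\rm old}(\cdot|s)$ on every offending state to build a better policy $\pi''$. Note that this swap replaces the entire conditional at $s$, so what is actually needed is the state-level comparison $\mathbb{E}_{a\sim\pi_{\rm old}(\cdot|s)}\left[L^{\rm RB}_{s,a}(\pi')\right] < \mathbb{E}_{a}\left[A_{s,a}\right]$, not merely the single offending action's term; your instinct that the probability-simplex coupling is the hard part is exactly right. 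Moreover, the repair you gesture at (push the offending coordinate back toward $\pi_{\rm old}$ and argue the compensating changes are weakly beneficial) fails on a concrete configuration: if a state has one action with a small positive advantage and several actions with large positive advantages, the per-state optimum caps each large-advantage action at ratio $1+\epsilon$ and dumps all of the slack onto the small-advantage action, whose ratio can then sit strictly below $1-\epsilon$ without any rollback being triggered; transferring mass back strictly decreases the objective. So the missing half is not routine bookkeeping --- it is the substantive content of the theorem, and the argument you propose for it does not go through.
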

\begin{proof}
Similar to \cref{eq_L_PPO_case}, $\fnLexp{\FB}_{s,a}(\pi)$ can be rewritten as
\begin{subequations}
%\small
\if@twocolumn
	\def\lenjdiudddcchdhjs{0.3}
\else
 	\def\lenjdiudddcchdhjs{0.4}	 	
\fi

%${\quad\fnLexp{\FB}_{s,a}(\pi)}=$
\begin{align}[left={\fnLexp{\FB}_{s,a}(\pi)=\empheqlbrace}]
	  	&{\quad \left({-\rbweight \ratio[s,a][\pi]} {+(1+\rbweight)(1-\epsilon)}  \right)}\adv[s,a] &  %
	  	  	\parbox[l]{\lenjdiudddcchdhjs\linewidth}{
	  				  	$\ratio[s,a][\pi] \leq 1-\epsilon$ \text{and} $ \adv[s,a]<0 $
	  		}  \notag
	  	\\
	  	& \quad {\left({-\rbweight \ratio[s,a][\pi]} {+(1+\rbweight)(1+\epsilon)}\right)}\adv[s,a] &   
	  	  	\parbox[l]{\lenjdiudddcchdhjs\linewidth}{
	  				  	$\ratio[s,a][\pi] \geq 1+\epsilon$ \text{and} $ \adv[s,a]>0 $
	  		} \notag
	  	\\
	  	& \quad \ratio[s,a][\pi]\adv[s,a] &  \text{otherwise \quad\quad\quad\quad} \notag
\end{align}
\end{subequations}
We prove the converse-positive of this theorem.
Assume that given an optimal policy $\pi'$, there exists $(s',a')$ which satisfies $ \left|r_{s',a'}(\pi{'}) -1\right| > \epsilon$. %$r_{t'}(\pi') < 1- \epsilon$ or $r_{t'}(\pi') > 1+ \epsilon$. 
We consider the following cases:

\begin{itemize}
\item If $A_{s',a'}>0$ and $r_{s',a'}(\pi') > 1+ \epsilon$, then $\fnLexp{ \FB }_{s',a'}(\pi') = -\infty < \fnLexp{ \FB }_{s',a'}(\pi\oldsub)=A_{s',a'}$. 
\item If $A_{s',a'}>0$ and $r_{s',a'}(\pi') < 1- \epsilon$, then $\fnLexp{ \FB }_{s',a'}(\pi') < (1-\epsilon) A_{s',a'} < \fnLexp{ \FB }_{s',a'}(\pi\oldsub)=A_{s',a'}$. 
\end{itemize}

Similarly, if $A_{s',a'}<0$, we also have $\fnLexp{ \FB }_{s',a'}(\pi') < \fnLexp{ \FB }_{s',a'}(\pi\oldsub)$.

Finally, we can construct a policy 
$
{\pi''}(\cdot|s)=
\begin{cases}
\pi\oldsub(\cdot|s) & \text{if }\exists a \text{ such that } \left|r_{s',a}(\pi') -1\right| > \epsilon \\
\pi'(\cdot|s) & \text{otherwise}
\end{cases}
$, for which we have $ \fnLexp{ \FB }(\pi') < \fnLexp{ \FB }(\pi'') $.
This means that $\pi'$ is not an optimal solution of $\fnLexp{RB}$.
\end{proof}

\subsection{\Pmethodklfull/ (\pmethodkl/)}\label{sec_method_kl}
As discussed in Question \ref{que_KL}, there is a gap between the \ratiobased/ constraint and the trust region-based one: bounding the likelihood ratio is not sufficient to bound the KL divergence.
However, bounding the KL divergence is what we primarily concern about, since it is a theoretical indicator on the performance guarantee (see \Cref{thm_lowerbound}).
Therefore, new mechanism incorporating the KL divergence should be taken into account. 

The original clipping function of PPO employs the likelihood ratio as the \ttt{element} of the trigger condition for clipping.
Inspired by this thinking, we substitute the \ratiobased/ triggering condition with a trust region-based one.
Formally, the likelihood ratio is clipped when the policy $\pi$ is out of the trust region, 
\begin{equation}\label{eq_clipping_KL}
\begin{aligned}
	\fnbound{\KL}( {\ratio[s,a][\pi],\delta} ) 
	=
	\begin{cases}
		\ratio[s,a][\pi\oldsub] & D_{\rm KL}^{s}(\pi{\oldsub}, \pi) \geq \delta \\
		\ratio[s,a][\pi] & \text{otherwise}
	\end{cases}
\end{aligned}
\end{equation}
where $\delta$ is the parameter, $\ratio[s,a][\pi\oldsub]=1$ is a constant.
The incentive for updating policy is removed when the parametrized policy $\pi_\theta$ is out of the trust region, i.e., $D_{\rm KL}^{s_t}(\pip{\oldsub}, \pip) \geq \delta$.
Although the clipped value $\ratio[t][\theta\oldsub]$ may make the surrogate objective discontinuous, this discontinuity does not affect the optimization of the parameter $\theta$ at all, since the value of the constant does not affect the gradient.

\new{
In general, \pmethodkl/ can combine both the strengths of PPO and TRPO: it is simple to implement (requiring the first-order optimization) while it is somewhat theoretically justified (by the trust region constraint). 
Like PPO, \pmethodkl/ uses the clipping technique to restrict the policy.
The difference is that they use different policy metrics: \pmethodkl/ uses the KL divergence while PPO employs the likelihood ratio.
On the other hand, \pmethodkl/ attempts to restrict the policy within the trust region as TRPO does. 
%Whereas \pmethodkl/ doesn't need to optimize $\theta$ through the KL divergence term $D_{\rm KL}^{s_t}(\pip{\oldsub}, \pip)$ as TRPO does. 
%The difference is that, 
What makes \pmethodkl/ distinctive is that the trust region-based constraint is used to decide whether to clip the likelihood ratio $\ratio[s,a][\pi]$ or not, without leading to optimizing the objective with a difficult constraint.
As a result, it allows using the first-order optimizer like Gradient Descent and significantly reduce the optimization complexity.
In other words, it can avoid the complex computation of higher-order optimization which are usually inaccurate (e.g. the second-order optimization of TRPO), resulting in a more stable process of optimization and leading to better solutions.
}

%\begin{new}
\subsection{{Combining \pmethodkl/ with \Rollback/ (\pmethodhybrid/)}}\label{sec_trulyPPO}
The \klbased/ clipping still possibly suffers from the unbounded KL divergence problem, since we do not enforce any negative incentive when the policy is out of the trust region.
Thus we integrate the \klbased/ clipping with the \rollback/ operation on KL divergence.
We do not use the \rollback/ operation on the likelihood ratio (like \pmethodfallback/) as our goal is to restrict the KL divergence.\footnote{In the preliminary version\citep{wang2019truly}, we heuristically combine \pmethodkl/ with the \rollback/ on the likelihood ratio,
\begin{subequations}
%$ {\quad\quad \fnLexp{\pmethodhybridf}_{s,a}(\pi )}=\ratio[s,a][\pi] A_{s,a}-$ %twocolumn
\begin{align}[left=	{{\fnLexp{TR-RB}_{s,a}(\pi )}=\empheqlbrace}]
	& - \rbweight \ratio[s,a][\pi] A_{s,a}
	 &  % Note: It cannot use -infty here, since we it means that it is impossible here.
  	\parbox[c]{.6\linewidth}{
				$D_{\rm KL}^{s}(\pi{\oldsub}, \pi) \geq \delta$
			  	\text{ and}
  				$ {\ratio[s,a][\pi] A_{s,a} \geq \ratio[s,a][\pi\oldsub] A_{s,a}} $ 
	} \notag
  	\\
  	& \quad \ratio[s,a][\pi] A_{s,a}  &  \text{\centering otherwise \hspace{1.5in} }  \notag
%\end{cases}
\end{align}
\end{subequations}
In this paper, we use the KL divergence-based one which owns a better theoretical property (by the monotonic improvement) and performs better in practice.
}
To make the formalism more intuitive, instead of using the clipping function $\fnbound{}$, we use the ``case'' form to formulate the objective (similar to \eqrefnop[eq_L_PPO_case_general]),

%\new{
%\afterpage{
\begin{subequations}
%$ {\quad\quad \fnLexp{\pmethodhybridf}_{s,a}(\pi )}=\ratio[s,a][\pi] A_{s,a}-$ %twocolumn
\begin{align}[left=	{{\fnLexp{\pmethodhybridf}_{s,a}(\pi )}=\ratio[s,a][\pi] A_{s,a}-\empheqlbrace}]
	& \rbweight D_{\rm KL}^{s}(\pi{\oldsub}, \pi) 
	 &  % Note: It cannot use -infty here, since we it means that it is impossible here.
  	\parbox[c]{.4\linewidth}{
				$D_{\rm KL}^{s}(\pi{\oldsub}, \pi) \geq \delta$
			  	\text{ and} \\
  				$ {\ratio[s,a][\pi] A_{s,a} \geq \ratio[s,a][\pi\oldsub] A_{s,a}} $ 
	}
	\label{eq_L_TR_PPO_RB_case_general_condition}
  	\\
  	& \quad \delta  &  \text{otherwise \footnotemark  \quad\quad\quad\quad}  \label{eq_L_TR_PPO_RB_case_general_condition_otherwise}
%\end{cases}
\end{align}
\end{subequations}
\footnotetext{{The constant term $\delta$ is designed to make the function continuous.}}
%}
%\begin{equation}
%\begin{aligned}
%	&\fnbound{\pmethodhybridf}( {\ratioa,\delta, \rbweight} )
%	\\
%	=&
%	\begin{cases}
%		-\rbweight D_{\rm KL}^{s_t}(\pip{\oldsub}, \pip) & D_{\rm KL}^{s_t}(\pip{\oldsub}, \pip) \geq \delta \\
%		\ratioa & \text{otherwise}
%	\end{cases}
%\end{aligned}
%\end{equation}
% -------------- end equation for kl_klrollback
%\begin{equation}
%\begin{aligned}
%	&\fnbound{\pmethodhybridf}( {\ratioa,\delta, \rbweight} ) 
%%	\\
%	=&
%	\begin{cases}
%		-\rbweight D_{\rm KL}^{s_t}(\pip{\oldsub}, \pip) & D_{\rm KL}^{s_t}(\pip{\oldsub}, \pip) > \delta \\
%		\ratioa & else
%	\end{cases}
%\end{aligned}
%\end{equation}
% -------------- end equation for kl_klrollback
As the equation implies, the objective generates a negative incentive on the KL divergence when $\pi_\theta$ is out of the trust region and the objective is improved.
The improvement condition ${\ratio[s,a][\pi] A_{s,a} \geq \ratio[s,a][\pi\oldsub] A_{s,a}}$ is the same as the one in \cref{eq_L_PPO_case_general} (as we have discussed in \Cref{sec_PPO}).
%\todo{explan the condition...}
%This approach combines both the mechanism of the previous two methods: bounding the KL divergence $D_{\rm KL}^{s_t}(\pip{\oldsub}, \pip)$ and bounding the likelihood ratio $\ratioa$. 
%The corresponding objective function is denoted as $\fnL{KL-\FB}_t$.
%\section{Connections with Prior Work}
% -------------- theorem for kl_klrollback
%\begin{theorem}
%For \pmethodhybrid/ with $\rbweight$ and $\delta$,  let $\pip^*=\mathop{argmax}_{\theta} L^{\pmethodhybridf}(\theta)$.
%If $\rbweight\rightarrow \infty $, then $D_{\rm KL}^{s_t}(\pip{\oldsub}, \pip^*)\leq \delta$ for all $s_t$.
%\end{theorem}
% -------------- end theorem for kl_klrollback
%Ideally, if $\rbweight$ is sufficiently large, then the new policy are guaranteed to be restricted within the trust region.

%Our \pmethodhybrid/ method could also be interpreted from the view of KL divergence penalty, which also used in PPO-penalty \citep{schulman2017proximal}.
%For \pmethodhybrid/ exists only when the policy violates the cosntraint and it does not take effect.
%\new{
The ``rollback'' operation on the KL divergence can also be regarded as a penalty (regularization) term, 
which is also proposed as a variant of PPO \citep{schulman2017proximal}, 
$$
\fnLexp{penalty}_{s,a}(\pi) = \ratio[s,a][\pi]\adv[s,a] - \alpha D_{\rm KL}^s (\pi\oldsub, \pi)
$$
%We refer to this method as PPO-penalty.
\new{The penalty-based methods are usually notorious by the difficulty of adjusting the trade-off coefficient.
And PPO-penalty addresses this issue by adaptively adjusting the rollback coefficient $\alpha$ to achieve a target value of the KL divergence.
However, the penalty-based PPO does not perform well as the clipping-based one, as it is difficult to find an effective coefficient-adjusting strategy across different tasks.
Our method introduces the ``clipping'' strategy to assist in restricting policy, i.e., the penalty is enforced only when the policy is out of the trust region. 
As for when the policy is inside the trust region, the objective function is not affected by the penalty term. 
Such a mechanism could relieve the difficulty on adjusting the trade-off coefficient, and it will not alter the theoretical property of monotonic improvement (as we will show below).
%While for PPO-penalty, the penalty exists throughout the optimization procedure. 
%Such mechanism could bring two benefits. First, 
In practice, we found \pmethodhybrid/ to be more robust to the coefficient and achieve better performance across different tasks.
%(see \Cref{sec_experiment} for more detail).
The clipping technique may be served as an effective method to enforce the restriction, which enjoys low optimization complexity and seems to be more robust.
}

%This theorem implies that similar to TRPO, the objective function of \pmethodhybrid/ theoretically provides an explicit lower bound to the ultimate policy performance measurement.
%Further, we embed the trust region-based constraint into the objective function, which allows us to use first-order optimization methods (e.g., Gradient Descent) to train it. 
%Such procedure does not involve second-order computation and is much easier to implement.
%This could benefit the optimization complexity, but \pmethodhybrid/ does not really .
%However, as we use Gradient Ascent method to train the policy, it does not .

To analyse the monotonic improvement property, 
we use the maximum KL divergence instead, i.e.,

\begin{subequations}\label{eq_TR_RB_maxKL}
\if@twocolumn
	\def\lenjdiucweetrrttchdhjs{0.47}
	\small
	$ {\quad\quad \fnLexp{\pmethodhybridf}_{s,a}(\pi )}=\ratio[s,a][\pi] A_{s,a}-$
\else
 	\def\lenjdiucweetrrttchdhjs{0.47}	 	
\fi

\begin{align}[left=	{{ \fnLexp{\pmethodhybridf}_{s,a}(\pi )}=\ratio[s,a][\pi] A_{s,a}-\empheqlbrace}]
%	\small % twocolumn
	& \rbweight \max_{s'\in {\cal S}} D_{\rm KL}^{s'}(\pi{\oldsub}, \pi) 
	 &  % Note: It cannot use -infty here, since we it means that it is impossible here.
  	\parbox[c]{.47\linewidth}{
  		\scriptsize
				$ {\max_{s'\in {\cal S}}D_{\rm KL}^{s'}(\pi{\oldsub}, \pi) \geq \delta}$
			  	\text{ and} \\
  				$\exists a', {\ratio[s,a'][\pi] A_{s,a'} \geq \ratio[s,a'][\pi\oldsub] A_{s,a'}} $ 
	} \label{eq_condition_maxklout}
  	\\
  	& \quad \delta  &  \text{otherwise  \quad\quad\quad\quad}  
%\end{cases}
\end{align}
\end{subequations}
%$D_{\rm KL}^{s}(\pi\oldsub, \pi)$ with $\max_{s\in {\cal S}}D_{\rm KL}^{s}(\pi\oldsub, \pi)$, 
in which the maximum KL divergence is also used in TRPO for theoretical analysis. 
Such objective function also possesses the theoretical property of the guaranteed monotonic improvement.
Let $\pinew[\pmethodhybridf]=\mathop{argmax}_{\pi} \fnLexp{ \pmethodhybridf }(\pi) $ and $\pinew[TRPO] = \mathop{argmax}_{\pi} \LM(\pi)$ denote the optimal solution of \pmethodhybrid/ and TRPO respectively.
We have the following theorem.
\begin{theorem}
If $\alpha=C\triangleq\mathop {\max }\limits_{s,a} \left| {{ \adv[s,a][] }} \right| {4\gamma }  /{{{(1 - \gamma )}^2}}$ and $\delta\leq \max_{s \in {\cal S}} D_{\rm KL}^{s}(\pi\oldsub, \pinew[TRPO]) $, then  $\eta(\pi\newsub^{\rm \pmethodhybridf}) \geq \eta(\pi\oldsub)$.
%If $\rbweight \rightarrow +\infty $, then we have 
%(i) $D_{\rm KL}^{s}(\pi{\oldsub}, \pi\newp^{\rm }) \leq \delta $ for all $s$.
%(ii) $\max_{\pi} \fnLexp{ \pmethodhybridf }(\pi)$ is equivalent to the optimization problem of TRPO, i.e., the problem \eqref{eq_TRPO}.
\end{theorem}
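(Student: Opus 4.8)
The plan is to transplant the monotonic-improvement guarantee of \Cref{thm_lowerbound} to $\pinew[\pmethodhybridf]$ by comparing the surrogate $\fnLexp{\pmethodhybridf}$ at the two optimal policies. Write $\mathbb{E}[\,\cdot\,]$ for the expectation over $s\sim\rho^{\pi\oldsub},a\sim\pi\oldsub(\cdot|s)$ and $\fnLexp{\pmethodhybridf}(\pi)=\mathbb{E}[\fnLexp{\pmethodhybridf}_{s,a}(\pi)]$, so that by \eqref{eq_policy_gradient} one has $\mathbb{E}[r_{s,a}(\pi)A_{s,a}]=\LPG(\pi)-\eta(\pi\oldsub)$. \Cref{thm_lowerbound} gives $\eta(\pi)\ge\LM(\pi)$ for all $\pi$ with equality at $\pi\oldsub$, and since $\pinew[TRPO]$ maximizes $\LM$ we get $\LM(\pinew[TRPO])\ge\LM(\pi\oldsub)=\eta(\pi\oldsub)$. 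The proof then reduces to the two identities
\begin{equation}
\fnLexp{\pmethodhybridf}(\pinew[TRPO])=\LM(\pinew[TRPO])-\eta(\pi\oldsub),\qquad \fnLexp{\pmethodhybridf}(\pinew[\pmethodhybridf])=\LM(\pinew[\pmethodhybridf])-\eta(\pi\oldsub),
\end{equation}
since they imply $\eta(\pinew[\pmethodhybridf])\ge\LM(\pinew[\pmethodhybridf])=\fnLexp{\pmethodhybridf}(\pinew[\pmethodhybridf])+\eta(\pi\oldsub)\ge\fnLexp{\pmethodhybridf}(\pinew[TRPO])+\eta(\pi\oldsub)=\LM(\pinew[TRPO])\ge\eta(\pi\oldsub)$, where the middle inequality is optimality of $\pinew[\pmethodhybridf]$ for $\fnLexp{\pmethodhybridf}$.

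Each identity amounts to showing that the policy in question lies in the penalty branch of \eqref{eq_condition_maxklout} for \emph{every} sample, i.e.\ $\max_{s'}D_{\rm KL}^{s'}(\pi\oldsub,\pi)\ge\delta$ and, at every visited state $s$, there is an $a'$ with $r_{s,a'}(\pi)A_{s,a'}\ge A_{s,a'}$. Indeed, the subtracted penalty $\alpha\max_{s'}D_{\rm KL}^{s'}(\pi\oldsub,\pi)$ is then sample-independent and, with $\alpha=C$, we obtain $\fnLexp{\pmethodhybridf}(\pi)=\mathbb{E}[r_{s,a}(\pi)A_{s,a}]-C\max_{s'}D_{\rm KL}^{s'}(\pi\oldsub,\pi)=\LM(\pi)-\eta(\pi\oldsub)$. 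For $\pinew[TRPO]$ the first requirement is precisely the hypothesis $\delta\le\max_{s}D_{\rm KL}^{s}(\pi\oldsub,\pinew[TRPO])$; for the second I would use a single-state perturbation: if no improving $a'$ existed at some visited state $s_0$, then $\sum_{a'}\pinew[TRPO](a'|s_0)A_{s_0,a'}<0=\sum_{a'}\pi\oldsub(a'|s_0)A_{s_0,a'}$, so resetting $\pinew[TRPO](\cdot|s_0)$ to $\pi\oldsub(\cdot|s_0)$ strictly increases $\LPG$ and does not increase any $D_{\rm KL}^{s}$ (hence not $\max_{s'}D_{\rm KL}^{s'}$), contradicting optimality of $\pinew[TRPO]$ for $\LM$.

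For $\pinew[\pmethodhybridf]$ I would first show it does not lie strictly inside the trust region. On $\{\pi:\max_sD_{\rm KL}^{s}(\pi\oldsub,\pi)<\delta\}$ every sample is in the ``otherwise'' branch, so $\fnLexp{\pmethodhybridf}$ equals $\LPG$ minus a constant there; moving any non-greedy policy slightly toward the greedy one at a state of positive visitation probability strictly increases $\LPG$ while keeping $\max_sD_{\rm KL}^{s}$ below $\delta$, so no such policy can be the maximizer (if $\pinew[\pmethodhybridf]=\pi\oldsub$ the theorem is trivial). Hence $\max_sD_{\rm KL}^{s}(\pi\oldsub,\pinew[\pmethodhybridf])\ge\delta$, and the same single-state perturbation argument --- applied now to the maximizer of $\fnLexp{\pmethodhybridf}$ and again using that resetting one state cannot raise $\max_{s'}D_{\rm KL}^{s'}$ --- yields an improving $a'$ at every visited state. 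So $\pinew[\pmethodhybridf]$ is in the penalty branch throughout (on the boundary the constant $\alpha\delta$ coincides with $C\max_{s'}D_{\rm KL}^{s'}$), giving the second identity and closing the chain.

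The main obstacle is the branch bookkeeping rather than the algebra. First, verifying the improving-$a'$ condition at both optima relies on the single-state perturbation never increasing $\max_{s'}D_{\rm KL}^{s'}$ and on existence of the maximizers, which must be handled with care (for instance, at states of zero visitation probability the optima can be taken equal to $\pi\oldsub$). Second, excluding a strictly-interior $\pinew[\pmethodhybridf]$ --- or, failing that, still deducing $\eta(\pinew[\pmethodhybridf])\ge\eta(\pi\oldsub)$ from $\fnLexp{\pmethodhybridf}(\pinew[\pmethodhybridf])=\LPG(\pinew[\pmethodhybridf])-\eta(\pi\oldsub)-\alpha\delta$ by cancelling $\alpha\delta$ against $C\max_sD_{\rm KL}^{s}<C\delta$ --- depends on the ``otherwise'' constant being exactly $\alpha\delta$ (the value that makes $\fnLexp{\pmethodhybridf}$ continuous at the boundary) together with the precise choice $\alpha=C$; this is where the exact shape of the objective, not merely ``a penalty beyond $\delta$'', is doing the work.
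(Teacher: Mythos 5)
Your proposal is correct in substance, but it is architecturally different from the paper's proof, and the comparison is instructive. The paper's route is to show that $\pinew[TRPO]$ is itself a \emph{global} maximizer of $\fnLexp{\pmethodhybridf}$: it first establishes the per-state improvement property $\mathbb{E}_{a}[r_{s,a}(\pinew[TRPO])A_{s,a}]\geq \mathbb{E}_{a}[A_{s,a}]$ by exactly your single-state reset-to-$\pi\oldsub$ perturbation, deduces that $\pinew[TRPO]$ sits in the penalty branch (using the hypothesis $\delta\leq\max_{s}D_{\rm KL}^{s}(\pi\oldsub,\pinew[TRPO])$, as you do), and then runs a three-case analysis over \emph{every} competitor $\pi'$ --- penalty branch, strictly interior, and KL-large-but-no-improving-action --- showing each is dominated by $\pinew[TRPO]$; it concludes with $\eta(\pinew[\pmethodhybridf])=\eta(\pinew[TRPO])\geq M(\pinew[TRPO])\geq M(\pi\oldsub)=\eta(\pi\oldsub)$. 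You instead never prove global optimality of $\pinew[TRPO]$ for $\fnLexp{\pmethodhybridf}$: you only need the two branch-membership identities at the two specific optima and then sandwich through $\fnLexp{\pmethodhybridf}(\pinew[\pmethodhybridf])\geq\fnLexp{\pmethodhybridf}(\pinew[TRPO])$. The price is that you must analyze $\pinew[\pmethodhybridf]$ directly (ruling out the strictly-interior case and the no-improving-action case for it), work the paper offloads entirely onto $\pinew[TRPO]$; the benefit is that your chain passes explicitly through $\eta(\pinew[\pmethodhybridf])\geq \LM(\pinew[\pmethodhybridf])$, whereas the paper's final step $\eta(\pinew[\pmethodhybridf])=\eta(\pinew[TRPO])$ tacitly assumes the maximizer is unique (two distinct maximizers of $\fnLexp{\pmethodhybridf}$ need not have equal $\eta$), so your version is actually the more airtight closing argument. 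One caution on your own hedged alternative: the fallback of ``cancelling $\alpha\delta$ against $C\max_{s}D_{\rm KL}^{s}<C\delta$'' works only for the strictly-interior case; if $\pinew[\pmethodhybridf]$ has $\max_{s}D_{\rm KL}^{s}\geq\delta$ but lands in the ``otherwise'' branch at some visited state for lack of an improving action, the subtracted constant $\alpha\delta$ is \emph{smaller} than $\alpha\max_{s}D_{\rm KL}^{s}$ and the inequality $\LM\geq\fnLexp{\pmethodhybridf}+\eta(\pi\oldsub)$ points the wrong way there, so for that case the perturbation argument you sketch is genuinely needed, not optional.
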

\begin{proof}
% note: We need to prove the improvement holds for any $s,a$, as we use it in the third action.
%Let $\LM_{s,a} =  $
First, we prove two properties of $\pinew[TRPO]$.

\begin{itemize}
\item 
Note that $\LM( \pi ) = E_{s,a}\left[ r_{s,a}(\pi)A_{s,a} \right]  - \alpha \max_{s' \in {\cal S}} D_{\rm KL}^{s'}(\pi\oldsub, \pi).$
As $\pinew[TRPO]$ is the optimal solution of $\LM(\pi)$,
%$\mathbb{E}_{s,a}\left[ r_{s,a}(\pi)A_{s,a} - \alpha \max_{s' \in {\cal S}} D_{\rm KL}^{s'}(\pi\oldsub, \pi)  \right]$.
%$\mathbb{E}_{s,a} \left[ L^{\pmethodhybridf}_{s,a}(\pinew[TRPO]) \right] $
we have
\begin{equation}
\begin{aligned}
\footnotesize
\mathbb{E}_{a} \left[ \ratio[s,a][\pinew[TRPO]] A_{s,a} \right] \geq \mathbb{E}_{a} \left[ \ratio[s,a][\pi\oldsub ] A_{s,a} \right] \text{ for any }s
\end{aligned}
\label{eq_proof_hjsdhusd}
\end{equation}
{Suffice it to prove the counter-positive of \cref{eq_proof_hjsdhusd}.}
Assume $\pi'$ is an optimal solution of $M(\pi)$ and there exists some $s'$ such that
$\mathbb{E}_{a} \left[ \ratio[s',a][\pi' ] A_{s',a} \right] < \mathbb{E}_{a} \left[ \ratio[s',a][\pi\oldsub ] A_{s',a} \right]$,
then we can construct a new policy 
$$
\small
{\pi''}(\cdot|s)=
\begin{cases}
\pi\oldsub(\cdot|s) & \text{if } \mathbb{E}_{a} \left[ \ratio[s,a][\pi' ] A_{s,a} \right] < \mathbb{E}_{a} \left[ \ratio[s,a][\pi\oldsub ] A_{s,a} \right] \\
\pi'(\cdot|s) & \text{otherwise}
\end{cases}
$$
We have $M(\pi') < M(\pi'')$, which contradicts that $\pi'$ is an optimal policy.

\item 

Besides, by \cref{eq_proof_hjsdhusd}, we can also obtain that for any $s$ there exists at least one $a'$ such that ${\ratio[s,a'][\pi] A_{s,a'} \geq \ratio[s,a'][\pi\oldsub] A_{s,a'}}$.
Therefore, by condition \eqref{eq_condition_maxklout}, we have 
\begin{equation}
\begin{aligned}
&L^{\pmethodhybridf}( \pinew[TRPO] )+\eta(\pi\oldsub) \\
= & \LPG(\pinew[TRPO]) - \alpha {\max_{s\in {\cal S}}D_{\rm KL}^{s}(\pi{\oldsub}, \pinew[TRPO])}.
\end{aligned}
\end{equation}
\end{itemize}

%$
%r_{s,a}(\pinew[TRPO]) A_{s,a} \geq r_{s,a}(\pi\oldsub ) A_{s,a}.
%$
%If there exists $\pi'$ for which there exists some $s'$ such that for any $a$ 
%$
%r_{s',a}(\pinew[TRPO]) A_{s',a} \geq r_{s',a}(\pi\oldsub ) A_{s',a}
%$,
%then ....

Then, we prove that $\pinew[TRPO]$ is the optimal solution of $L^\pmethodhybridf$.
There are three cases.
\begin{itemize}
\item 
For $\pi'$ which satisfies 
${\max_{s\in {\cal S}}D_{\rm KL}^{s}(\pi{\oldsub}, \pi') \geq \delta}$ and there exists some $a'$ such that ${\ratio[s,a'][\pi'] A_{s,a'} \geq \ratio[s,a'][\pi\oldsub] A_{s,a'}}$ for any $s$,
we have
\begin{align}
&L^\pmethodhybridf(\pi') + \eta(\pi\oldsub) \\
=& L^{\PG}(\pi') - \alpha {\max_{s\in {\cal S}}D_{\rm KL}^{s}(\pi{\oldsub}, \pi')}  \\
\leq & L^{\PG}(\pinew[TRPO]) - \alpha {\max_{s\in {\cal S}}D_{\rm KL}^{s}(\pi{\oldsub}, \pinew[TRPO])} \label{eq_zhjyruw} \\
= &L^\pmethodhybridf(\pinew[TRPO]) + \eta(\pi\oldsub) \label{eq_}
\end{align}
%The inequation \eqref{eq_zhjyruw} holds because $\pinew[TRPO]$ is the optimal solution.

\item 
% 这里是的思路依然是从点对点的思路去证 
% 原先的思路其实是想找一个满足截断条件的策略比原先那个更好，但是其实这个非常难证明，原因主要在于那个地方是跳变：原先本来是delta，然后跳变成 max_KL，但是max_KL这里已经给出了已知条件 max_KL > \delta。
% 所以其实你很难证明，后面一项变大以后是不是一定会更好

% 所以现在新的思路是想去elementwise的去证明
For $\pi'$ which satisfies ${\max_{s\in {\cal S}}D_{\rm KL}^{s}(\pi{\oldsub}, \pi') < \delta}$,
we have 
\begin{align}
& L^\pmethodhybridf(\pi') + \eta(\pi\oldsub) \\
=& L^\PG(\pi') - \alpha \delta  \\
< & L^\PG(\pi') - \alpha {\max_{s\in {\cal S}}D_{\rm KL}^{s}(\pi{\oldsub}, \pi')} \\
\leq & L^\PG(\pinew[TRPO]) - \alpha {\max_{s\in {\cal S}}D_{\rm KL}^{s}(\pi{\oldsub}, \pinew[TRPO])} \\
= & L^\pmethodhybridf(\pinew[TRPO]) + \eta(\pi\oldsub)
\end{align}
\item 
We now prove the case of $\pi'$ which satisfies ${\max_{s\in {\cal S}}D_{\rm KL}^{s}(\pi{\oldsub}, \pi') \geq \delta}$ and 
there exists some $s'$ such that 
${\ratio[s',a][\pi'] A_{s',a} < \ratio[s',a][\pi\oldsub] A_{s',a}} \text{ for any }a. $
We have
\begin{align}
& \mathbb{E}_{a} \left[ L_{s',a}^{\pmethodhybridf}(\pi') \right] \\
= & \mathbb{E}_{a} \left[ r_{s',a}^{}(\pi') \right] - \alpha \delta \\
< & \mathbb{E}_{a} \left[ r_{s',a}^{}(\pi\oldsub) \right] - \alpha \delta \\
\leq & \mathbb{E}_{a} \left[ r_{s',a}^{}(\pinew[TRPO]) \right] - \alpha \max_{s\in {\cal S}} D_{\rm KL}^s(\pinew[TRPO]) \\
=& \mathbb E_{a} \left[ L_{s',a}^{\pmethodhybridf}(\pinew[TRPO]) \right]
\end{align}

%For $s \notin \{s'\}$, we also have 
%$  L_{s,a}^{\pmethodhybridf}(\pi') \leq L_{s,a}^{\pmethodhybridf}(\pinew[TRPO]) $.
%上面这个式子不好给出证明
We can construct a new policy 
$$
\small
{\pi''}(\cdot|s)=
\begin{cases}
\pinew[TRPO](\cdot|s) & \text{if } s \in \{s'\} \\
\pi'(\cdot|s) & \text{otherwise}
\end{cases}
$$ for which we have
\begin{align}
& L_{}^{\pmethodhybridf}(\pi') + \eta(\pi\oldsub) \\
= & \mathbb{E}_{s,a} \left[ L_{s,a}^{\pmethodhybridf}(\pi') \right] + \eta(\pi\oldsub) \\
< & \mathbb{E}_{s,a} \left[ L_{s,a}^{\pmethodhybridf}(\pi'') \right] + \eta(\pi\oldsub) \\
= & \LPG(\pi'') - \alpha \max_{s\in {\cal S}} D_{\rm KL}^s(\pi'') \\
\leq & \LM(\pinew[TRPO]) \\
= & L_{}^{\pmethodhybridf}(\pinew[TRPO]) + \eta(\pi\oldsub) 
\end{align}

%Therefore, we have $ 
%L_{}^{\pmethodhybridf}(\pi') = {\mathbb{E}_{s,a}}\left[ L_{s,a}^{\pmethodhybridf}(\pi') \right] 
%< \mathbb{E}_{s,a} \left[ L_{s,a}^{\pmethodhybridf}(\pinew[TRPO]) \right] = L_{}^{\pmethodhybridf}(\pinew[TRPO])
%$. 

\end{itemize}

Finally, by \Cref{thm_lowerbound}, we have $\eta(\pinew[\pmethodhybridf]) = \eta(\pinew[TRPO]) \geq M(\pinew[TRPO]) \geq M(\pi\oldsub) = \eta(\pi\oldsub)  $.

\end{proof}

%}

\section{Related Work}\label{sec_relatedwork}
%$\mathbb{1}$
Many researchers have extensively studied different approaches to enforce the constraint on policy updating.
Policy gradient-based methods \citep{sutton1999policy,kakade2001natural} update the parameter of the policy by several steps, which could be considered as a constraint in the parameter space.
%The natural policy gradient (NPG) \citep{kakade2001natural} improves policy by computing an ascent direction in the policy space.
%Relative entropy policy search (REPS) \citep{peters2010relative} constrains the state-action marginals, limits the loss of information per iteration and aims to ensure a smooth learning progress. 
%Further, this algorithm requires a costly nonlinear optimization in the inner loop, which is computationally expansive. 
\citeauthor{kakade2002approximately} firstly stated that improving the policy within a region in policy space leads to a better policy.
Followed by their work, the well-known trust region policy optimization (TRPO) incorporates a KL divergence constraint on policy, and \citeauthor{wu2017scalable} proposed an enhanced method which uses Kronecker-Factored trust regions (ACKTR).
Proximal Policy Optimization (PPO) \citep{schulman2017proximal} uses a clipping mechanism to enforce the constraint, which allows using the first-order optimization.%is easy to implement and optimize. 
%\citeauthor{Chen2018AnAC} presented a method adaptively adjusts the scale of policy gradient according to the significance of state-action.

Several studies focus on investigating the clipping mechanism of PPO. 
In our previous work, we show that the \ratiobased/ clipping with a constant clipping range may fail when the policy is initialized from a bad one.
To address this problem, we proposed a method that adaptively adjusts the clipping ranges guided by the trust region criterion. 
In this paper, we also propose a method based on the trust-region criterion, but we use it as a triggering condition for clipping, which is much simpler to implement.
{\citeauthor{ilyas2018deep} performed a fine-grained examination and found that the PPO's performance depends heavily on optimization tricks but not the core clipping mechanism \citep{ilyas2018deep}.
However, as we found, although the clipping mechanism could not strictly restrict the policy, it does exert an essential effect in restricting the policy and maintain stability.}
We provide a detailed discussion with empirical results in \Cref{sec_experiment}.

Our methods are mostly related to several prior methods of constraining policy.
We give detail on the relation and difference between these methods.
We make comparisons from the following perspectives:
\emph{Restriction approach} --- the form of the objective function designing to restrict the policy, which could decide the \emph{optimization} complexity;
%optimization: \ttt{the order of the optimization involved in the algorithm};
\emph{Boundness} --- whether the method could theoretically bound the policy by the corresponding metric;
\emph{Policy metric} --- the metric of the policy difference that the algorithm attempts to restrict.
\Cref{tab_relations} summarizes the properties of the algorithms.

\begin{table}[b!]

\centering
\setlength{\tabcolsep}{2pt}

\begin{tabular}{c|c|c|c|c}

\toprule
Algorithm    & \scriptsize{Restriction Approach}  & Optimization & Boundness & Policy Metric \\
\midrule
TRPO      & constraint & second-order   		&    \checkmark       & KL     				\\
PPO       & clipping   & first-order 		&      $\times$    & likelihood ratio  	 \\
PPO-RB    & clipping   & first-order       &      \checkmark     & likelihood ratio  	\\
TR-PPO    & clipping   & first-order      &      $\times$     & KL     				\\
\pmethodhybrid/ & clipping   & first-order       &     \checkmark    & KL     				\\
PPO-penalty & penalty	& first-order	& \checkmark & KL				\\
%\todo{TRGPPO} \\
\bottomrule
\end{tabular}
\caption{
Properties of the algorithms.
}\label{tab_relations}
\end{table}

\subsection{Restriction Approach: constraint vs. clipping vs. penalty }

TRPO restricts the policy by explicitly imposing a \emph{constraint}, whereas the variants of PPO make restrictions by the \emph{clipping} mechanism. 
Schulman et al. also proposed a \emph{penalty}-based method, PPO-penalty, which imposes a penalty on the KL divergence and adaptively adjusts the penalty coefficient \citep{schulman2017proximal}.

The different restriction approaches could lead to different optimization complexity. 
To solve the constrained problem, TRPO involves second-order optimization and employs conjugate gradient optimization. 
Such computation could be inaccurate in high-dimensional tasks, leading to incorrect policy gradient.
While the clipping-based and penalty-based methods can use stochastic gradient descent (SGD) to train directly, which are much easier to implement and require relatively less computation.

These different restriction approaches could also result in different solutions.
The constraint-based method attempts to find an optimal solution within the policy constraint. 
{And the penalty-based methods can also be interpreted in this way by Lagrangian dual.}
While the clipping-based methods attempt to find a sub-optimal solution within the policy constraint, as these methods stop updating when the policy violates the constraint. 
However, in practice, the clipping-based methods usually perform much better than the other two ones. 
{One explanation is that the quality of the solution of the constraint-based methods heavily depends on the optimization, which is usually inaccurate, especially for DNNs.}
While for the penalty-based methods, it is hard to determine the coefficient of the penalty.

%\subsection{Optimization}

\subsection{Boundness}
The restriction approach and the optimization method could also affect the boundness on the policy.
Both constraint-based and penalty-based methods are guaranteed to bound the policy theoretically.
As for the original clipping-based methods, as we have discussed in \cref{sec_analysis}, it suffers from the unbounded problem.
However, by incorporating the rollback operation, such an issue is relieved, and the policy is also guaranteed to be bounded theoretically.

\subsection{Policy metric: KL divergence vs. likelihood ratio}\label{sec_related_policymetric}
These algorithms restrict the policy by a different metric. 
PPO and its variant with rollback operation (PPO-\FB) employ the ratio-based metric, 
while the other algorithms use the metric of KL divergence.
The divergence-based metric, $D_{\rm KL}^s (\pi\oldsub,\pi)=\mathbb{E}_{a} \left[ \log \frac{\pi\oldsub(a|s)}{\pi(a|s)} \right] \leq \delta$, imposes a summation constraint over the action space; while the ratio-based one, $1-\epsilon\leq \frac{\pi(a|s)}{\pi\oldsub(a|s)}\leq1+\epsilon$, is an element-wise one on each action point. 
The KL divergence metric is more theoretically justified according to the trust-region theorem. 
In fact, as we have discussed in Section \ref{sec_analysis}, bounding the likelihood ratio \ttt{at} one action point does not necessarily lead to bounded KL divergence.

In our previous work, we showed that different metrics of the policy difference could result in different algorithmic behavior \citep{wang2019trust}.
We found that the ratio-based constraint is prone to make the policy be trapped in a bad local optimum when the policy is initialized from a bad solution.
%\citep{wang2019trust}. 
To show this, the constraint $1-\epsilon \leq \pi(a|s)/\pi\oldsub(a|s) \leq 1+\epsilon$ can be rewritten as $ -\pi\oldsub(a|s) \epsilon \leq \pi(a|s)-\pi\oldsub(a|s) \leq  \pi\oldsub(a|s) \epsilon $, which can reflect the allowable change of the likelihood $\pi(a|s)$.
We can observe that such constraints impose relatively strict restrictions on actions which are not preferred by the old policy (i.e., $\pi\oldsub(a_t|s_t)$ is small). 
Such bias may continuously weaken the likelihood of choosing the optimal action when the initial likelihood $\pi\oldsub(a_{\rm optimal}|s)$ is small.
We refer interested readers to \citet{wang2019trust} for further details.
While the \klbased/ one is averaged over the action space and thus has no such bias. We found it usually could learn better in practice.

%\subsection{Specific comparisons}\label{sec_specific_comparison}

%\textbf{TR-PPO vs. TRPO}.
%Both these two methods use the KL divergence metric.
%The main difference is that \pmethodkl/ doesn't need to optimize $\theta$ through the KL divergence term $D_{\rm KL}^{s_t}(\pip{\oldsub}, \pip)$. 
%The KL divergence is just calculated to decide whether to clip $\ratioa$ or not.

%\textbf{TR-PPO vs. PPO}.
%Both these two methods use the clipping function to restrict policy.
%The difference is that they use different policy metrics: KL divergence and likelihood ratio.
%\pmethodkl/ uses a different metric of policy difference to restrict the policy.  
%Please refer to \Cref{} for more detail.

%\textbf{\pmethodhybrid/ vs. PPO-penalty}.
%Both these two methods enforce a penalty on the KL divergence.
%However, the penalty of PPO-penalty exists throughout the optimization procedure, whereas in \pmethodhybrid/ the penalty is only enforced when the policy is out of the trust region. 
%\ttt{The penalty-based methods are notorious by the difficulty of adjusting the coefficient.}
%Our \pmethodhybrid/ seems to be more robust to the coefficient and achieve better performance across different tasks (see \Cref{sec_experiment} for more detail).

%\subsection{Other Works}

%

\section{Experiment}\label{sec_experiment}
We conducted experiments to investigate whether the proposed methods could improve ability in restricting the policy and accordingly benefit the learning.
We will first describe the experimental setup.
Then the effect on restricting policy and improving performance will be presented.
Finally, comparison with several baseline methods and the state-of-the-art methods will be demonstrated.

\subsection{Experimental Setup}
To measure the behavior and the performance of the algorithm, we evaluate the likelihood ratio, the KL divergence, and the episode reward during the training process.
The likelihood ratio and the KL divergence are measured between the new policy and the old one at each epoch.
%The PPOs algorithm alternate between sampling data and optimizing policy, 
We refer one epoch as: 1) sample state-actions from a behavior policy $\pi_{\pip\oldsub}$; 2) optimize the policy $\pi_{\pip}$ with the surrogate function and obtain a new policy $\pi_{\theta\newsub}$.
%The old policy is the one before optimizing, while the new policy is the one after being updated at the end of the epoch. 

%We measured the likelihood ratio and the KL divergence of the trained policy at the end of each epoch with the policy at the begin of the epoch.
%The size of the sampled state-actions at each epoch is 2048.
%\Cref{fig_clipfrac} shows the proportion of the likelihood ratios which are out of the clipping range.
%\Cref{fig_ratiomax} and \Cref{fig_klmax} show the maximum likelihood ratio and the maximum KL divergence over all sampled state-actions.
%% for all algorithms, except that of \pmethodklsimple/ and {PPO-0.6} are shown in the Appendix \tore{?}, since their values are too large. 
%\Cref{fig_rew} plots the episode rewards during the training process.
%\Cref{tab_reward_hit} lists the maximum episode rewards during training process and the sample efficiency.

%To address concern 1,
%we measure the likelihood ratio and KL divergence of the trained policies with the old policy during training process.
%which is detailed in \Cref{sec_experiment_bound}. 
%To address concern 2, we measure the maximum episode rewards and sample efficiency of the policy learning, which is detailed in \Cref{sec_experiment_performance}. 

%The following algorithms were consider in the comparison. 
We evaluate the following algorithms.
(a) \emph{PPO}: the original PPO algorithm. We used $\epsilon=0.2$, which is recommended by \citet{schulman2017proximal}. 
We also tested PPO with $\epsilon=0.6$, denoted as \emph{PPO-0.6}.
(b) \emph{\pmethodfallback/}: PPO with the extra \rollback/ trick. The {\rollback/ coefficient is set to be $\rbweight=0.3$} for all tasks (except for the Humanoid task we use $\rbweight=0.02$).
(c) \emph{\pmethodkl/}:  \pmethodklfull/. The trust-region coefficient is set to be $\delta=0.35$ for all tasks (except for the Humanoid task we use $\delta=0.05$).
(d) \emph{\pmethodhybrid/}: the coefficients are set to be $\delta=0.03$ and $\rbweight=5$ (except for the Humanoid task we use $\delta=0.05$). 
%The $\delta$ of \pmethodhybrid/ is set to be slightly larger than that of \pmethodkl/ due to the existence of the \rollback/ mechanism.
%We also tried $\delta=0.025$ but found it is not better than $\delta=0.03$ with .
(e) \emph{PPO-penalty}: a a variant of PPO which adaptively adjust the coefficient of the KL divergence penalty \citep{schulman2017proximal}.
(f) \emph{TRPO}: restricting the policy by enforcing a hard constraint.
(g) \emph{A2C}: a classic policy gradient method. 
%A2C has the exactly same implementations and hyperparameters as PPO except the clipping mechanism is removed.
(h) \emph{SAC}: Soft Actor-Critic \citep{haarnoja2018soft}, a state-of-the-art off-policy RL algorithm.
(i) \emph{TD3}: Twin Delayed Deep Deterministic policy gradient \citep{fujimoto18td3}, a state-of-the-art off-policy RL algorithm which is competitive with SAC.
%We adopt the implementations provided in \citep{haarnoja2018soft}. 
All our proposed methods and PPO adopts the same implementations given by \citet{baselines}. This ensures that the differences are due to the algorithm changes instead of the implementations. 
The implementation details of the proposed methods are given in Appendix \ref{sec_implementation_detail}.
For SAC and TD3, we adopt the implementations published by the original users \citep{haarnoja2018soft,fujimoto18td3}.

%For ACKTR, we adopt the implementations and hyperparameters in \citep{baselines}. 
%
%Both PPO and ACKTR adopt the implementations and hyperparameters given in \citep{baselines}. 

The algorithms are evaluated on continuous control benchmark tasks implemented in OpenAI Gym \citep{Brockman2016OpenAI} simulated by MuJoCo \citep{Todorov2012MuJoCo}  and Arcade Learning Environment \citep{bellemare2013arcade}.
% and Arcade Learning Environment \citep{bellemare2013arcade}.
%For Mujoco control tasks,
% we evaluate algorithms on {\ntasks/ benchmark tasks} (including a challenging high-dimensional Humanoid locomotion task).
For Mujoco, each algorithm was run with \nrandomseed/ random seeds, 1 million timesteps (except for the Humanoid task was 20 million timesteps); the trained policies are evaluated after sampling every 2048 timesteps data. 
For Atari, the algorithms was run 4 random seeds, 10 million timesteps; we report the episode rewards of the policy during the training process.
%{All tasks were run with 1 million timesteps except for the Humanoid task was 20 million timesteps.} 

%The experiments on Atari tasks are detailed in Appendix \ref{sec_experiment_atari}.

%\subsection{The Effect on Restricting Policy}\label{sec_experiment_bound}

\subsection{The Effect on Policy Restriction}

\begin{question}
Does PPO suffer from the issue in bounding the likelihood ratio and KL divergence as we have analysed?
\end{question}

In general, PPO could not strictly bound the likelihood ratio within the predefined clipping range.
As shown in \Cref{fig_clipfrac}, a {reasonable} proportion of the likelihood ratios of PPO are out of the clipping range on all tasks.
Notably on Hopper, Reacher, and Walker2d, over 30\% of the likelihood ratios exceed.
Moreover, as can be seen in \Cref{fig_ratiomax}, the maximum likelihood ratios of PPO achieve more than 4 on all tasks (the upper clipping range is 1.2).
In addition, the maximum KL divergence also grows as timestep increases (see \Cref{fig_klmax}).
%\tore{This problem is highly \tore{stark} on the high dimensional task like Humanoid-v2, where the maximum likelihood ratios are exceptionally high with value more than 30.}%with value more than 30

{Nevertheless, the clipping technique of PPO still exerts an important effect on restricting the policy.}
To show this, we tested two variants of PPO: one uses $\epsilon=0.6$, denoted as \emph{PPO-0.6}; another one entirely removes the clipping mechanism, which collapses to the vanilla \emph{A2C} algorithm. 
As expected, the maximum likelihood ratios and KL divergences of these two variants are {significantly larger than} those of PPO.
Moreover, the performance of these two methods fail on all the tasks and fluctuate dramatically during the training process (see \Cref{fig_rew_validate}).
%It could be inferred that the clipping mechanism does produce a significant effect in restricting the policy.

\if@twocolumn
	\def\widthproperty{0.43}
\else
 	\def\widthproperty{0.245}%0.35
\fi

\begin{figure}[!b]
	\captionsetup[subfigure]{labelformat=empty}
    \centering
   	\centerline{
   		\includegraphics[width=\widthproperty\linewidth]{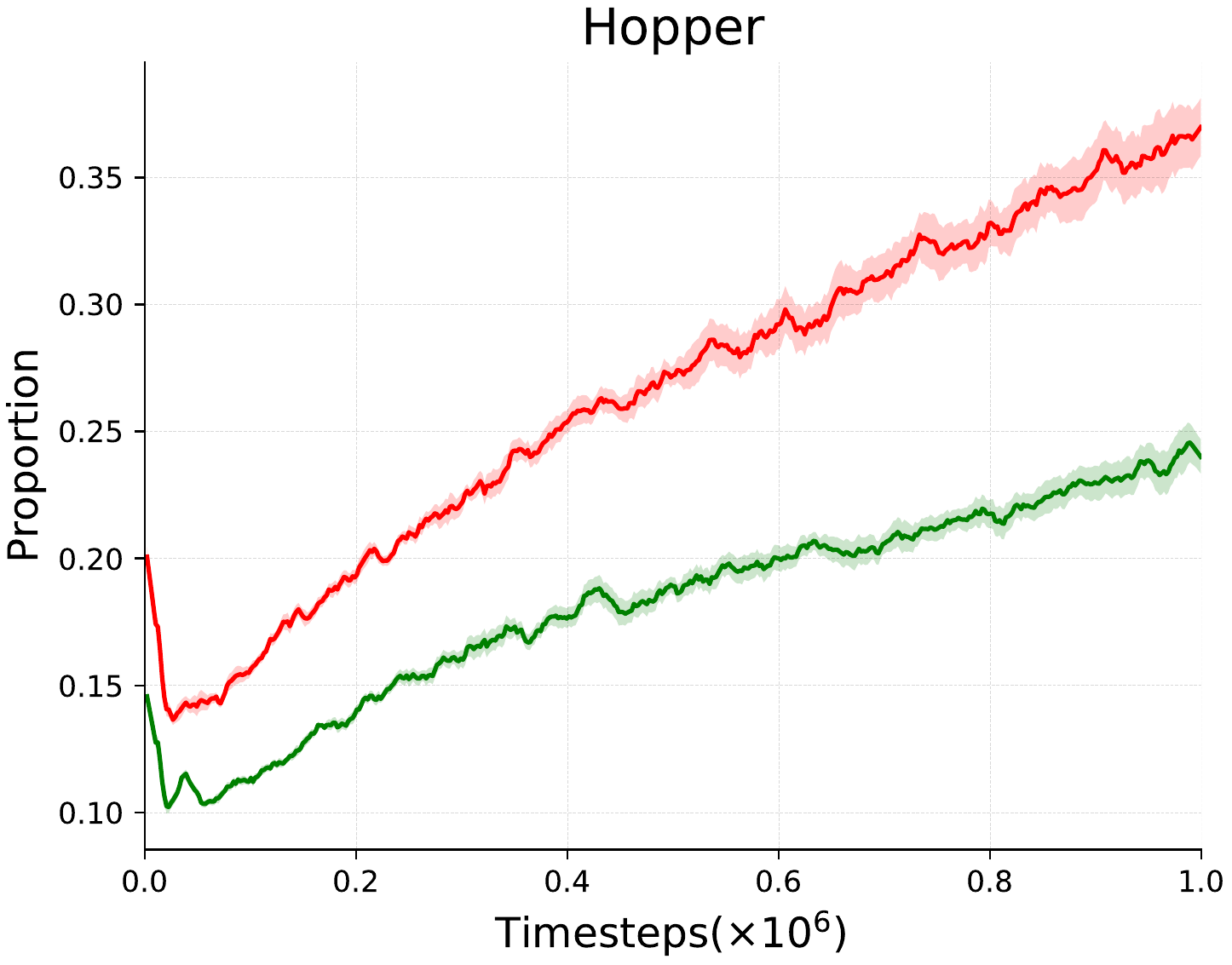}
   		\includegraphics[width=\widthproperty\linewidth]{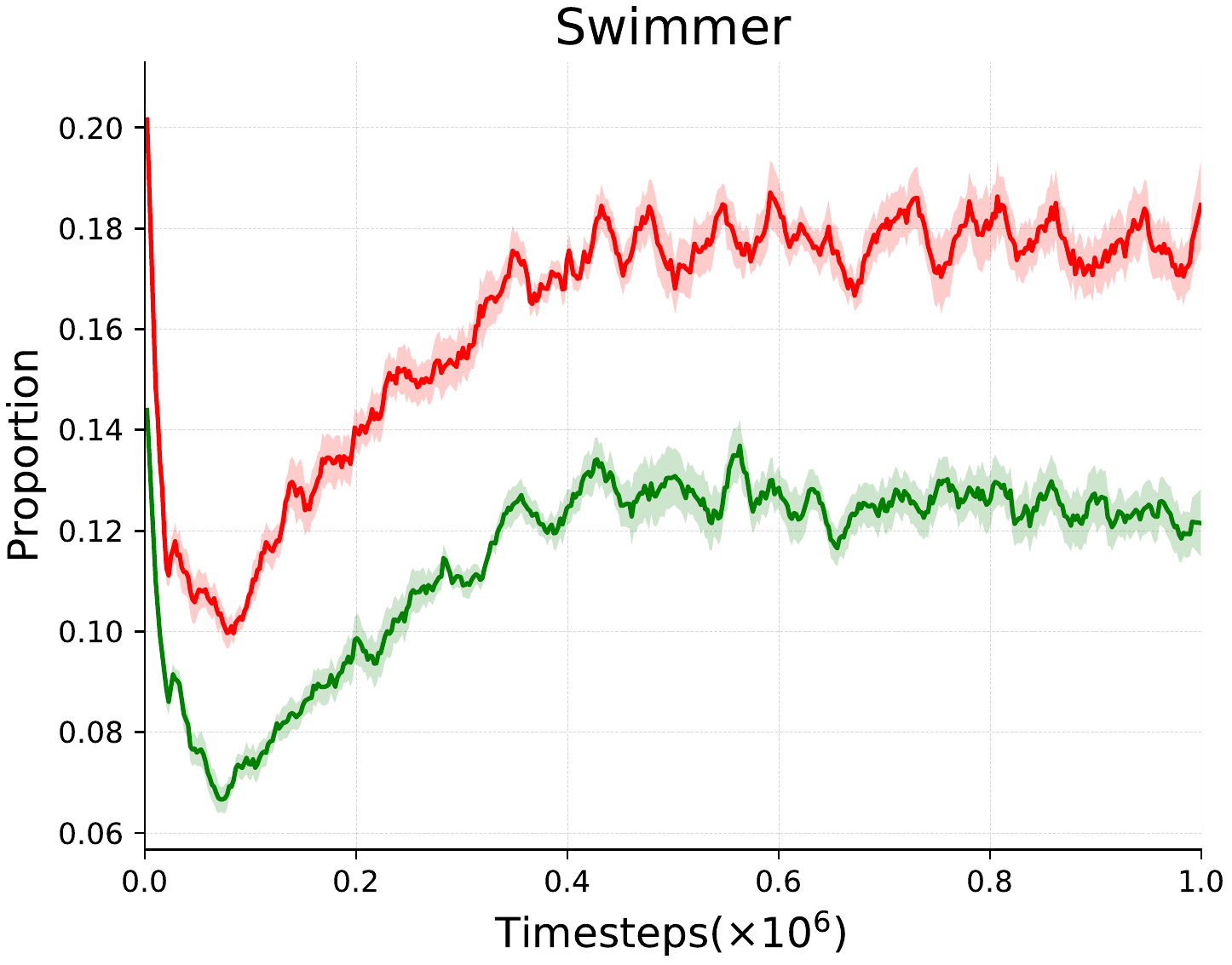}
   		\includegraphics[width=\widthproperty\linewidth]{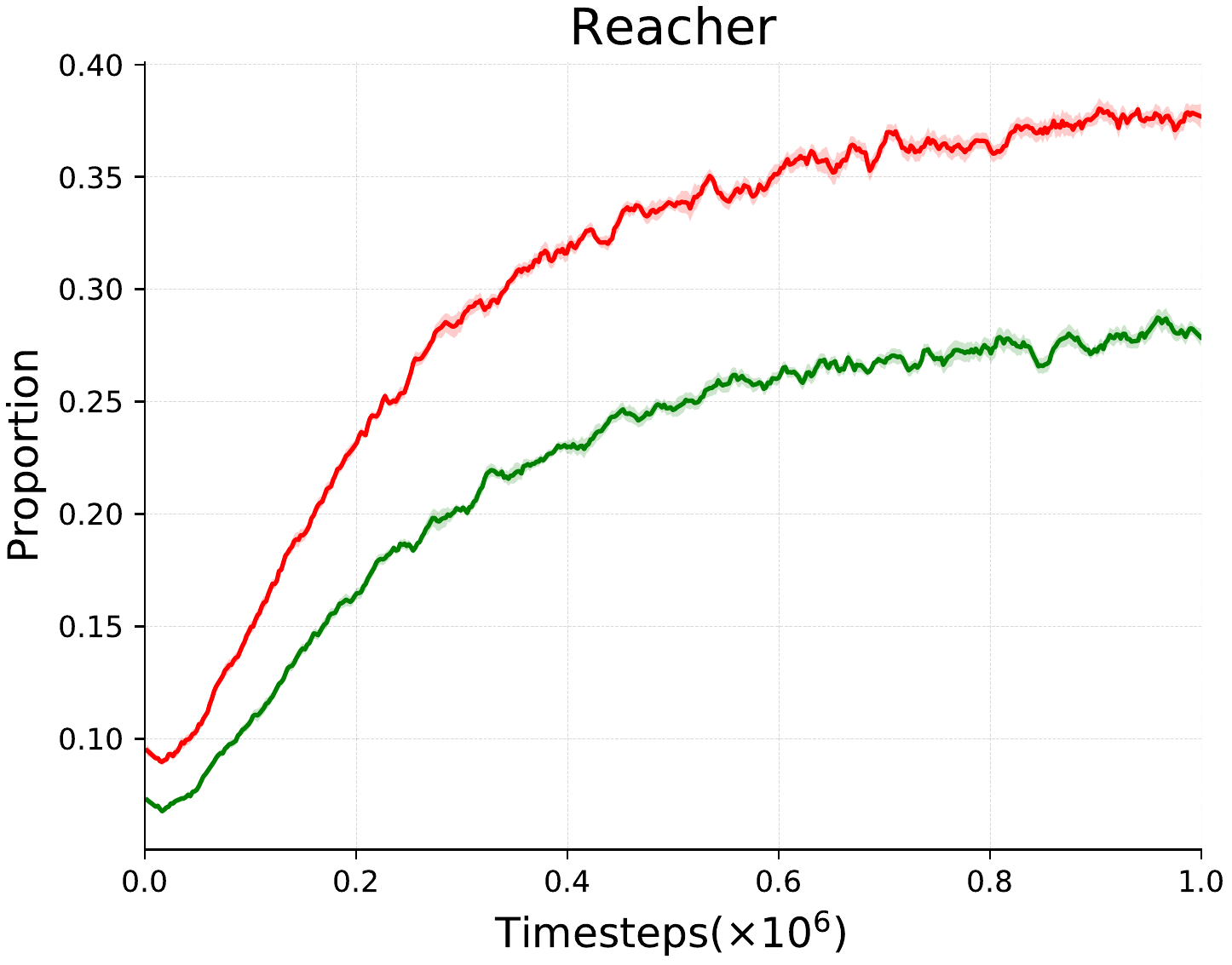}
   		\includegraphics[width=\widthproperty\linewidth]{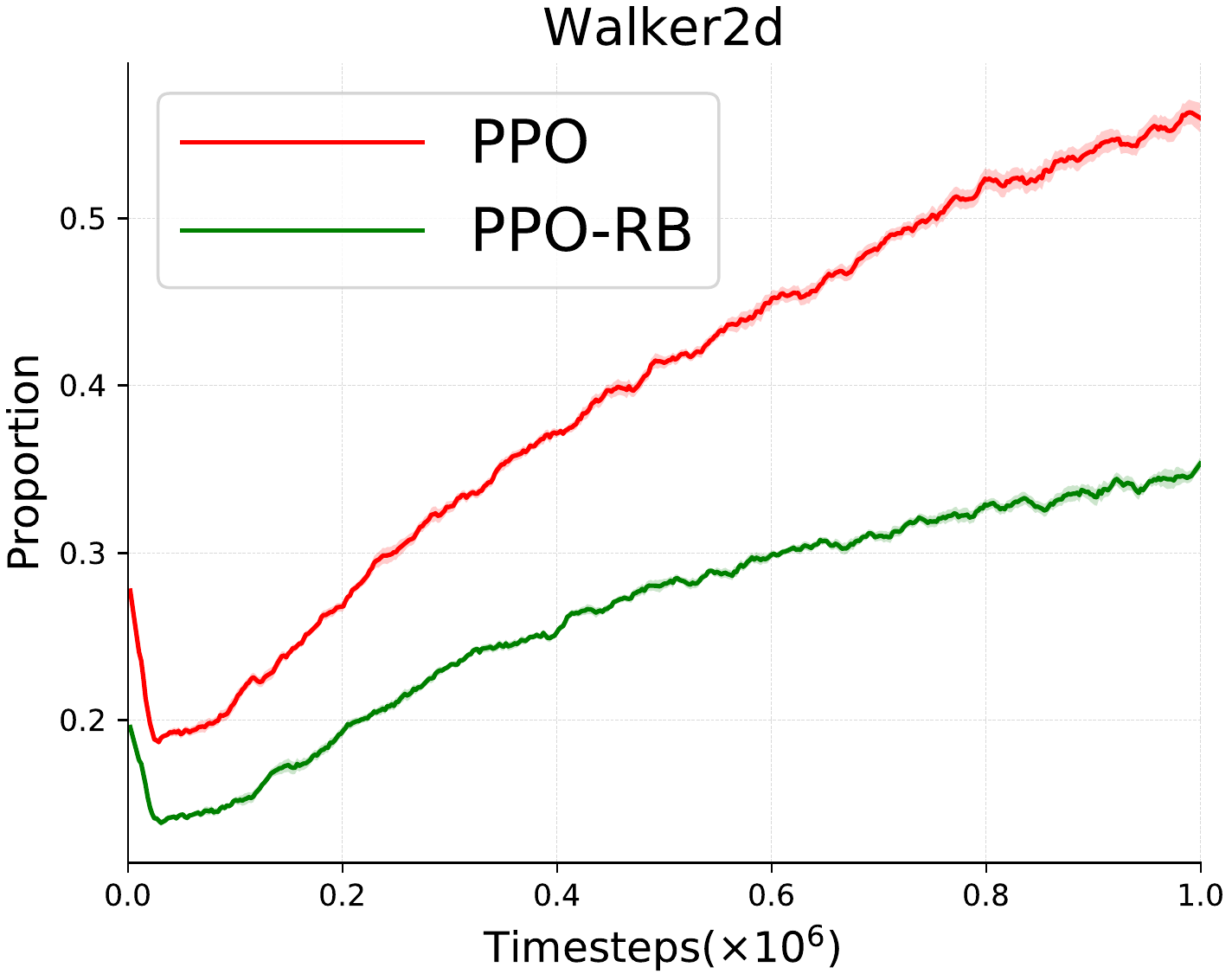}
   	}
%   	\centerline{
%   	}
	\iffastcompile
			\caption{
			}\label{fig_clipfrac}
	\else
		   \caption{
		   The proportions of the likelihood ratios which are out of the clipping range, i.e., $|r_{t}(\theta\newp)-1| \geq \epsilon$, where $\theta\newsub$ is the parameter at the end of each training epoch.
		   The proportions are calculated over all sampled state-actions at that epoch.
		   }\label{fig_clipfrac}
	\fi
\end{figure}

In summary, it could be concluded that although the core clipping mechanism of PPO could not strictly restrict the likelihood ratio within the predefined clipping range, it could somewhat take effect on restricting the policy and benefit the policy learning. 
This conclusion is partly different from that of \citet{ilyas2018deep}. 
They drew a conclusion that ``PPO's performance depends heavily on optimization tricks but not the core clipping mechanism''.
They got this conclusion by examining a variant of PPO which implements only the core clipping mechanism and removes additional {optimization tricks} (e.g., clipped value loss, reward scaling). 
%The likelihood ratios and the KL divergences of this variant also increase substantially. 
This variant also fails in restricting policy and learning.
However, as can be seen in our results, arbitrarily enlarging the clipping range or removing the core clipping mechanism can also result in failure. 
These results confirm the critical and indispensable efficacy of the core clipping mechanism.
%of PPO with larger clipping range and the version with clipping mechanism removed, their KL divergence explode and their policy fails on all tasks without exception. 

%From this we can know that the clipping technique \emph{does} play an important role in constraining policy \ttt{within the trust region}.

%It can be observed that for PPO both the proportion and the degree of the out-of-the-range likelihood ratios increase as the learning process progresses (see \Cref{fig_clipfrac}, \ref{fig_ratiomax}, \ref{fig_klmax}).
%We speculate this is because the scale of the advantage value increases as the policy learns better.
%Remember that the unclipped objective is $\ratioa A_t$, in which $A_t$ could be regarded as a learning rate.
%Larger $A_t$ could push the ratio outward the clipping range more forcefully.

\begin{figure}[!t]
    \centering
   	\centerline{
   		\includegraphics[width=\widthproperty\linewidth]{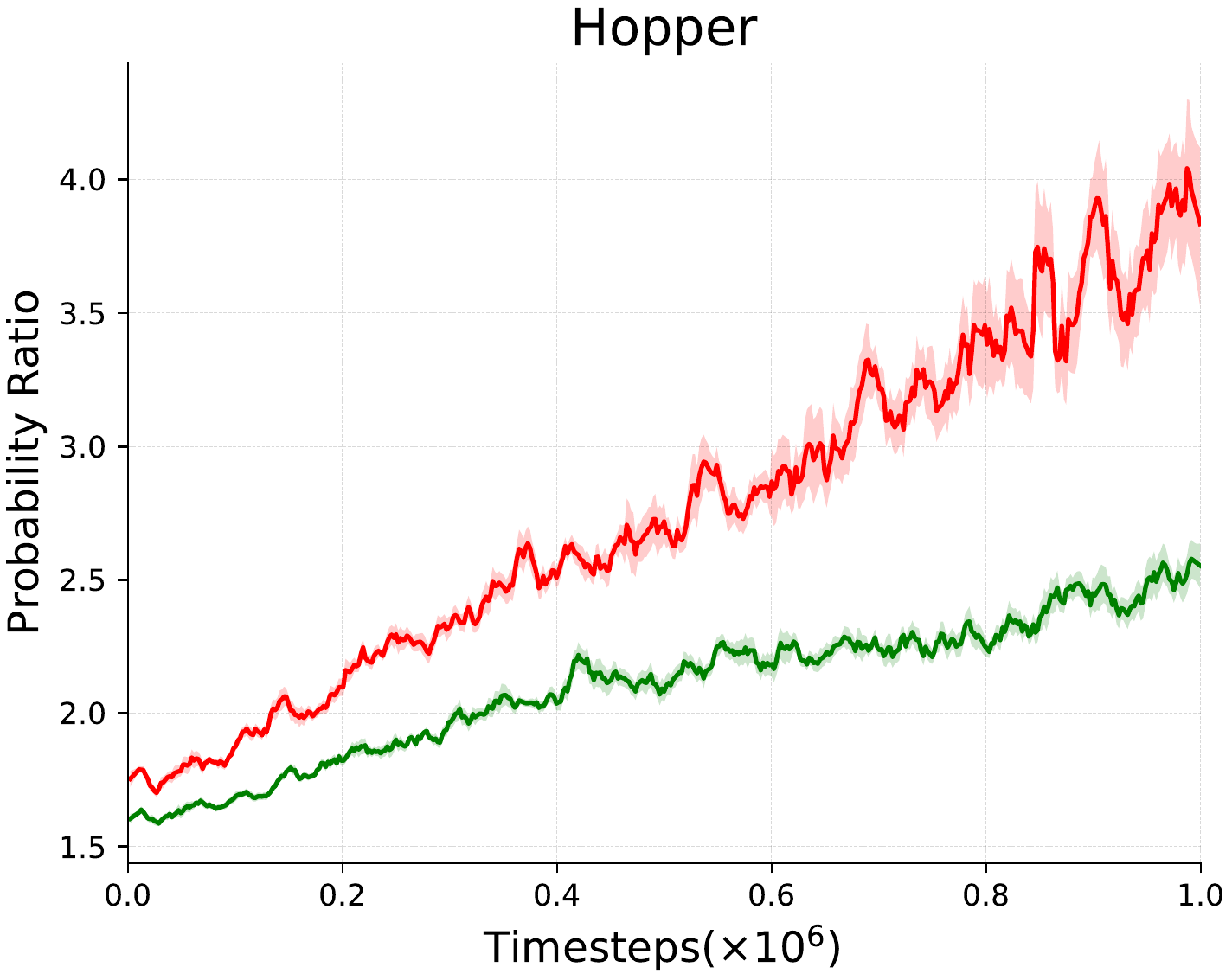}
   		\includegraphics[width=\widthproperty\linewidth]{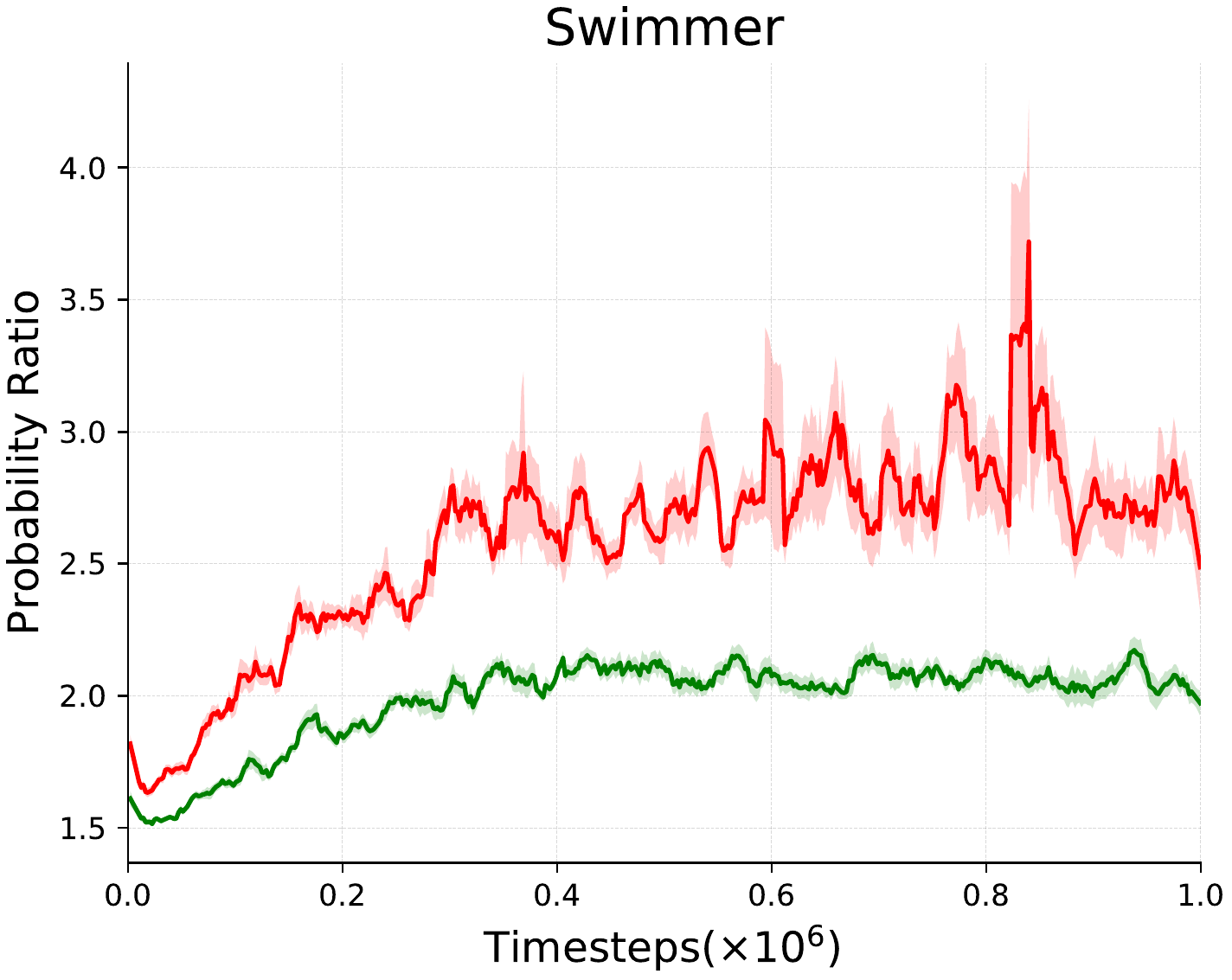}
   		\includegraphics[width=\widthproperty\linewidth]{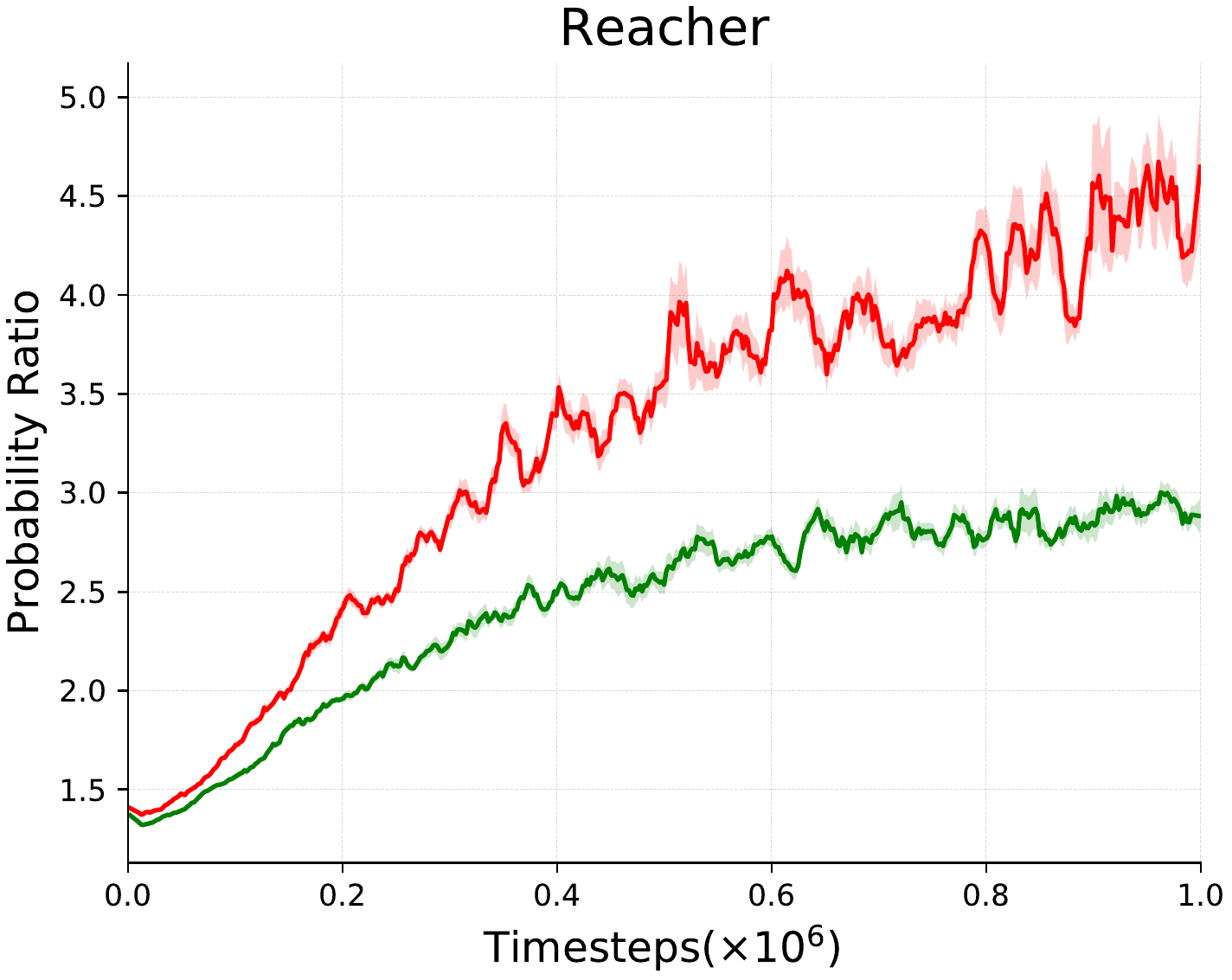}
   		\includegraphics[width=\widthproperty\linewidth]{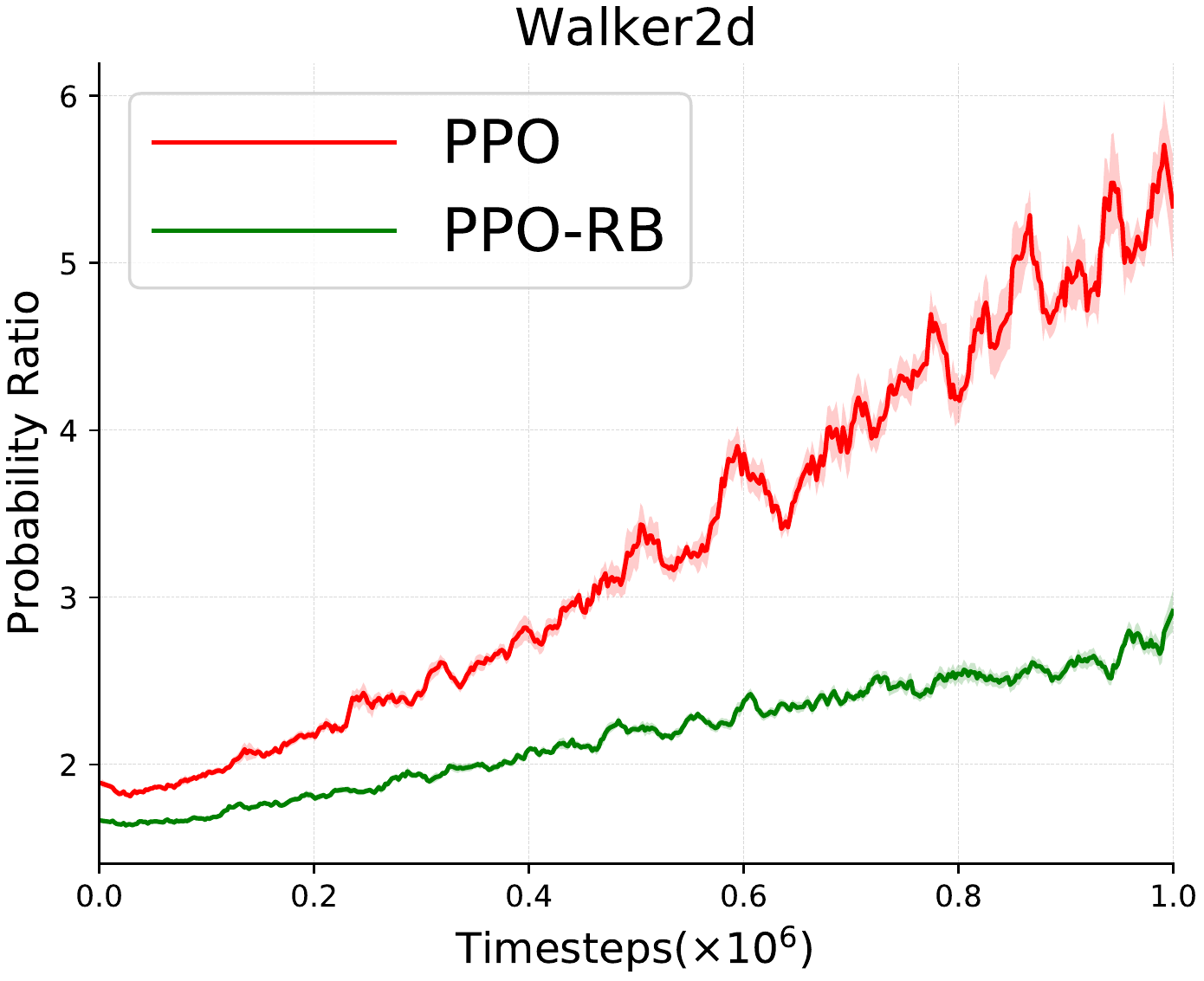}
   	}
%   	\centerline{
%   	}
	\iffastcompile
			\caption{
			}\label{fig_ratiomax}
	\else
    \caption{
    The maximum ratio over all sampled sates of each update during the training process, i.e., $\max_t r_t(\theta\newp)$.
    }\label{fig_ratiomax}%\ref{app-sec_experiments},
   	\fi
\end{figure}

\begin{figure}[!t]
    \centering
  	\centerline{
  		\includegraphics[width=\widthproperty\linewidth]{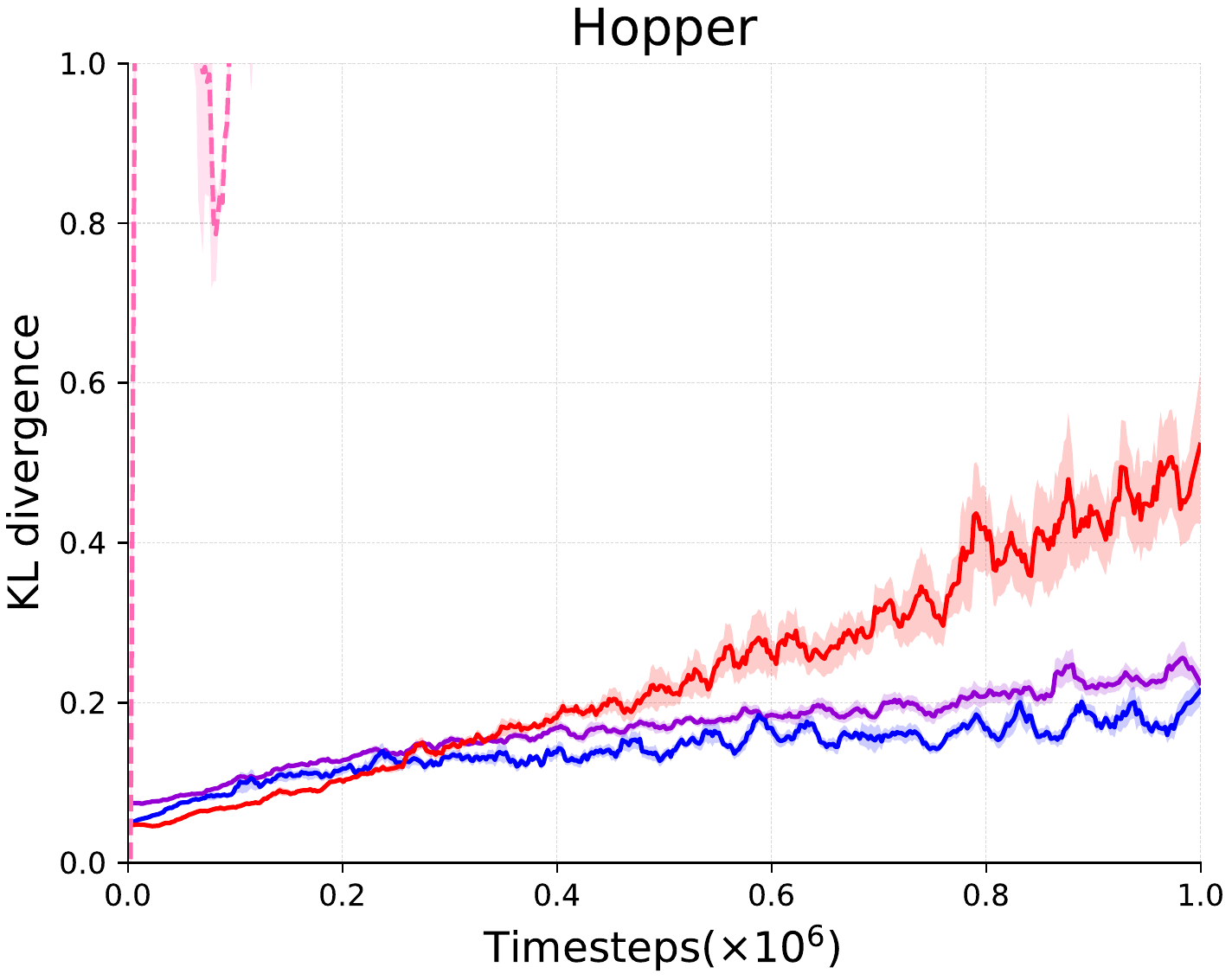}
  		\includegraphics[width=\widthproperty\linewidth]{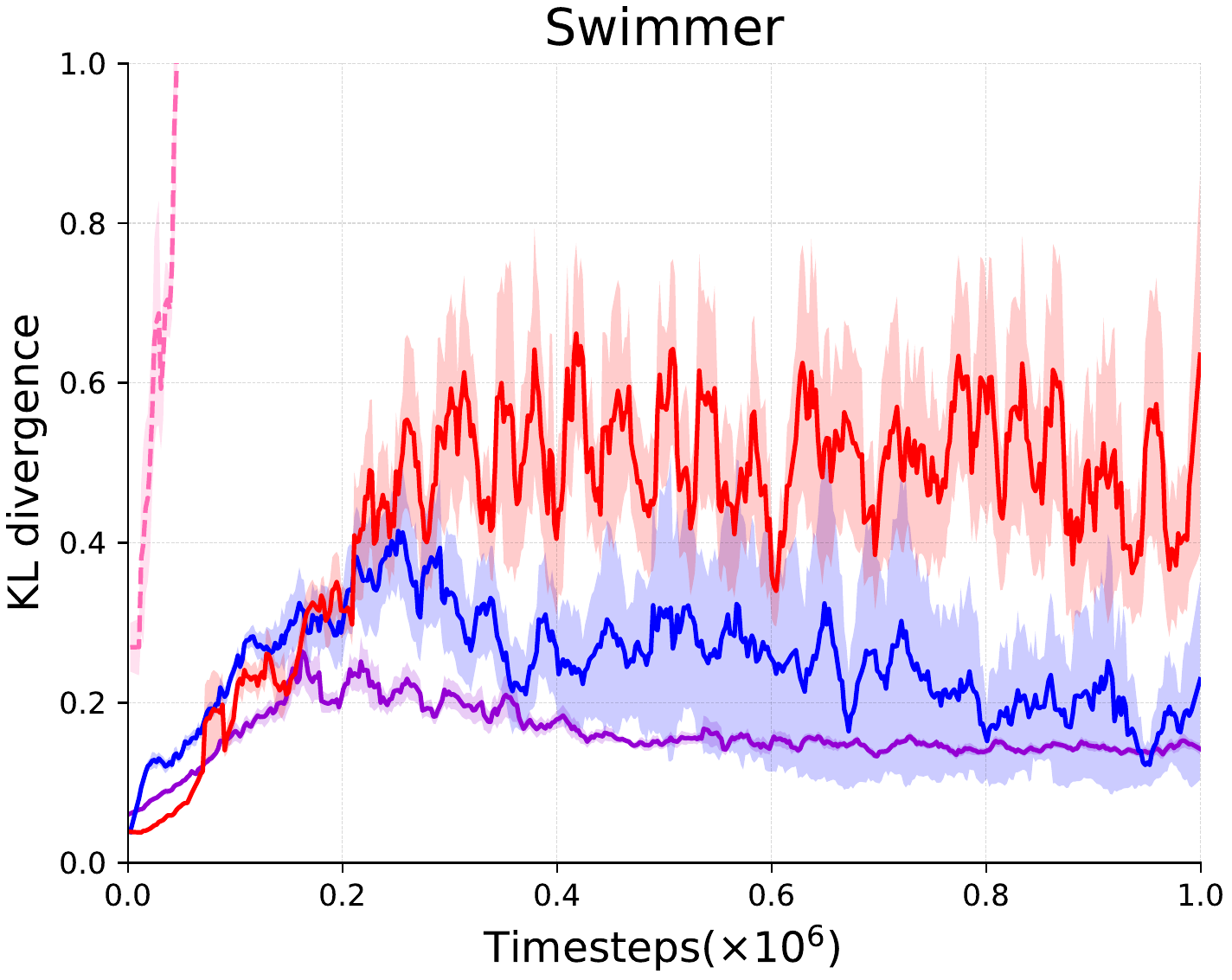}
  		\includegraphics[width=\widthproperty\linewidth]{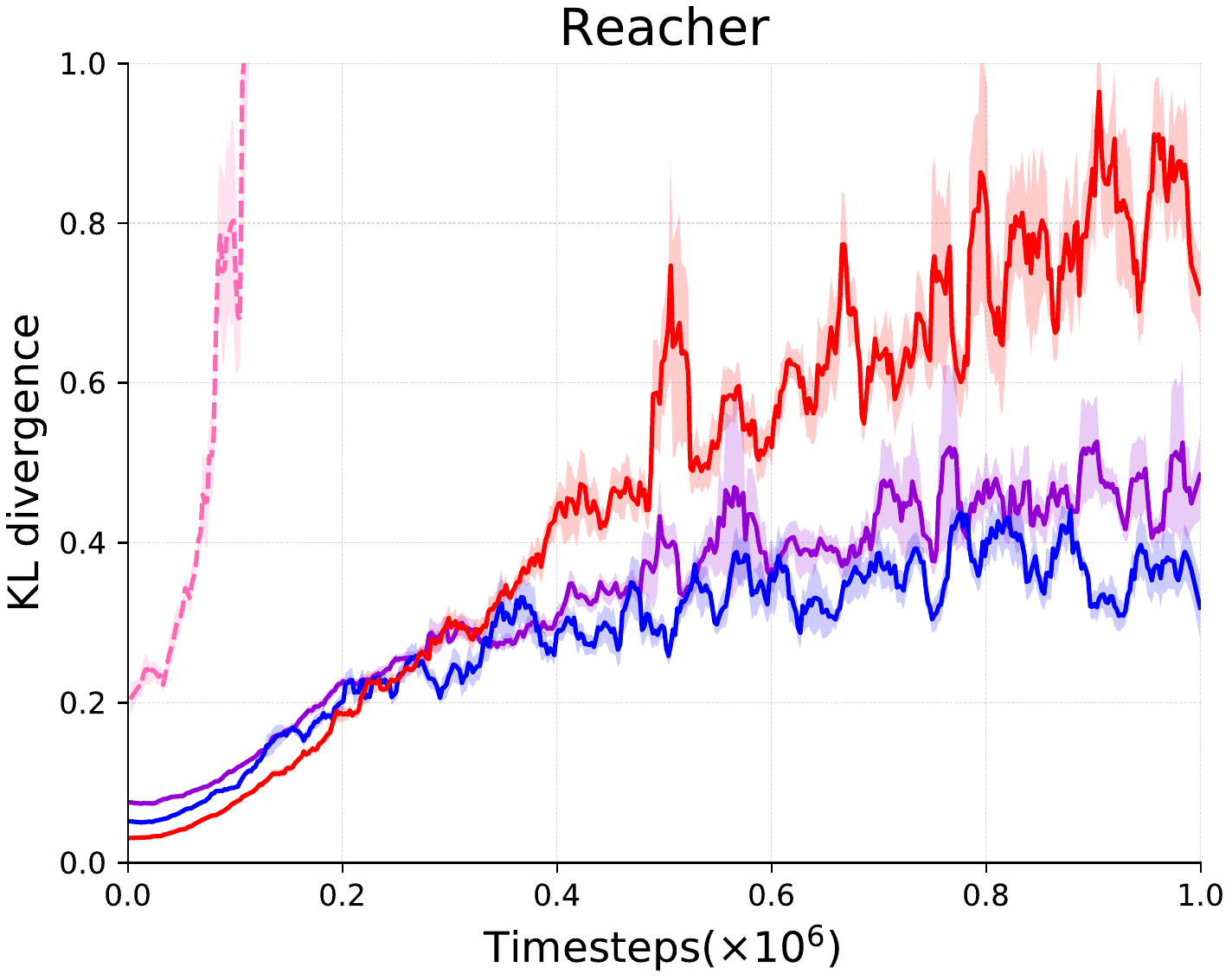}
  		\includegraphics[width=\widthproperty\linewidth]{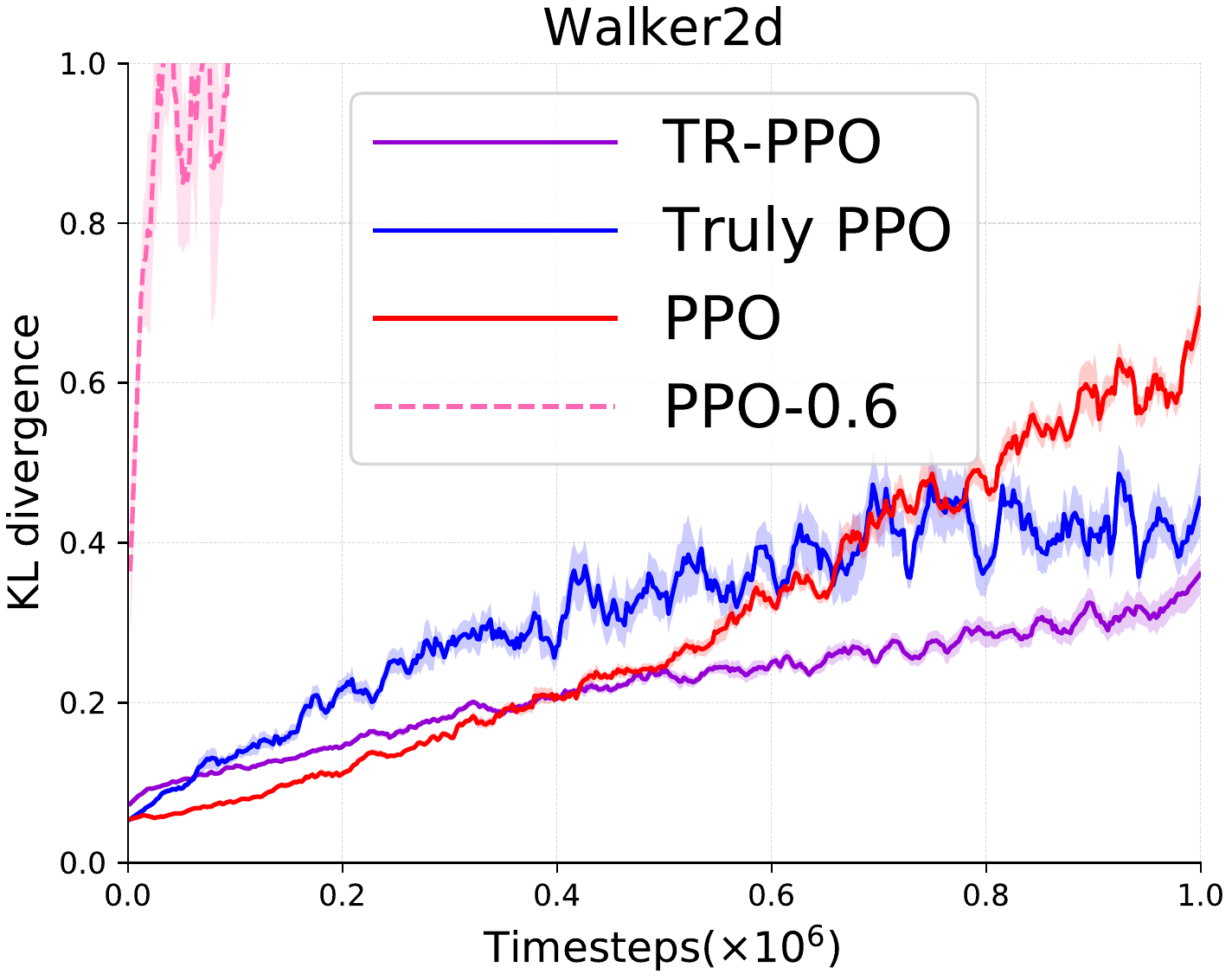}
  	}
%  	\centerline{
%  	}
	\iffastcompile
		\caption{
		}\label{fig_klmax}
	\else
		\caption{
		  The maximum KL divergence over all sampled states of each update during the training process, i.e., $\max_t D_{\rm KL}^{s_t}(\theta\oldsub, \theta\newp)$.
		  The curves of PPO-$0.6$ are clipped as the values can reach 40 which is too large on these tasks.
		}\label{fig_klmax}
	\fi
\end{figure}

\begin{question}
Could the \rollback/ operation and the trust region-based clipping improve its ability in bounding the likelihood ratio or the KL divergence?
%Could \pmethodfallback/ perform better in bounding the likelihood ratio compared to PPO, as it devotes?
%How about \pmethodhybrid/ and \pmethodkl/?
\end{question}

In general, our new methods could take a significant effect in restricting the policy compared to PPO.
%First consider a comparison of \pmethodfallback/ and PPO. Both of them have the same clipping range of $\epsilon=0.2$ and the only difference is that \pmethodfallback/ incorporates the \rollback/ mechanism with a negative slope of $-\rbweight=-0.3$.
As can be seen in \Cref{fig_clipfrac}, the proportions of out-of-range likelihood ratios of \pmethodfallback/ are much less than those of the original PPO during the training process.
Besides, the likelihood ratios of \pmethodfallback/ are much smaller than those of PPO (see \Cref{fig_ratiomax}).
%Although \pmethodfallback/ focuses on restricting the likelihood ratio, it seems that the improved ability of restriction on the likelihood ratio also leads to better restriction on the KL divergence.
For the trust region-based clipping methods (\pmethodkl/ and \pmethodhybrid/), the KL divergences are also smaller than {those of PPO} (see \Cref{fig_klmax}).
{The maximum KL divergences of \pmethodhybrid/ are slightly larger than that of \pmethodkl/ on some tasks. This is because \pmethodhybrid/ retains the term of the likelihood ratio even when the policy is out of the trust region, which could push the KL divergences to increase. However, maintaining the likelihood term could benefit the policy performance of interest, as we will show below.}
%Especially, \pmethodkl/ possesses the enhanced restriction ability on the KL divergence even it does not incorporate the \rollback/ mechanism.

\subsection{The Effect on Policy Performance}

\Cref{tab_reward_hit} lists learning speed and final rewards on Mujoco tasks.
% for our variants in comparison with PPO and two state-of-the-art methods. 
\Cref{fig_rew} and \Cref{fig_rew_atari} plots episode rewards during the training process on Mujoco and Atari tasks, respectively.

In general, our \pmethodhybrid/ method, combining both the rollback operation and the trust region-based clipping, significantly outperform the original PPO on hard tasks characterized by high dimension (e.g., Walker2d with $|{\cal A}|=6$), both in terms of learning speed and final rewards; and \pmethodhybrid/ is comparable to PPO on the easier tasks with low dimension (e.g., Reacher with $|{\cal A}|=2$).
{Notably, on Mujoco tasks like Walker2d and Hopper, \pmethodhybrid/ requires almost 60\% and 50\% timesteps of PPO to hit the thresholds; and it achieves about 15\% and 24\% higher final rewards than PPO does on these tasks.}
{On Atari tasks {like} Breakout, \pmethodhybrid/ achieves almost twice the final rewards of PPO.}
We now investigate the effect of the two newly proposed techniques independently.

%\afterpage{
%
%}

\begin{question}
\ttt{Could the \rollback/ operation benefit policy learning?}
\end{question}

%As shown in the figure, \pmethodfallback/ performs much better than PPO on Enduro and on par with it on most of the tasks.
%Such improvement is not significant as that on Mujoco tasks.
%However, for \pmethodkl/ and \pmethodhybrid/,  these two methods achieve much higher reward than PPO does on 3 tasks and on par with it on 4 tasks.
We first consider two groups of comparisons: 
(1) {PPO vs. \pmethodfallback/;
%, both of which uses the likelihood ratio-based metric of policy; 
(2) \pmethodkl/ vs. \pmethodhybrid/}.
%, both of which uses the KL divergence-based metric of policy. 
The only difference within each group is the existence of the rollback operation.
%The effects of the different policy metrics between these two groups will be discussed in the next question.

The results show that the methods with rollback operation outperform the ones without that operation on most of the tasks.
For example, \pmethodhybrid/ achieves fairly better performance than \pmethodkl/ on 5 of 6 Mujoco tasks (see \Cref{fig_rew}) and 5 of 6 Atari tasks (see \Cref{fig_rew_atari}).
\pmethodfallback/ also performs much better than PPO on 4 of 6 Mujoco tasks. 
However, the improvements of \pmethodfallback/ in comparison of PPO are not significant on Atari tasks. 
{One reason is that the ratio-based constraint may lead to bad local optimum when the policy is initialized from a bad solution (as we have discussed in \Cref{sec_related_policymetric}).
While enhancing the ability of restriction may aggravate the issue, especially in the discrete action space tasks.}
%For example, \pmethodfallback/ achieves XX higher final reward on Humanoid and Swimmer, respectively. 

In summary, the compared methods possess different ability to confine the policy and performs differently in practice.
%With increasing ability to confine the generated policy,
According to the restriction ability in ascending order, the methods can be generally sorted as: without clipping (A2C), with loose clipping (PPO-$0.6$), with proper clipping (PPO, \pmethodkl/), with \rollback/ operation (\pmethodfallback/, \pmethodhybrid/).
As we have seen, the performance generally increases as the restriction ability increases. 
These results confirm the {necessity} of restricting the policy difference with the old policy.
Such improvements {may} be considered as a \ttt{justification} of the ``trust region'' theorem --- making the policy less greedy to the evaluated value of another policy (old policy) result in a better policy.

	\begin{figure*}[!t]
	    \centering
		\centerline{
			\includegraphics[width=0.33\linewidth]{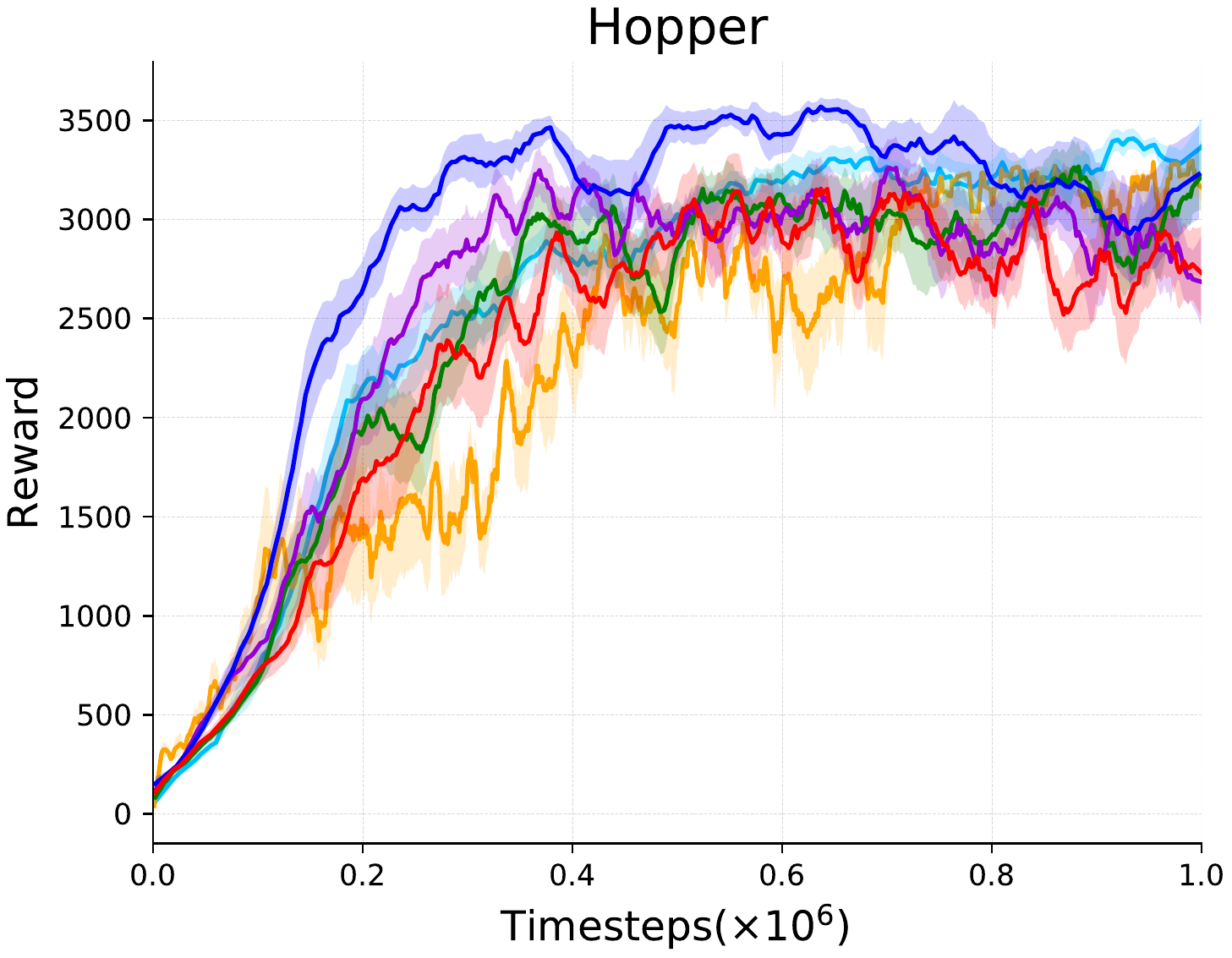}
			\includegraphics[width=0.33\linewidth]{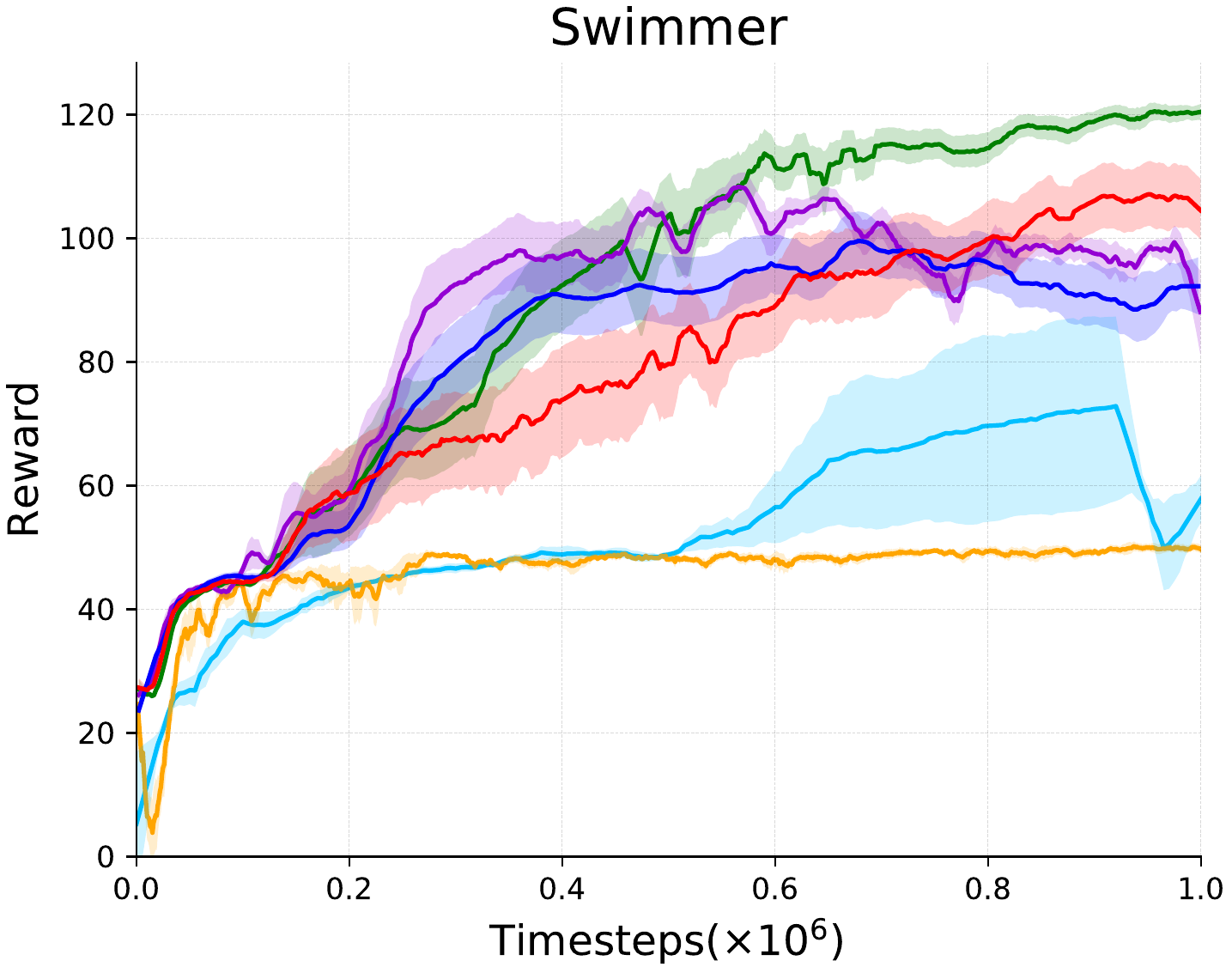}
			\includegraphics[width=0.33\linewidth]{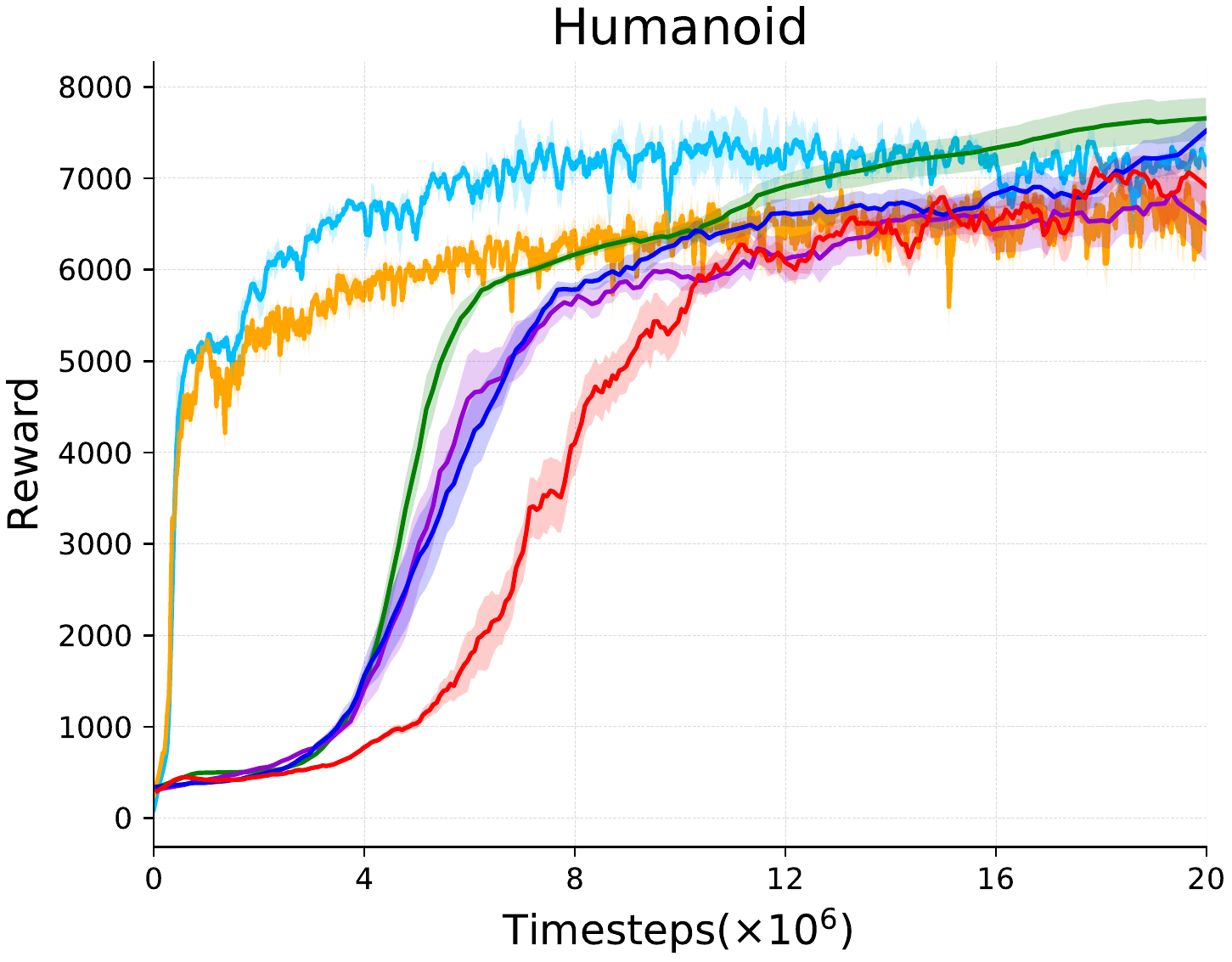}
		}
%		\vspace{0.1in}
		\centerline{
			\includegraphics[width=0.33\linewidth]{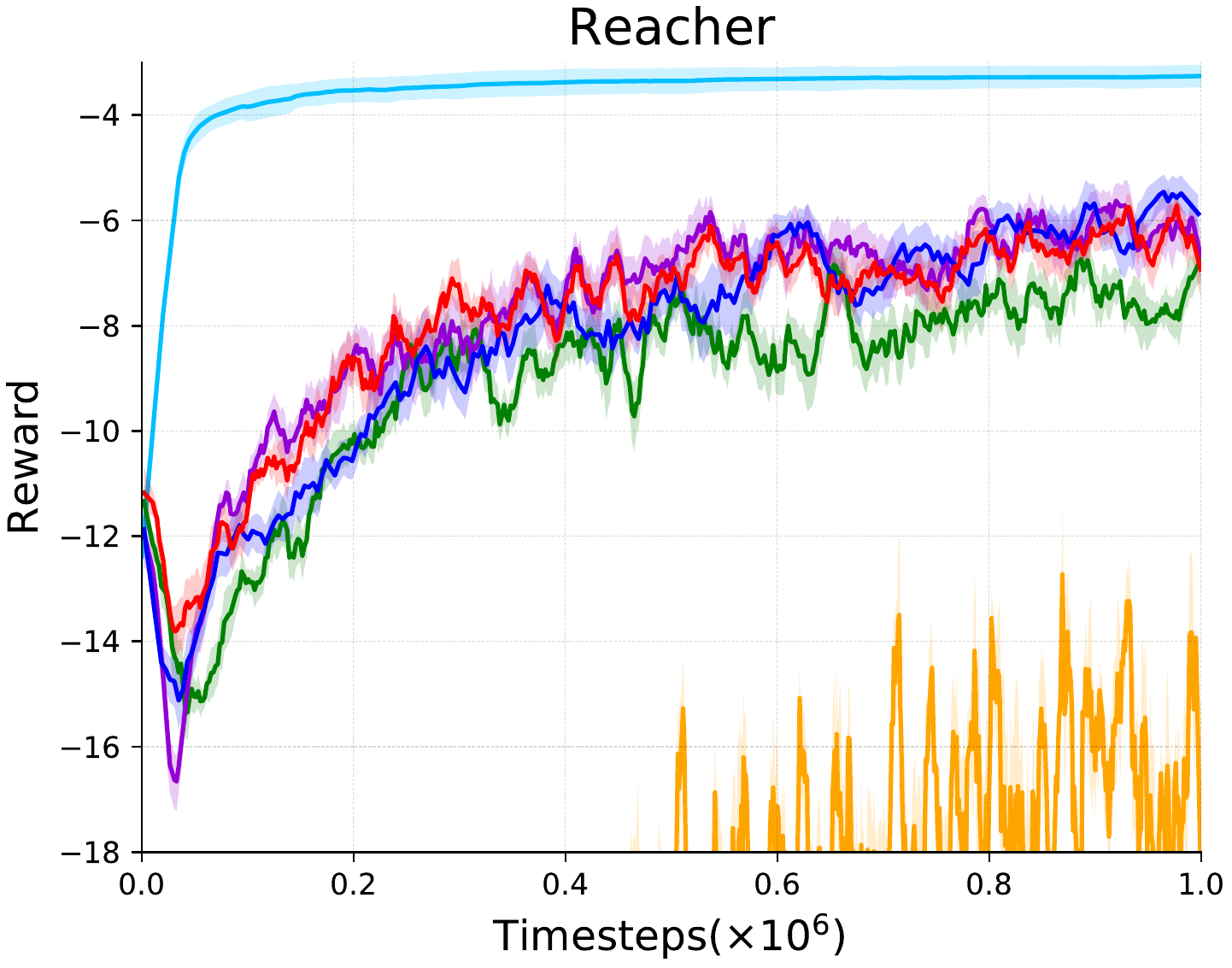}
			\includegraphics[width=0.33\linewidth]{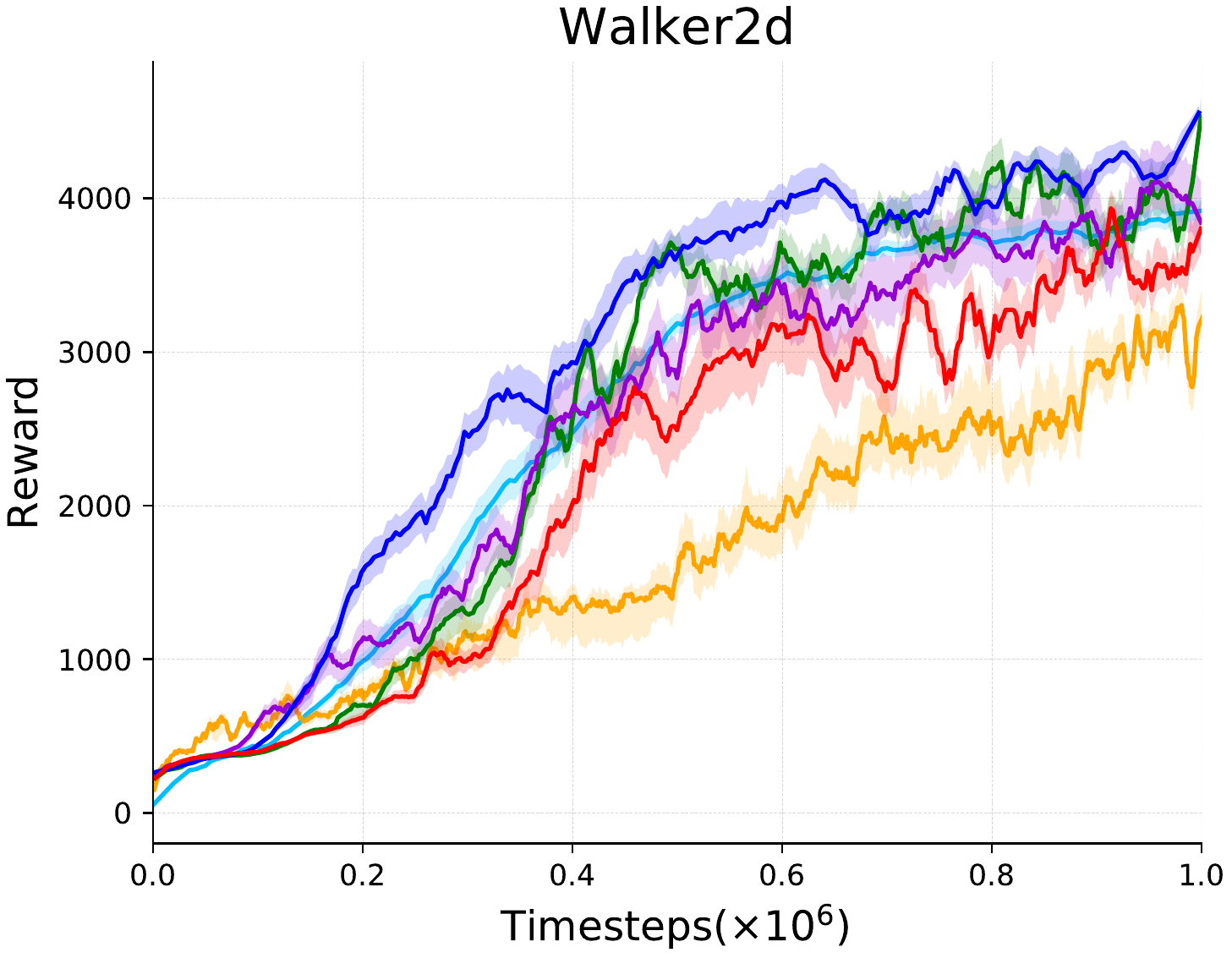}
			\includegraphics[width=0.33\linewidth]{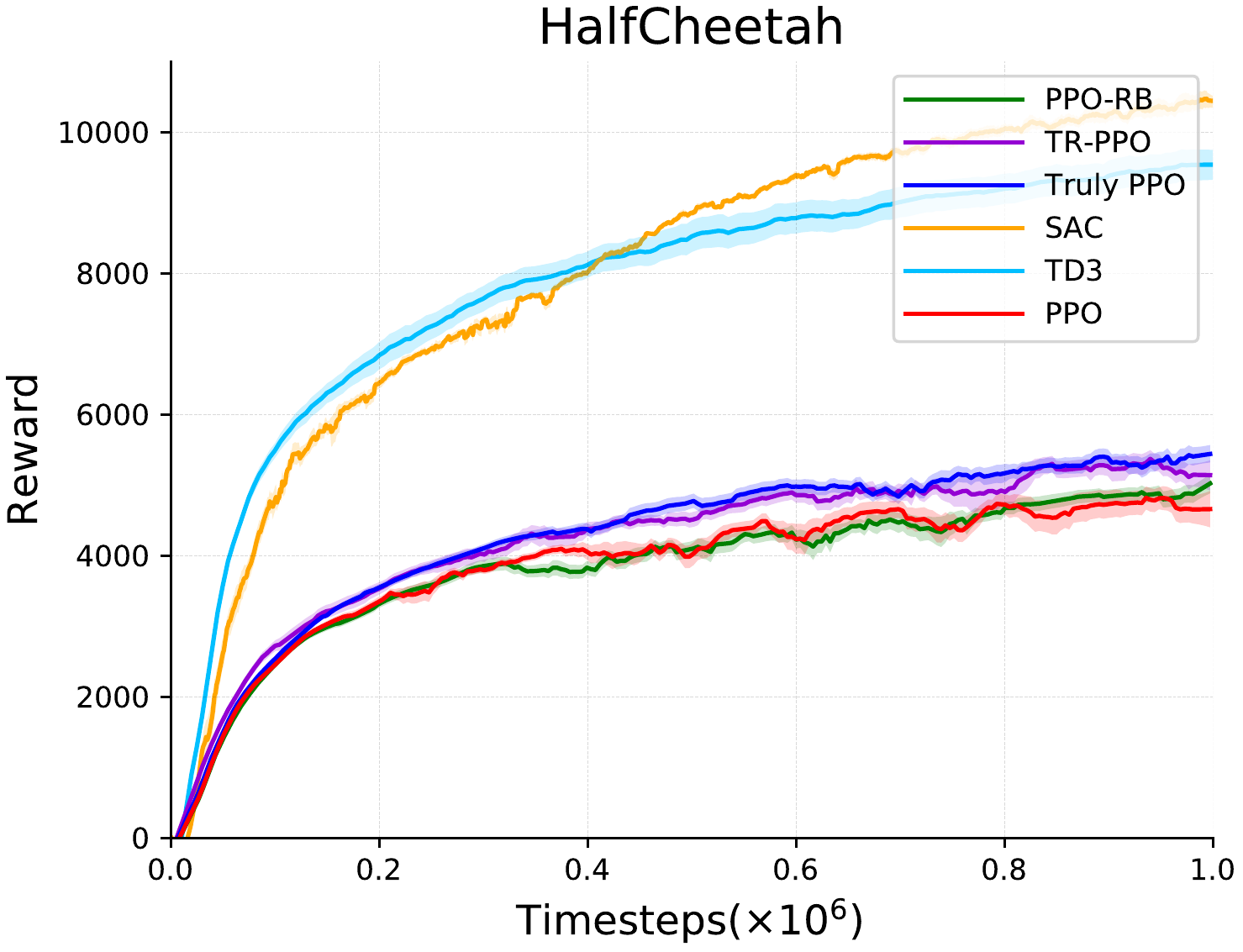}
		}
		\iffastcompile
			\caption{
			}\label{fig_rew}
		\else
		    \caption{
		        Episode rewards of the policy during the training process averaged over \nrandomseed/ random seeds on Mujoco Tasks, comparing our methods with PPO and the state-of-the-art methods.
		        The shaded area depicts the mean $\pm$ half the deviation.
		    }\label{fig_rew}
		\fi
	\end{figure*}
	
	\afterpage{
		\begin{table*}[!tp]
			\newlength{\widthefficiency}
			\setlength{\widthefficiency}{0.6in}
			\setlength{\tabcolsep}{0pt} 
			\centering
			\scriptsize
				\begin{tabular}{@{}
					+m{0.7in}<{\centering}Ym{0.6in}<{\centering}
					Ym{\widthefficiency}<{\centering}Ym{\widthefficiency}<{\centering}Ym{\widthefficiency}<{\centering}Ym{\widthefficiency}<{\centering}
           Ym{\widthefficiency}<{\centering}Ym{\widthefficiency}<{\centering}
           Ym{\widthefficiency}<{\centering}Ym{\widthefficiency}<{\centering}
					@{}}
					\toprule
				\rowstyle{}& \multicolumn{9}{c}{(a) Timesteps to hit threshold ($\times 10^3$) }    \\
					\noalign{\smallskip}
           \rowstyle{\footnotesize} & Threshold  & \pmethodfallback/   & \pmethodkl/   & \pmethodhybrid/ & PPO & SAC & TD3 & TRPO & PPO-penalty 
                     \\
					\midrule 
		        Humanoid &  5000 &  5059 &          5373 &          5920 &  7241 &  \textbf{343} &          465 &  6498 &  13096 \\
		         Reacher &    -5 &   203 &           157 &           183 &   178 &           265 &  \textbf{35} &    70 &    301 \\
		         Swimmer &    90 &   374 &  \textbf{276} &           411 &   564 &             /\footnotemark &            / &   331 &    507 \\
		     HalfCheetah &  3000 &   152 &           128 &           133 &   148 &            53 &  \textbf{41} &     / &    220 \\
		          Hopper &  3000 &   240 &           198 &  \textbf{166} &   267 &           209 &          211 &   185 &    188 \\
		        Walker2d &  3000 &   337 &           362 &  \textbf{244} &   454 &           610 &          380 &   350 &    393 \\
					\bottomrule \\
					\toprule 
					\rowstyle{} & \multicolumn{9}{c} {(b) Averaged rewards}   \\
           			\rowstyle{\footnotesize} &   & \pmethodfallback/   & \pmethodkl/   & \pmethodhybrid/ & PPO & SAC & TD3 & TRPO & PPO-penalty \\
						\midrule
							Humanoid &   &  \textbf{7344.4} &  6511.3 &           6856.7 &  6620.9 &           6535.9 &         7182.1 &  4254.4 &  3612.3 \\
					  Reacher &   &             -7.8 &    -6.4 &             -6.4 &    -6.7 &            -17.2 &  \textbf{-3.3} &      -6 &    -6.8 \\
					  Swimmer &   &   \textbf{116.1} &    98.6 &             94.1 &   100.1 &               49 &           65.4 &   107.2 &    94.1 \\
											 HalfCheetah &   &           4617.8 &  5047.3 &           5158.7 &  4600.2 &  \textbf{9987.1} &         9191.5 &  1840.3 &  4868.3 \\
				   Hopper &   &             3014 &  2963.4 &  \textbf{3263.7} &  2848.9 &           3020.7 &         3256.1 &  2757.2 &  3018.7 \\
						 Walker2d &   &           3849.9 &  3635.4 &  \textbf{4084.7} &  3276.2 &             2570 &         3721.1 &  3431.7 &    3524 \\
					\bottomrule
				\end{tabular}
				\normalsize
				\caption{
				a) Timesteps to hit thresholds within 1 million timesteps {(except Humanoid with 20 million)}. b) Averaged rewards over last 40\% episodes during training process. 
		%		(a) shows the timesteps to hit a prescribed threshold. (b) shows the maximum attained episode rewards.
				}\label{tab_reward_hit}
		\end{table*}
		\footnotetext{The method did not reach the reward threshold within the required timesteps on all the seeds.}
	}

%For the the ratio-based methods
%For example, it achieved much higher reward on Swimmer, 

%As listed in \Cref{tab_reward_hit} (a), all the three new methods require less samples to hit the threshold on all tasks. 
%Especially, these new methods requires about 3/5 samples of PPO on Hopper, Walker2d and Swimmer.
%As \Cref{tab_reward_hit} (b) lists, all the three proposed methods achieve much higher episode rewards than PPO does on Walker2d, Reacher, Swimmer, HalfCheetah
%(while performs fairly good as PPO on the remaining tasks). 

%\ttt{On the other hand}, the restriction on policy makes it less susceptible to the noise in the learning process (e.g., the inaccurate advantage value).

%We also compare \pmethodkl/ and \pmethodhybrid/ to investigate the importance of the \rollback/ operation in \klbased/. As can be seen from the results, 
%in \klbased/, 
%The \rollback/ mechanism could also benefit policy learning for \klbased/ PPOs.
%As can be seen from these results, \pmethodhybrid/ also performs much better than \pmethodkl/ in learning policy.
%However, the \rollback/ coefficients in \klbased/ PPOs 
%The similar comparison results of whether employing the \rollback/ mechanism can also be seen in the comparison of .

%\subsection{\new{The policy metrics}}

\begin{question}
%How well does the proposed methods perform on restring policy compared to each other?
%Could better restriction of policy benefit policy learning?
%Does better restricting policy means better policy learning?
How well do the likelihood \ratiobased/ methods perform compared to the \klbased/ ones?
\end{question}
%As described before, the \klbased/ performs clipping (or \rollback/) or not depends on the trust region criteria, including the \pmethodkl/ and the \pmethodhybrid/. 
%While the \ratiobased/ makes a decision depends on the likelihood ratio criteria, including the PPO and the \pmethodfallback/.
%As previously discussed in \Cref{sec_method_kl}, the \ratiobased/ and \klbased/ focus on a different measure of the difference between two policies.

We then consider two groups of comparisons: 
(1) {PPO vs. \pmethodkl/;
%, both of which do not incorporate the rollback operation;
(2) \pmethodfallback/ vs. \pmethodhybrid/}.
%, both of which employ the rollback operation.
The only difference within each group is whether the method uses constraint by ratio-based metric or KL divergence-based (trust region-based) one.

%\ttt{Note that the likelihood ratio (or the KL divergence) of these two variants cannot be compared with each other, since these two variants \new{devote} to different target.}
%Hence we mainly compare how well they perform in learning policy. 

The results imply that the \klbased/ methods perform more sample efficient than the \ratiobased/ ones, and they usually obtain a better policy on most of the tasks.
% from the following three perspectives.
%Consider a comparison between \pmethodkl/ and PPO.
%the maximum episode rewards, the sample efficiency and its generalization ability of the trained stochastic policy to a deterministic policy.
As listed in \Cref{tab_reward_hit} (a), \pmethodkl/ learns faster than PPO does on almost all the 6 Mujoco tasks. 
For example, \pmethodkl/ requires almost half of the episodes than PPO does on Swimmer, and the reductions are more than 30\% on Hopper and Walker2d.
Similar results can be seen in the comparison of \pmethodhybrid/ with \pmethodfallback/.
Besides, \pmethodkl/ and \pmethodhybrid/ achieve much higher reward than PPO and \pmethodfallback/ do on 4 of the 6 Mujoco tasks (see \Cref{tab_reward_hit} (b) and \Cref{fig_rew}) and 5 of 6 Atari tasks (see \Cref{fig_rew_atari}). 

As we have discussed in \Cref{sec_related_policymetric}, the constraints with different metrics have different preferences on the actions, leading to the unusual algorithmic behavior.
For the ratio-based constraint, the larger likelihood of the action is allowed to update more, making the policy be less random and explore less.
While the trust region-based constraint does not has such bias and usually could explore more.
To show this, we plot the entropies during the training process in \Cref{fig_entropy}.
The entropies of \klbased/ methods tend to be remarkably larger than those of \ratiobased/ on all tasks.
These results confirm the effect on learning behavior with different policy metrics.

%Nevertheless, as shown in \Cref{fig_entropy}, the policy entropy of \pmethodkl/ and \pmethodhybrid/ are much higher than that of PPO and \pmethodfallback/, which implies that the exploration variance of \klbased/ is also higher.
%Notably on HalfCheetah task, which is known be easily to get trapped into local optimal policy by algorithms \citep{hong2018diversity}, both 

		\begin{figure*}[!t]
		 \centering
		 	\def\widthjjduywhe{0.33}
			\centerline{
				\includegraphics[width=\widthjjduywhe\linewidth]{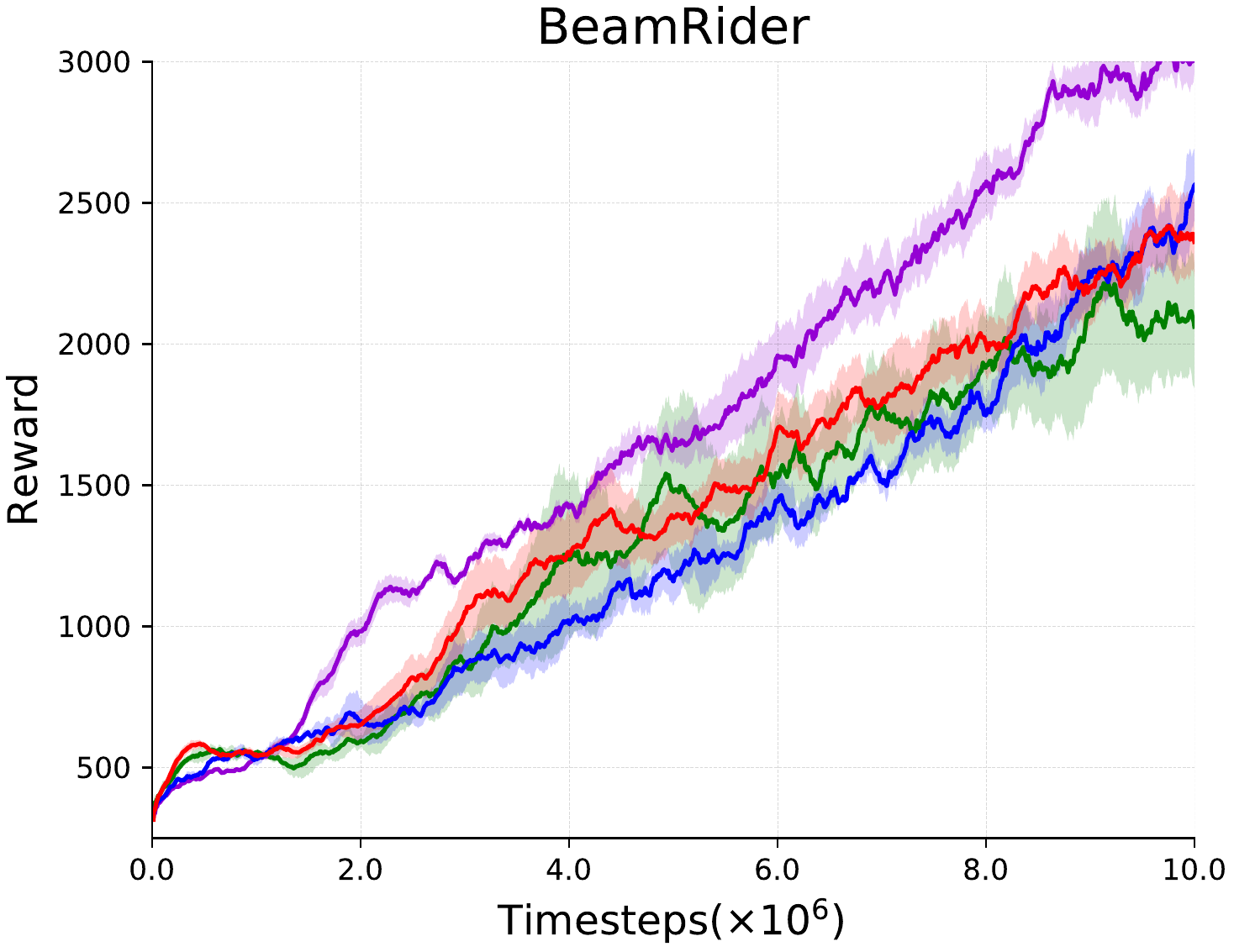}
				\includegraphics[width=\widthjjduywhe\linewidth]{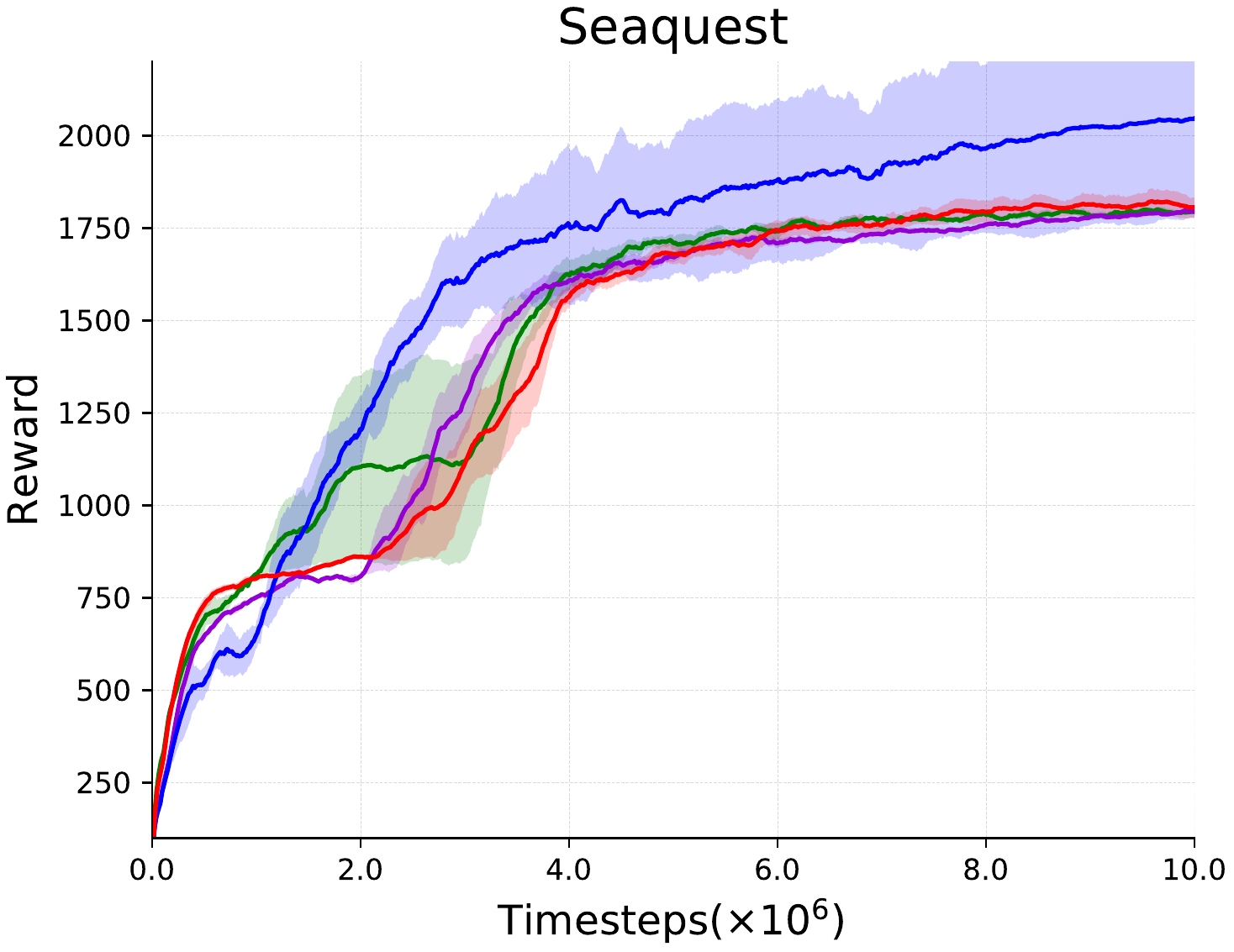}
				\includegraphics[width=\widthjjduywhe\linewidth]{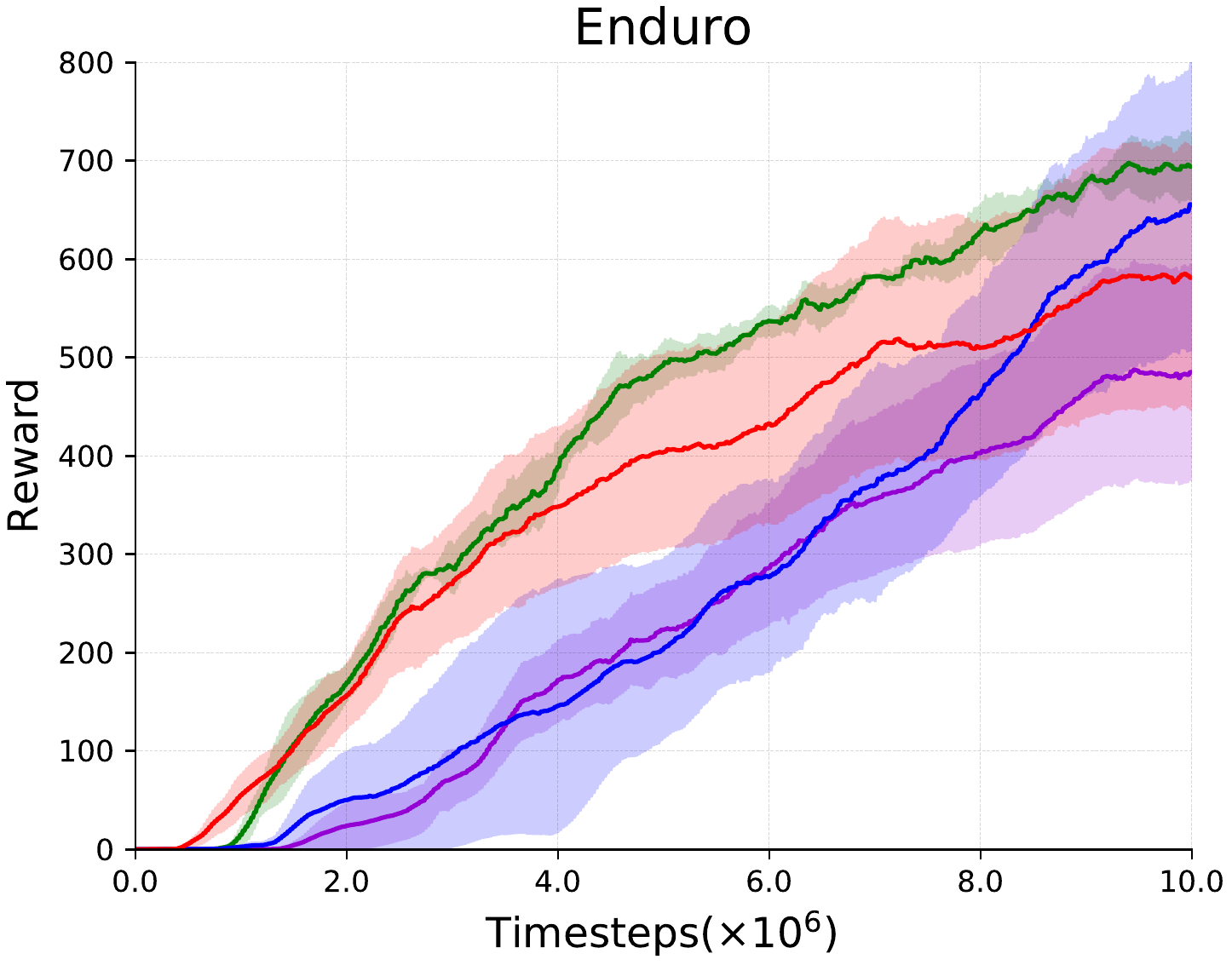}
			}
	%		\vspace{0.1in}
			\centerline{
				\includegraphics[width=\widthjjduywhe\linewidth]{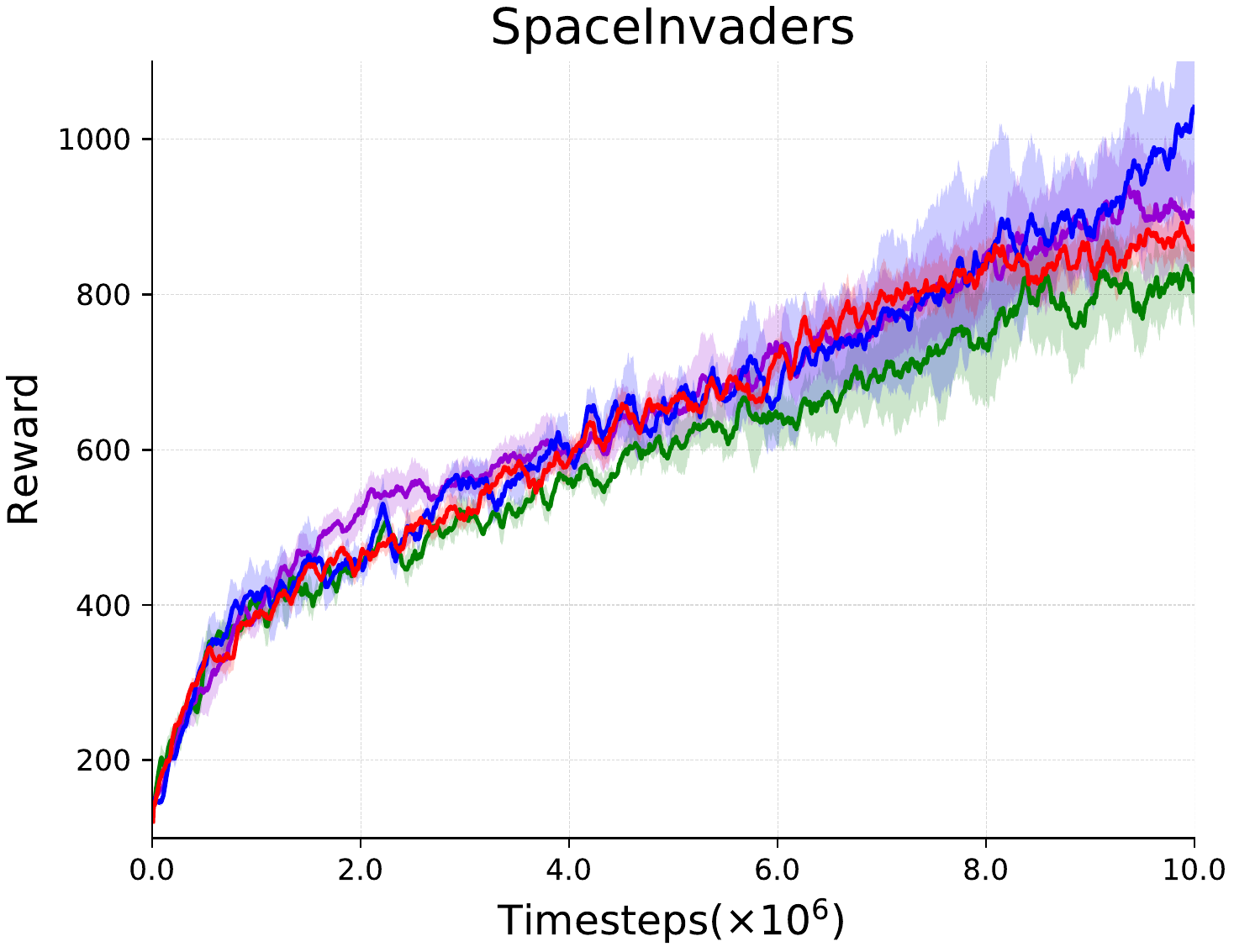}
				\includegraphics[width=\widthjjduywhe\linewidth]{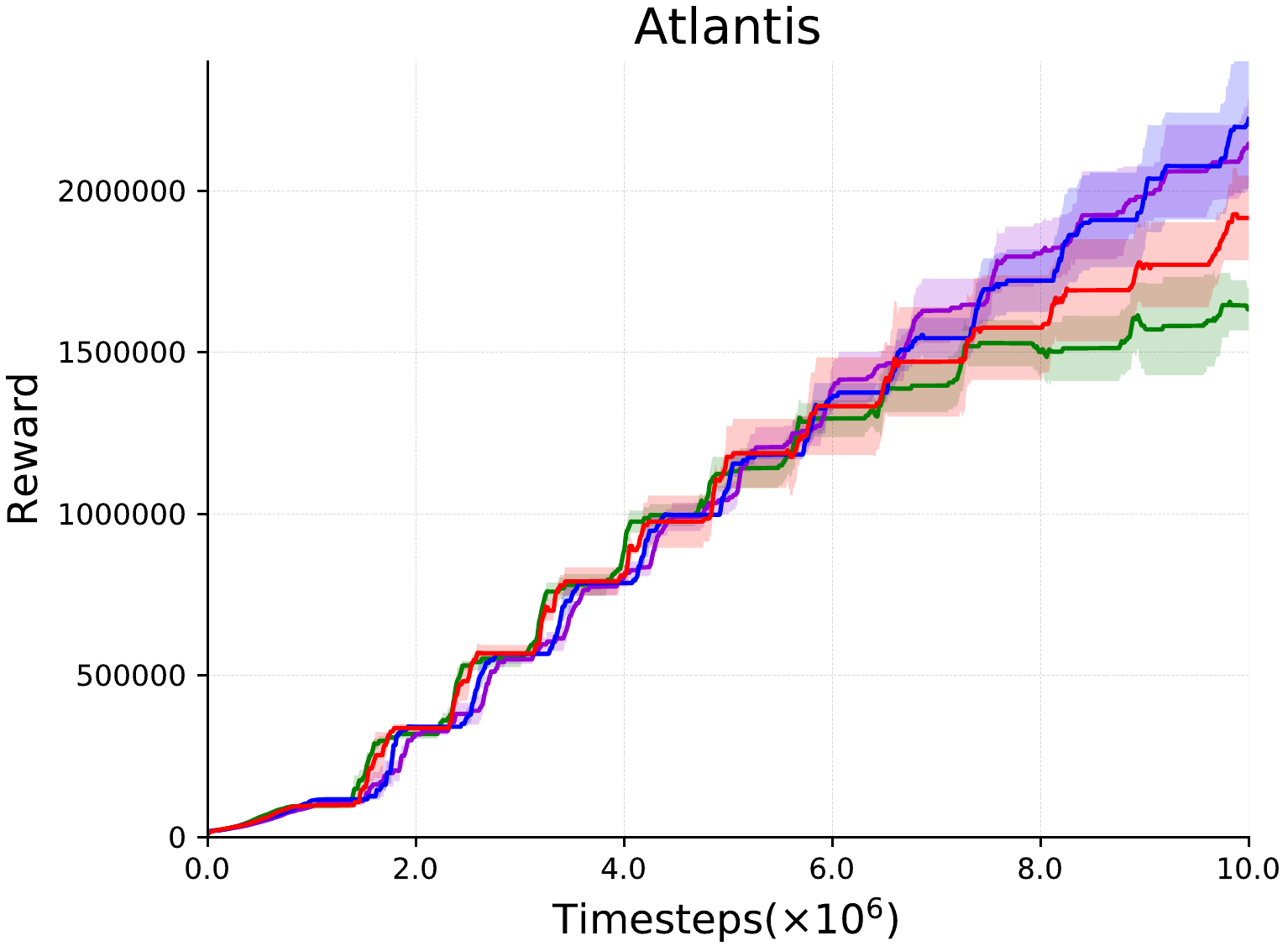}
				\includegraphics[width=\widthjjduywhe\linewidth]{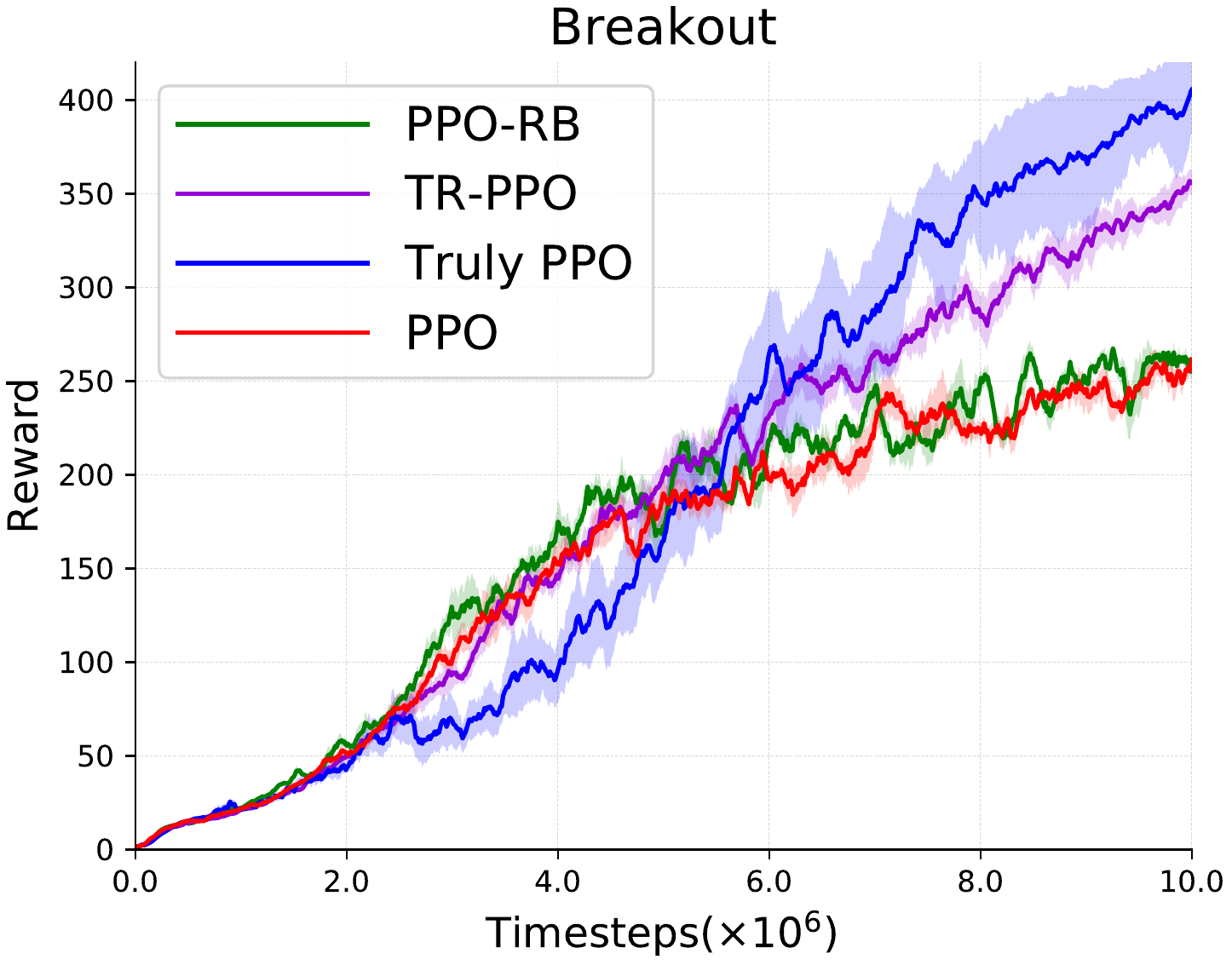}
			}
		 \caption{
		     Episode rewards of the policy during the training process on Atari Tasks. 
		 }\label{fig_rew_atari}
		\end{figure*}

\begin{figure}[!t]
	\centering
  	\centerline{
  		\includegraphics[width=\widthproperty\linewidth]{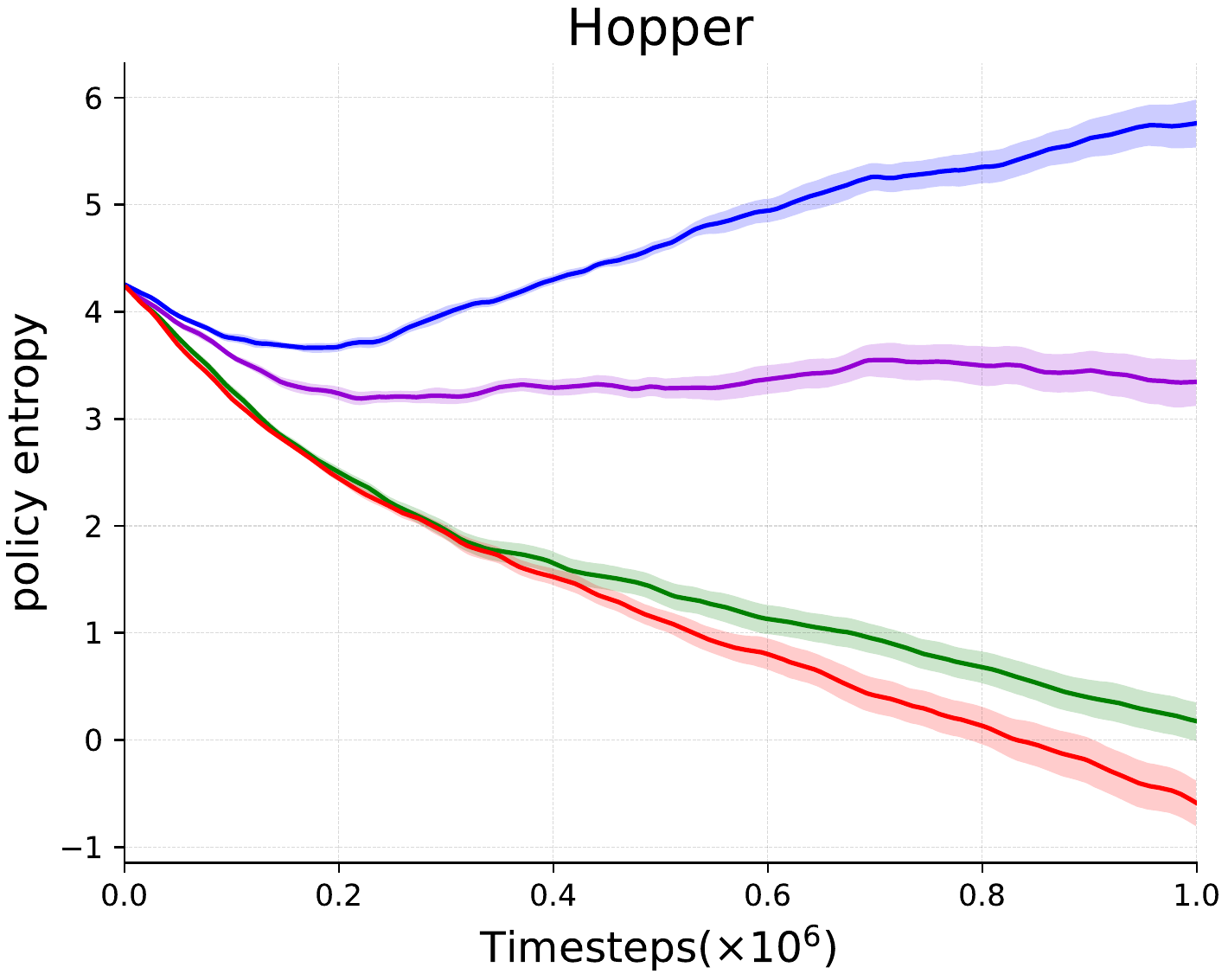}
  		\includegraphics[width=\widthproperty\linewidth]{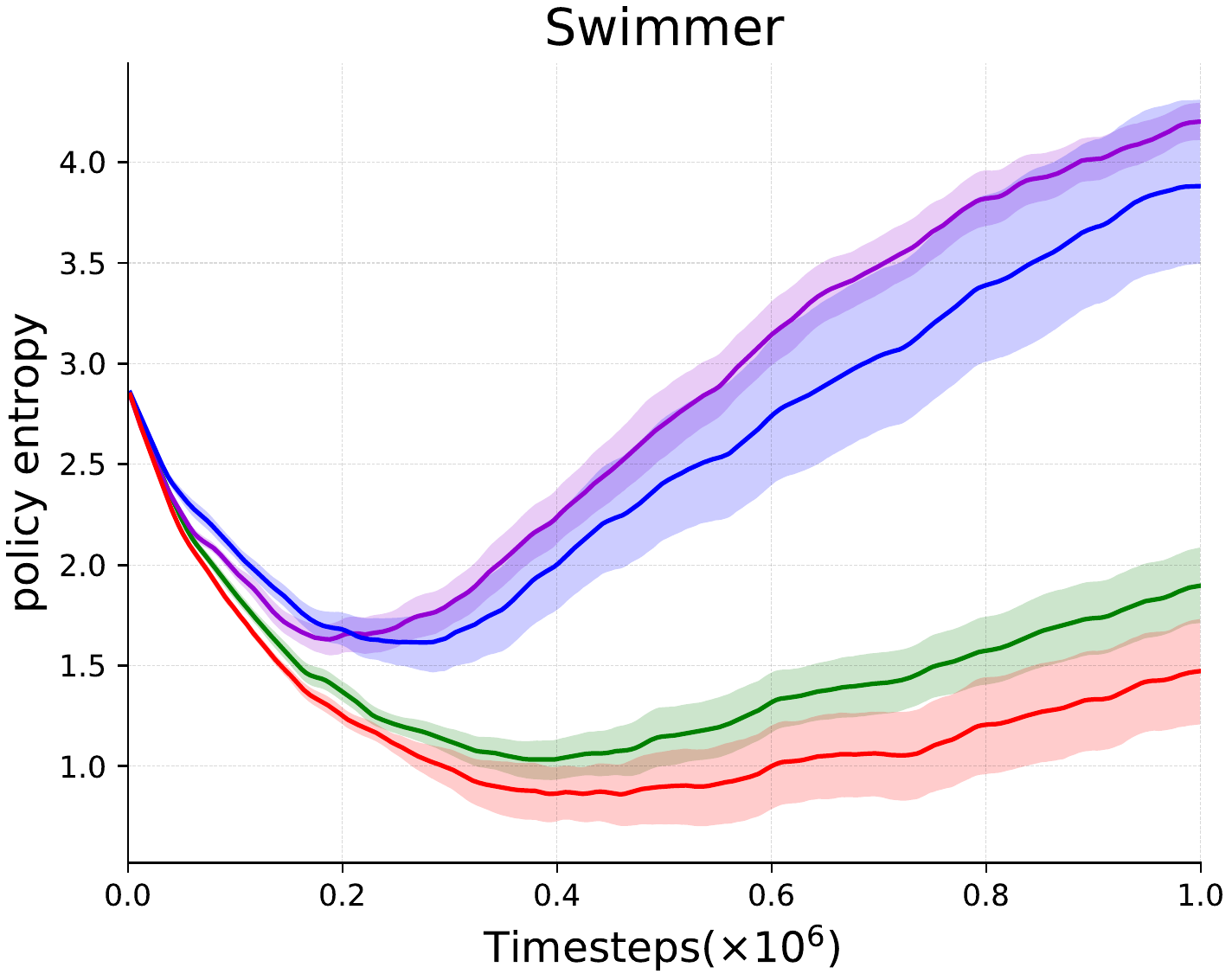}
 		\includegraphics[width=\widthproperty\linewidth]{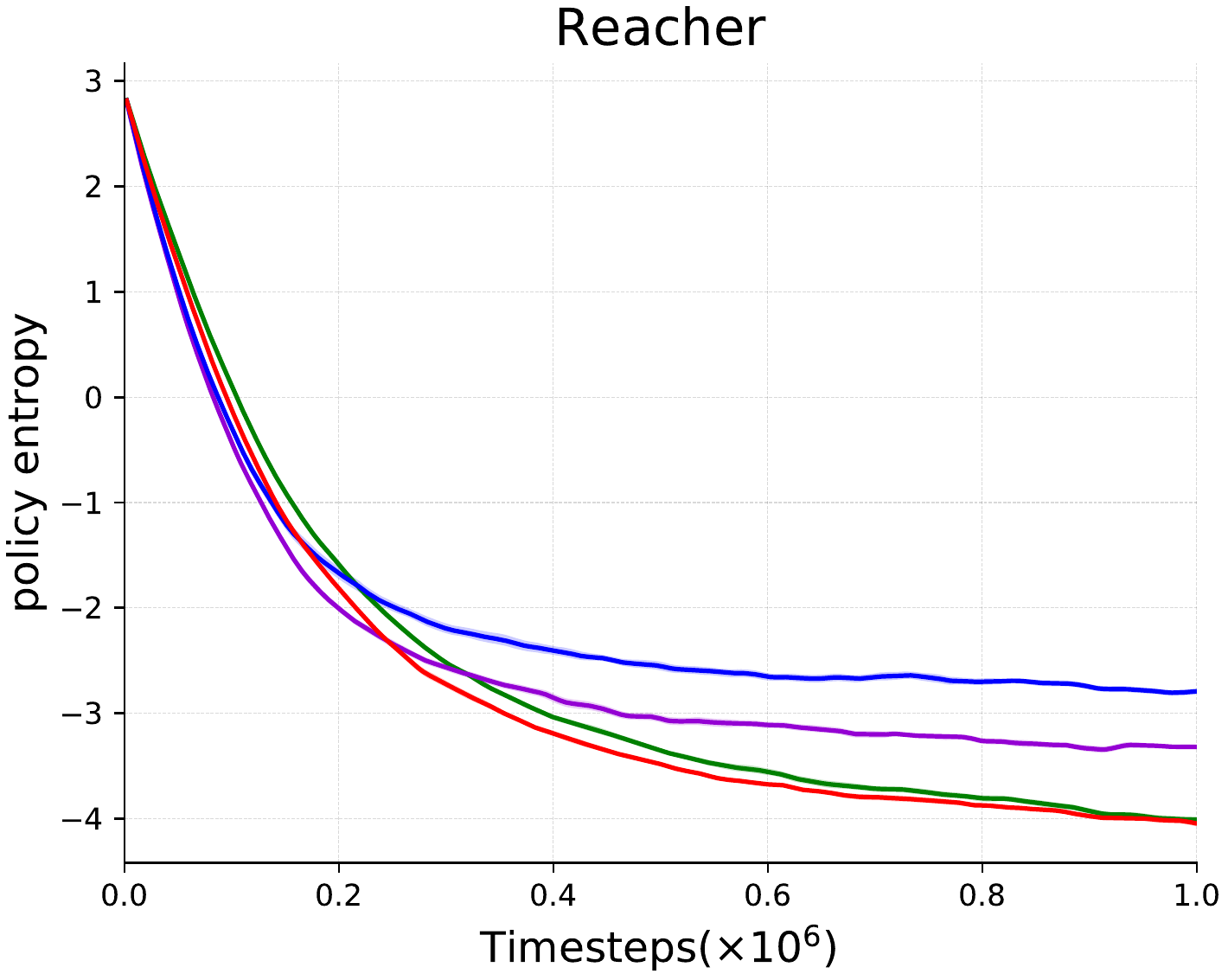}
  		\includegraphics[width=\widthproperty\linewidth]{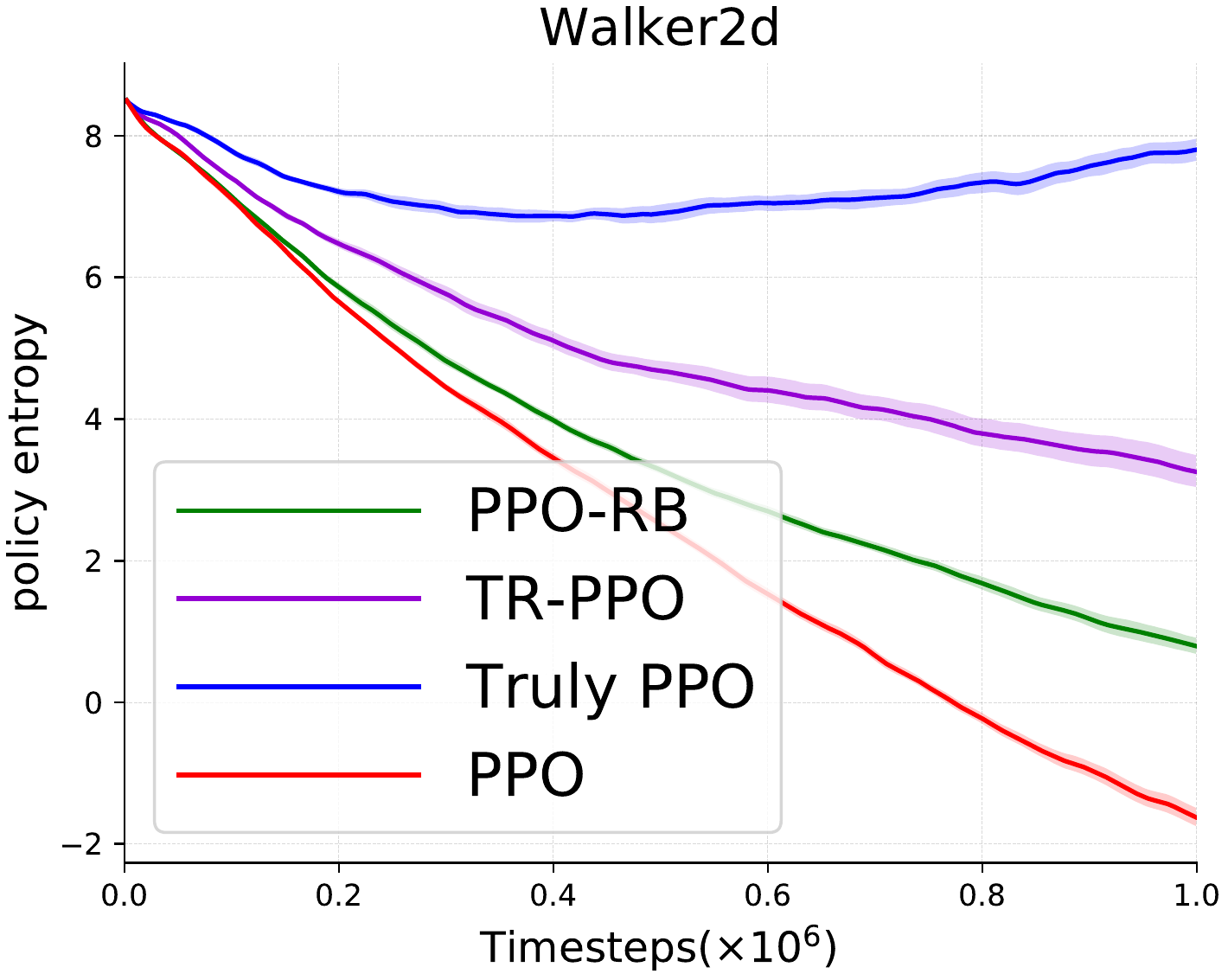}
  	}
%  	\centerline{
%  	}
	\caption[width=\columnwidth]{
		Policy entropy during the training process. 
%		The shaded area depicts the mean $\pm$ the standard deviation.
	}\label{fig_entropy}
\end{figure}

	\begin{figure}[!t]
  		\centering
		\centerline{
			\includegraphics[width=0.33\linewidth]{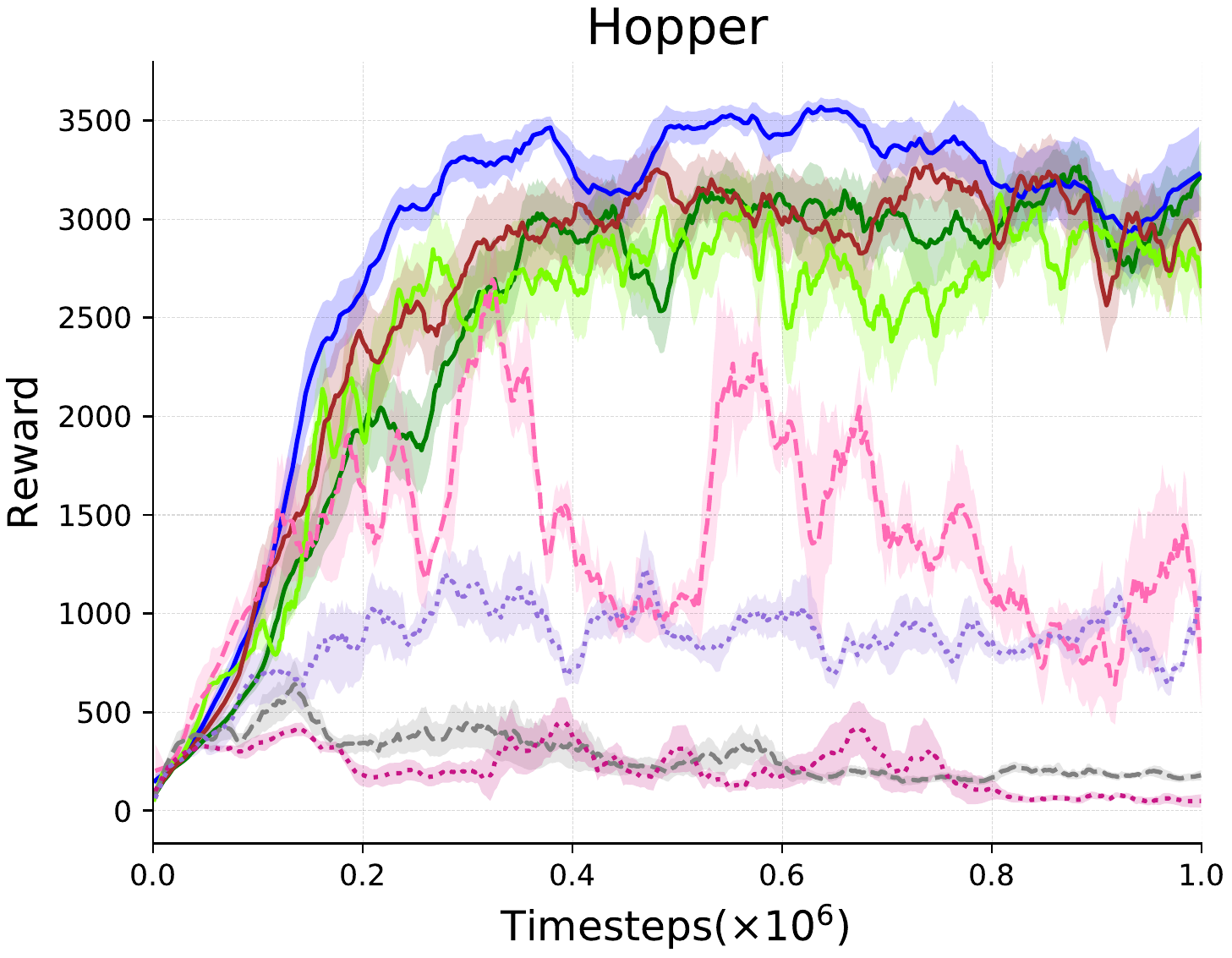}
			\includegraphics[width=0.33\linewidth]{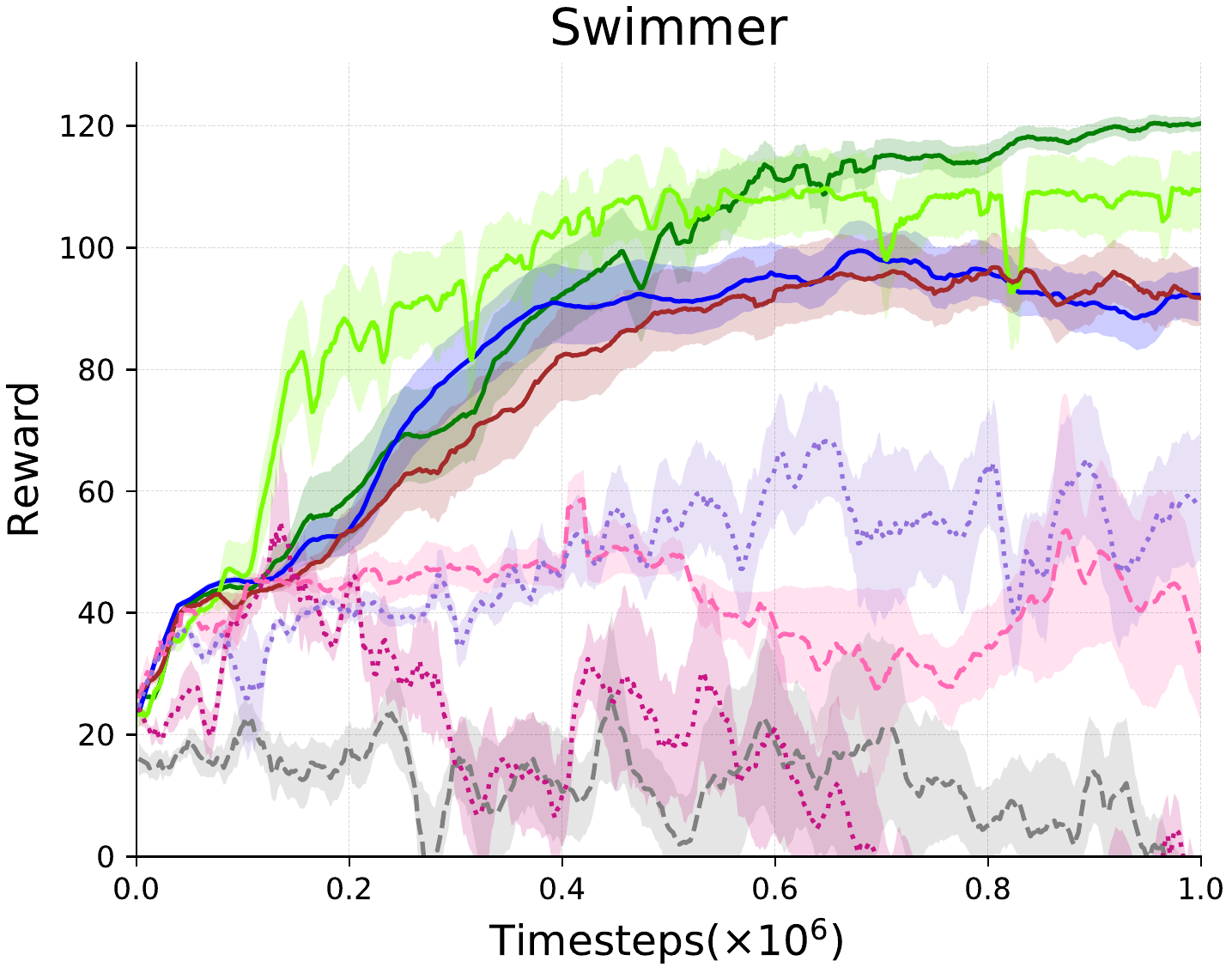}
			\includegraphics[width=0.33\linewidth]{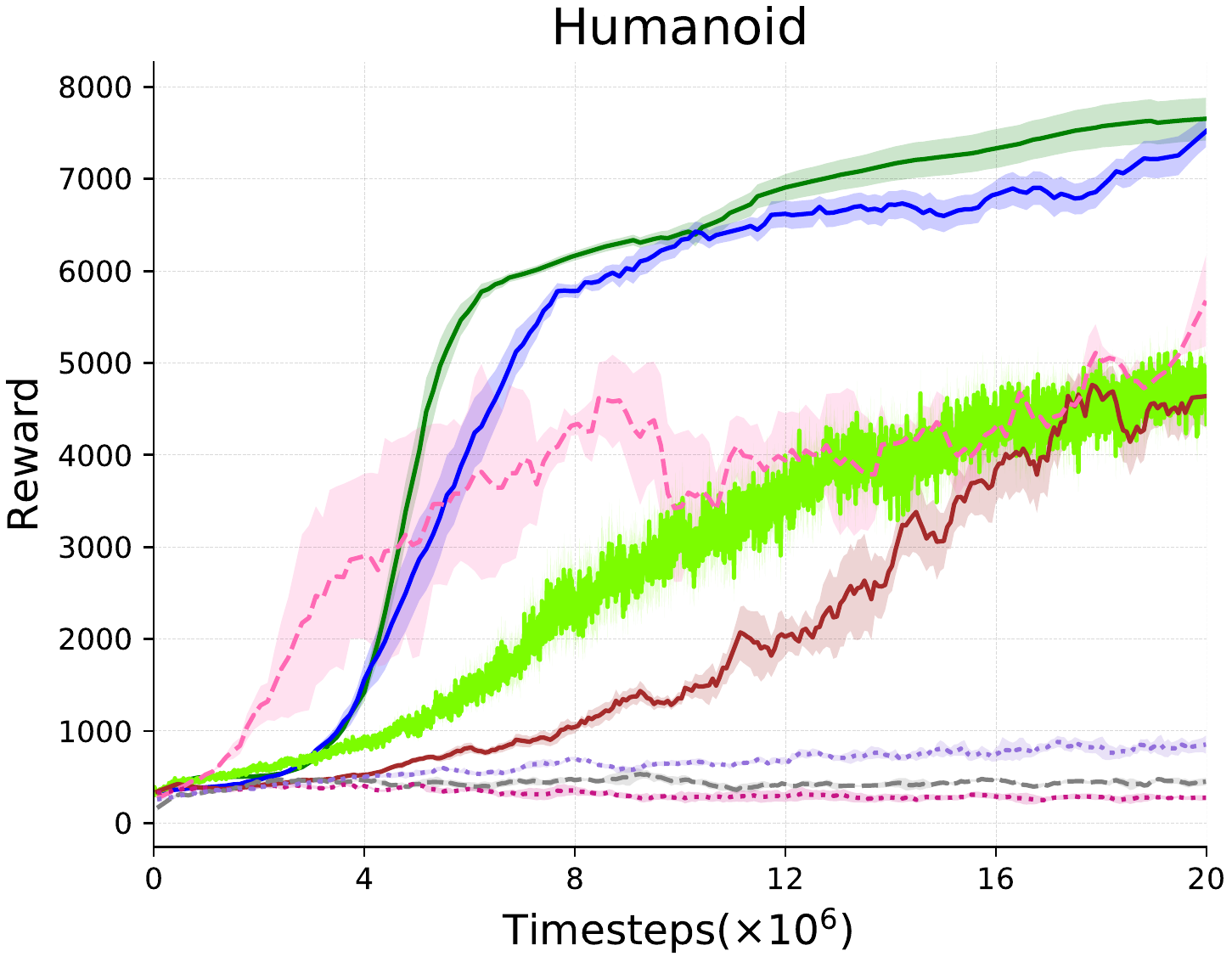}
		}
		\centerline{
			\includegraphics[width=0.33\linewidth]{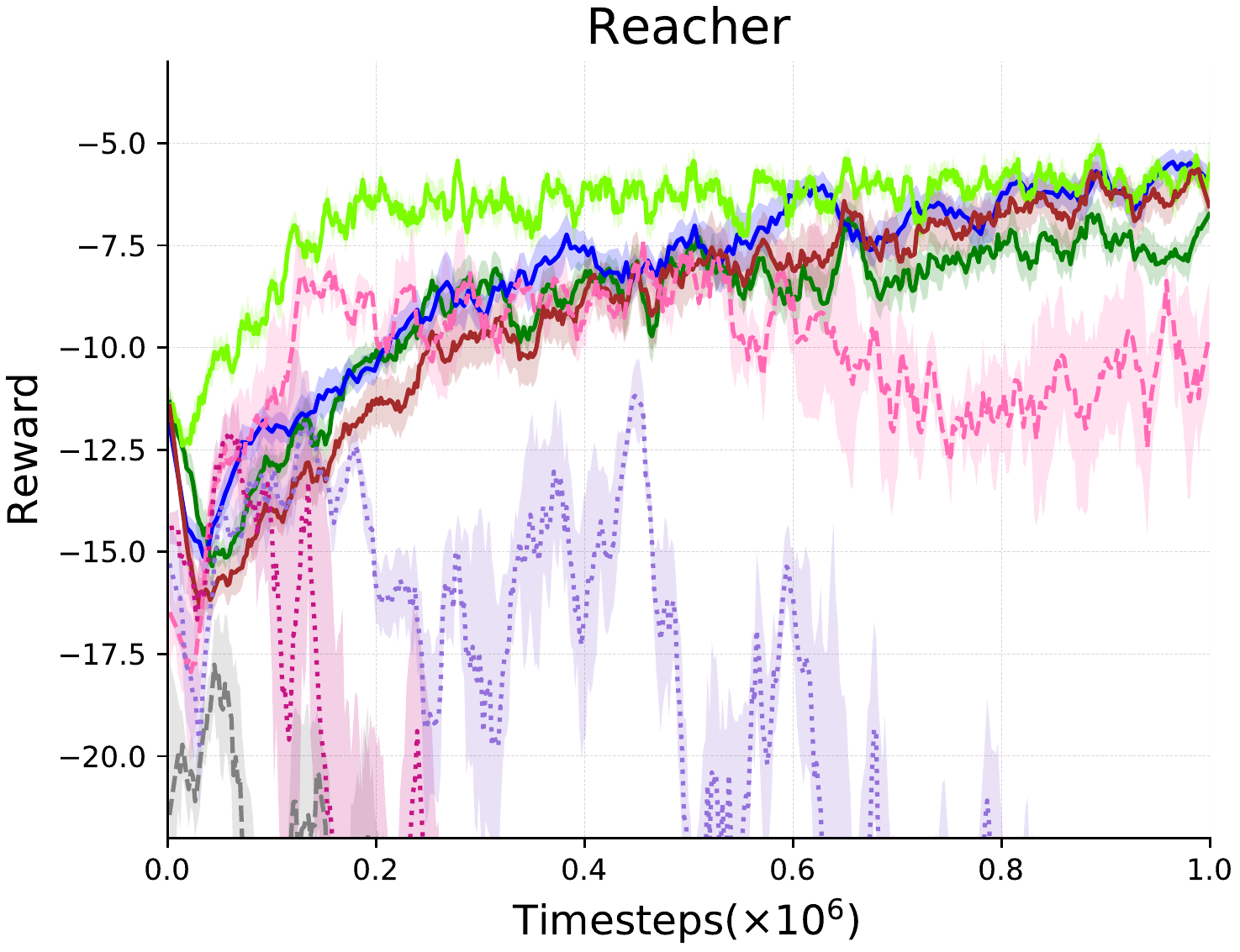}
			\includegraphics[width=0.33\linewidth]{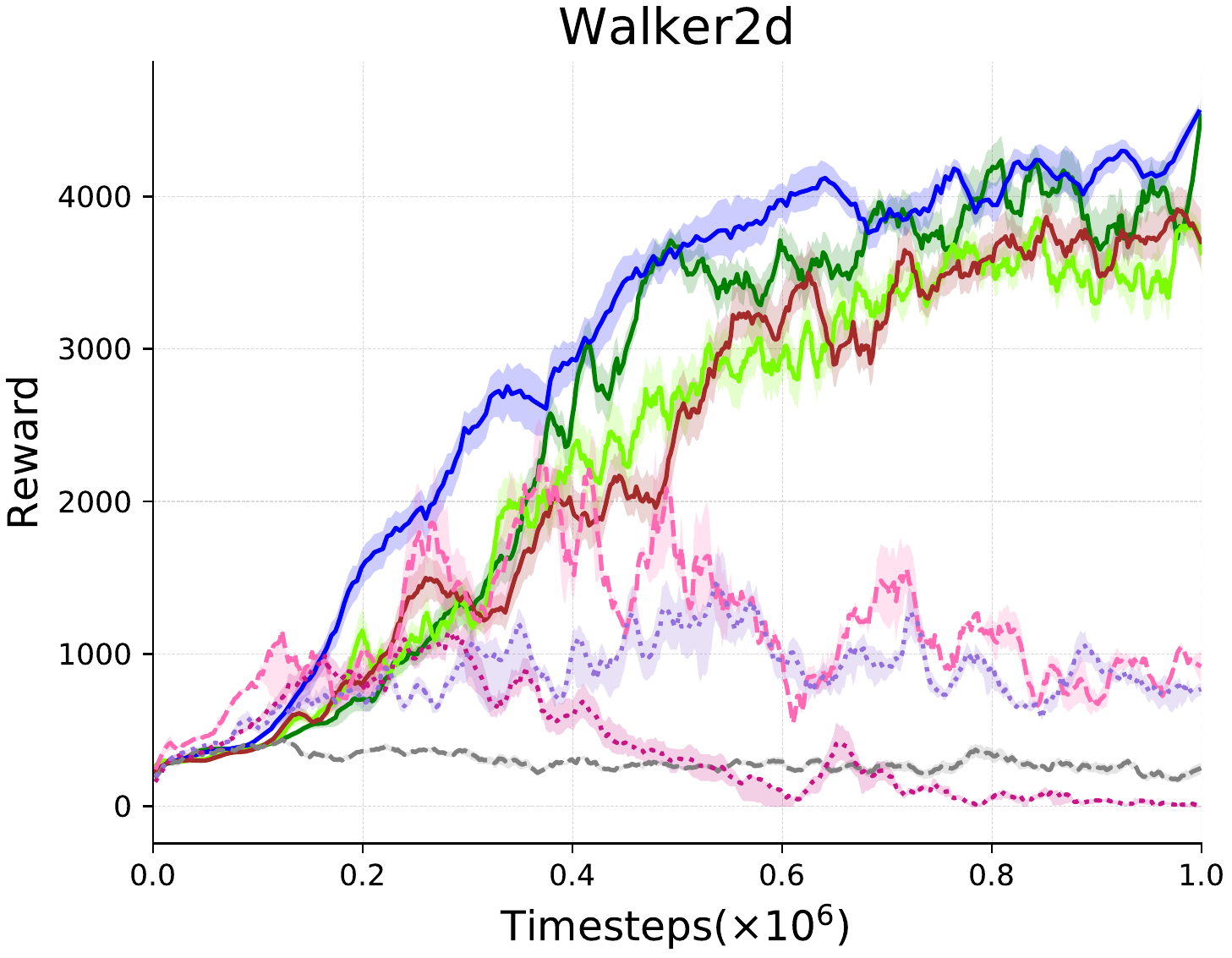}
			\includegraphics[width=0.33\linewidth]{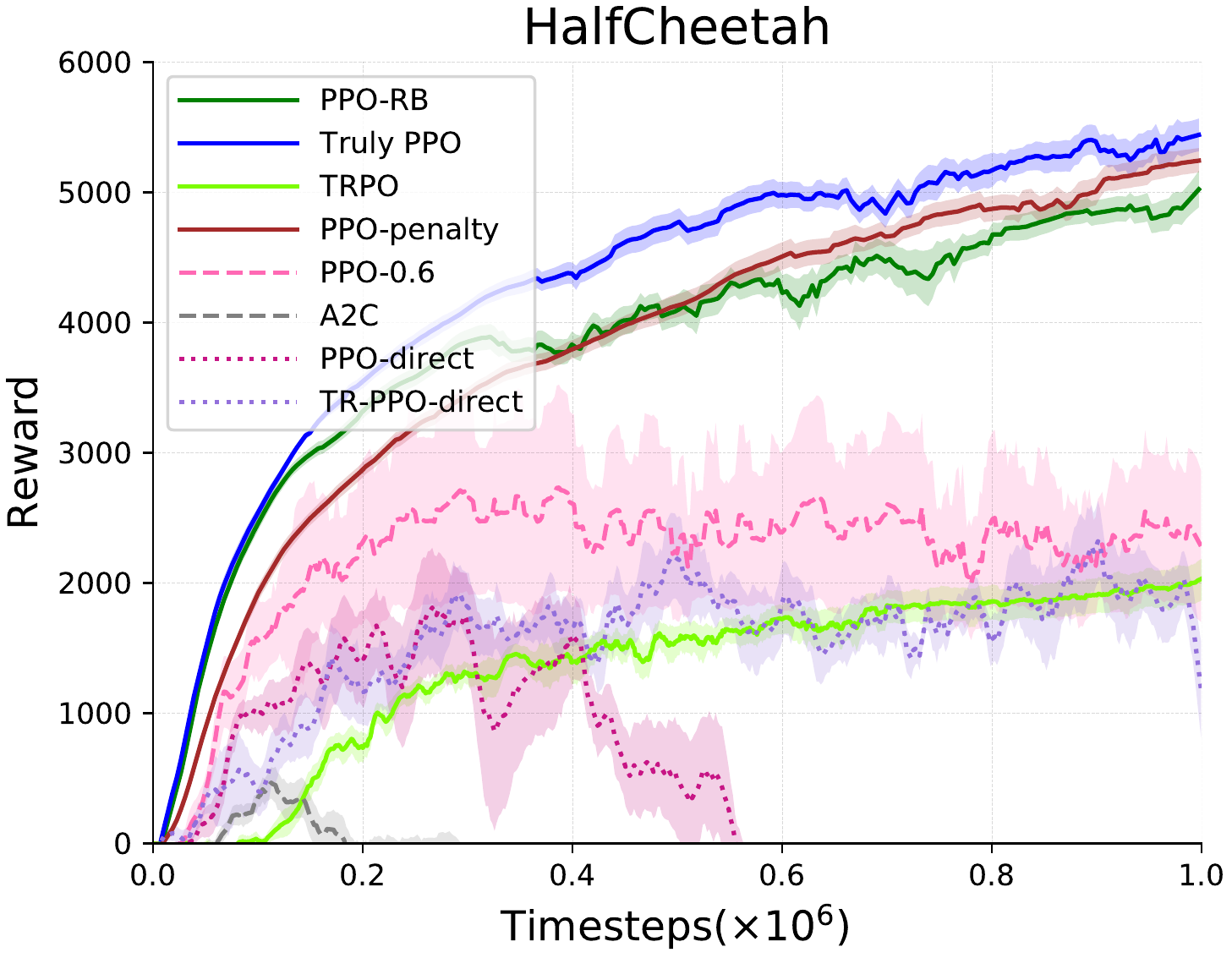}
		}
	  \caption{
		  Comparison with several baseline methods.
	      The learning curves show the episode rewards of the policy during the training process on Mujoco Tasks, 
	%      The shaded area depicts the mean $\pm$ half the deviation.
	  }\label{fig_rew_validate}
	\end{figure}

\subsection{Comparison with Other ``Proximal'' Approaches}
We also compare our variants with TRPO and PPO-penalty which also attempts to enforce proximal constraint.
As shown in \Cref{fig_rew_validate}, our methods achieve higher episode rewards than TRPO on 5 tasks.
TRPO performs better on low dimensional tasks like Reacher and Swimmer (with $dim({\cal A})=2$ ), whereas it performs much worse on high-dimensional tasks, especially on Humanoid (with $dim({\cal A})=17$).
One reason is that the second-order optimization involved in TRPO could be inaccurate, especially on the high-dimensional action space tasks. Such inaccuracy may result in a worse solution.
Compared to PPO-penalty, our methods outperform it on all the tasks, which confirms the superiority of the clipping technique that it is more robust across different tasks and requires relatively less effort on hyperparameter tuning.

\begin{figure}[!t]
	\centering
  	\centerline{
  		\includegraphics[width=\widthproperty\linewidth]{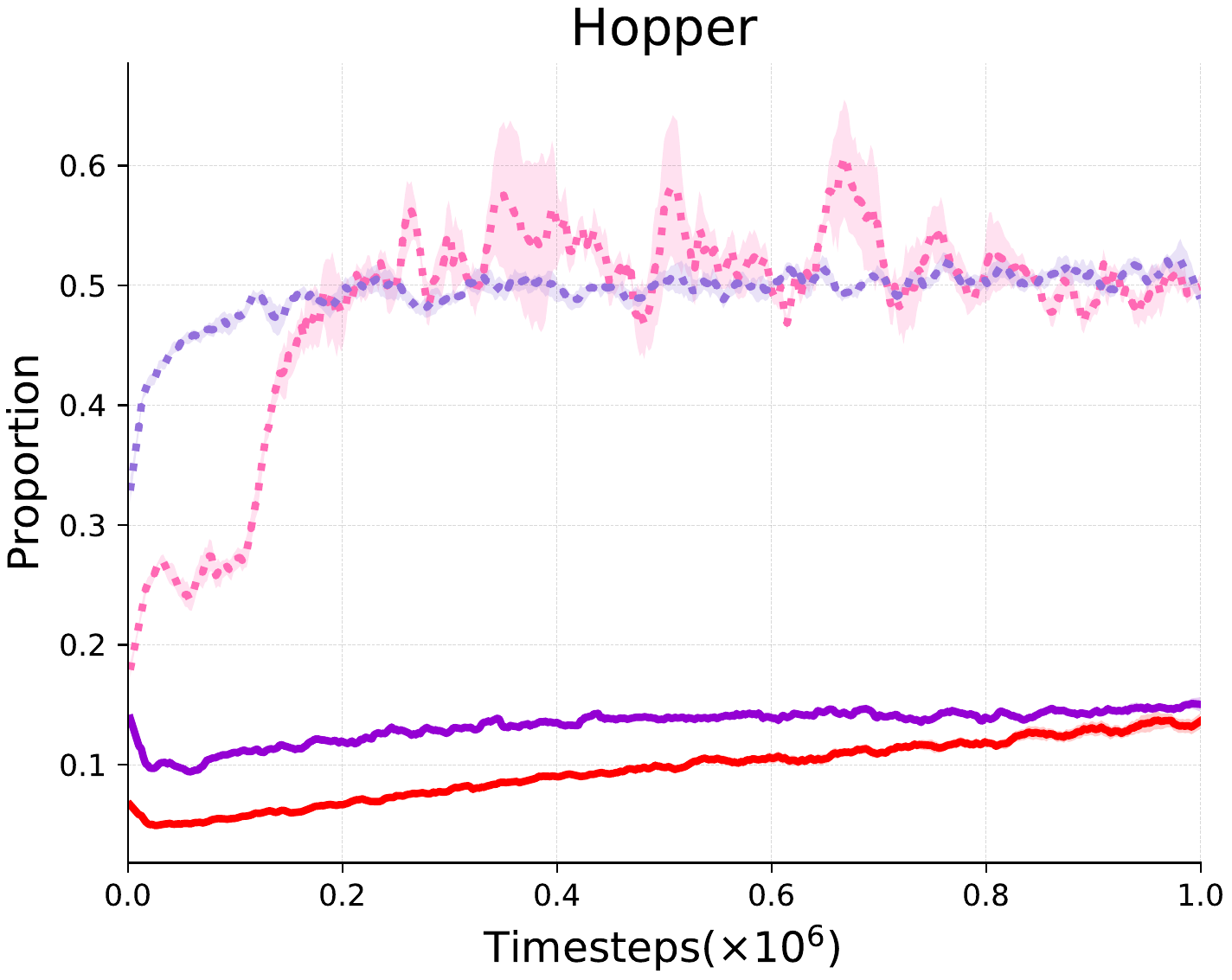}
  		\includegraphics[width=\widthproperty\linewidth]{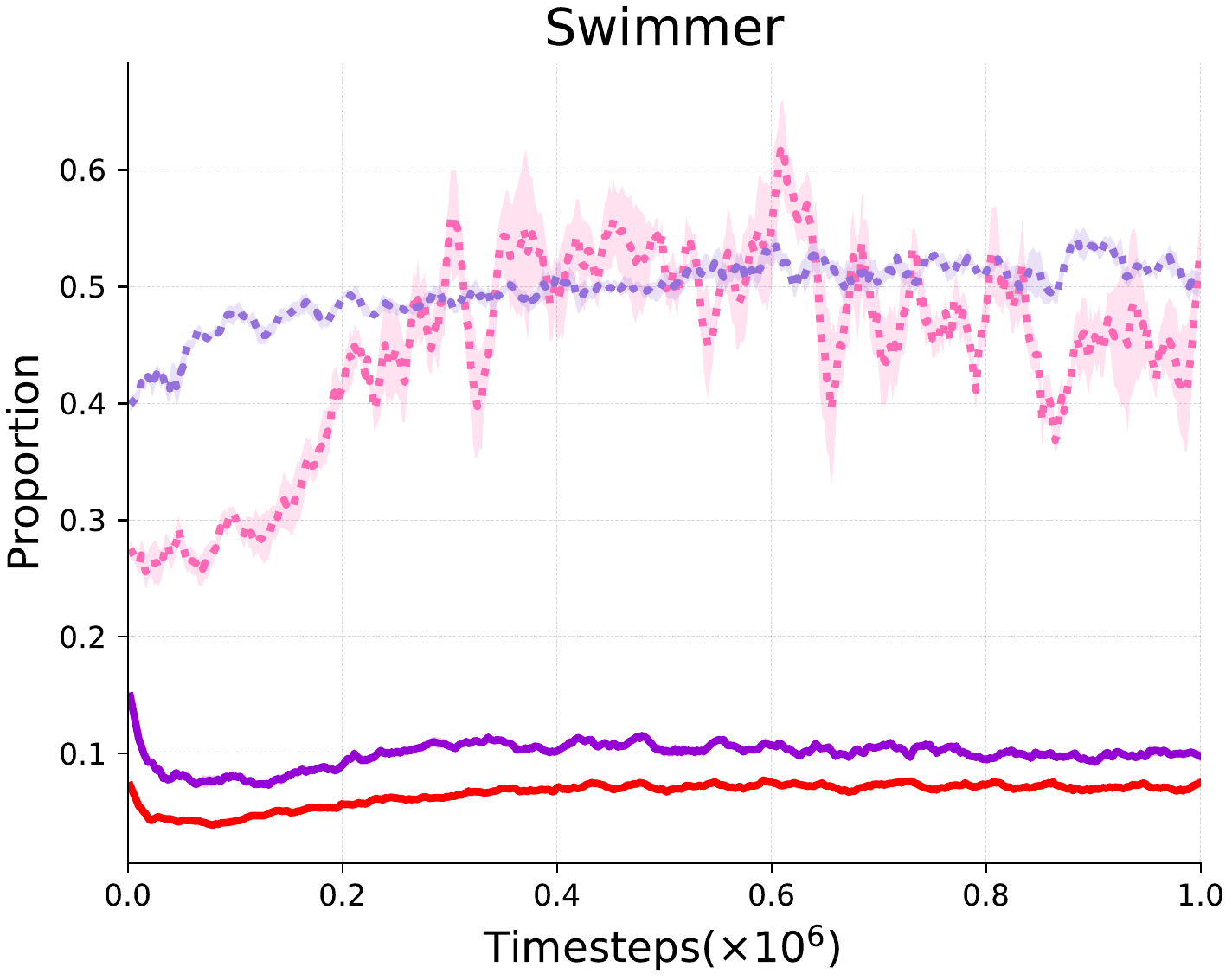}
  		\includegraphics[width=\widthproperty\linewidth]{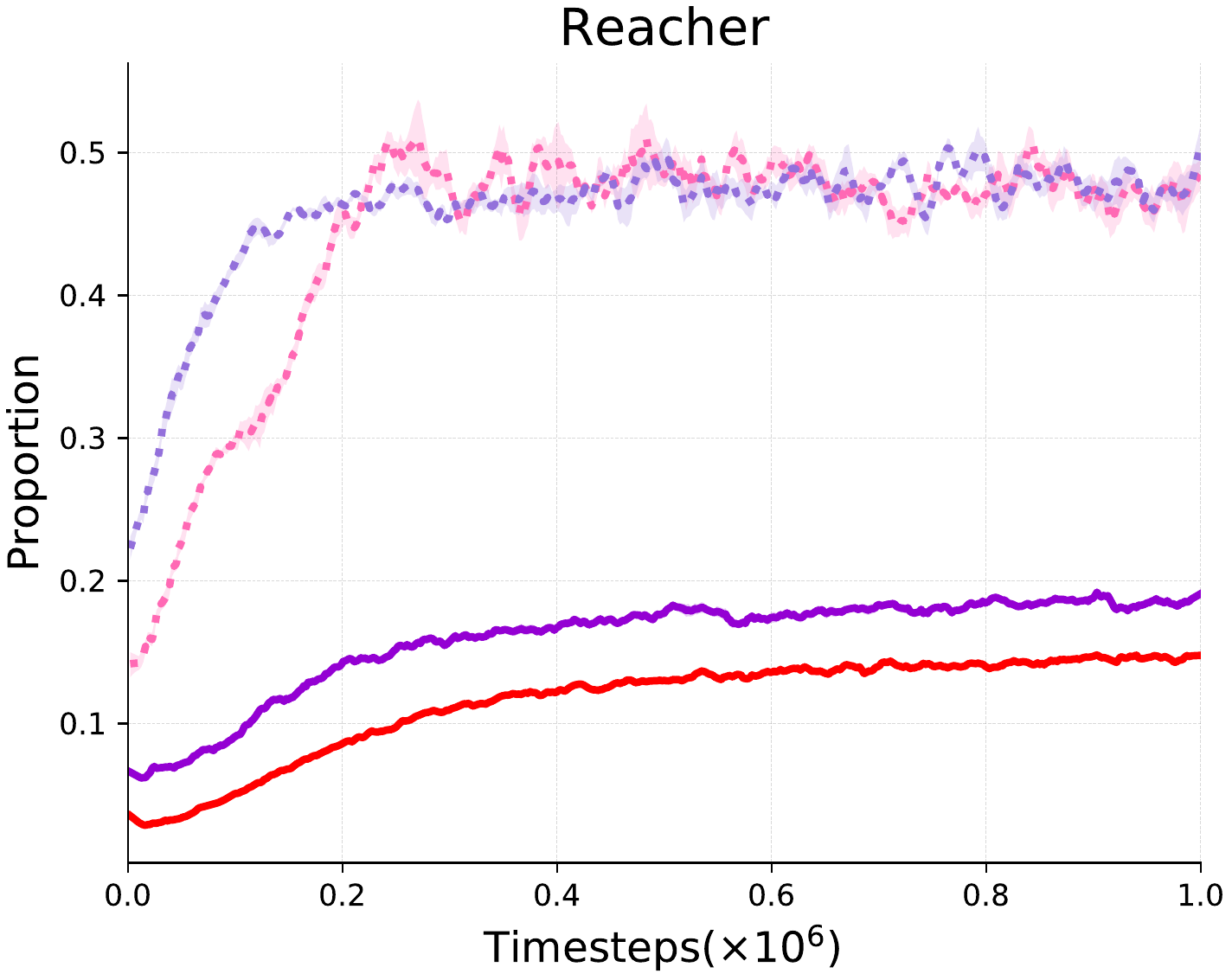}
  		\includegraphics[width=\widthproperty\linewidth]{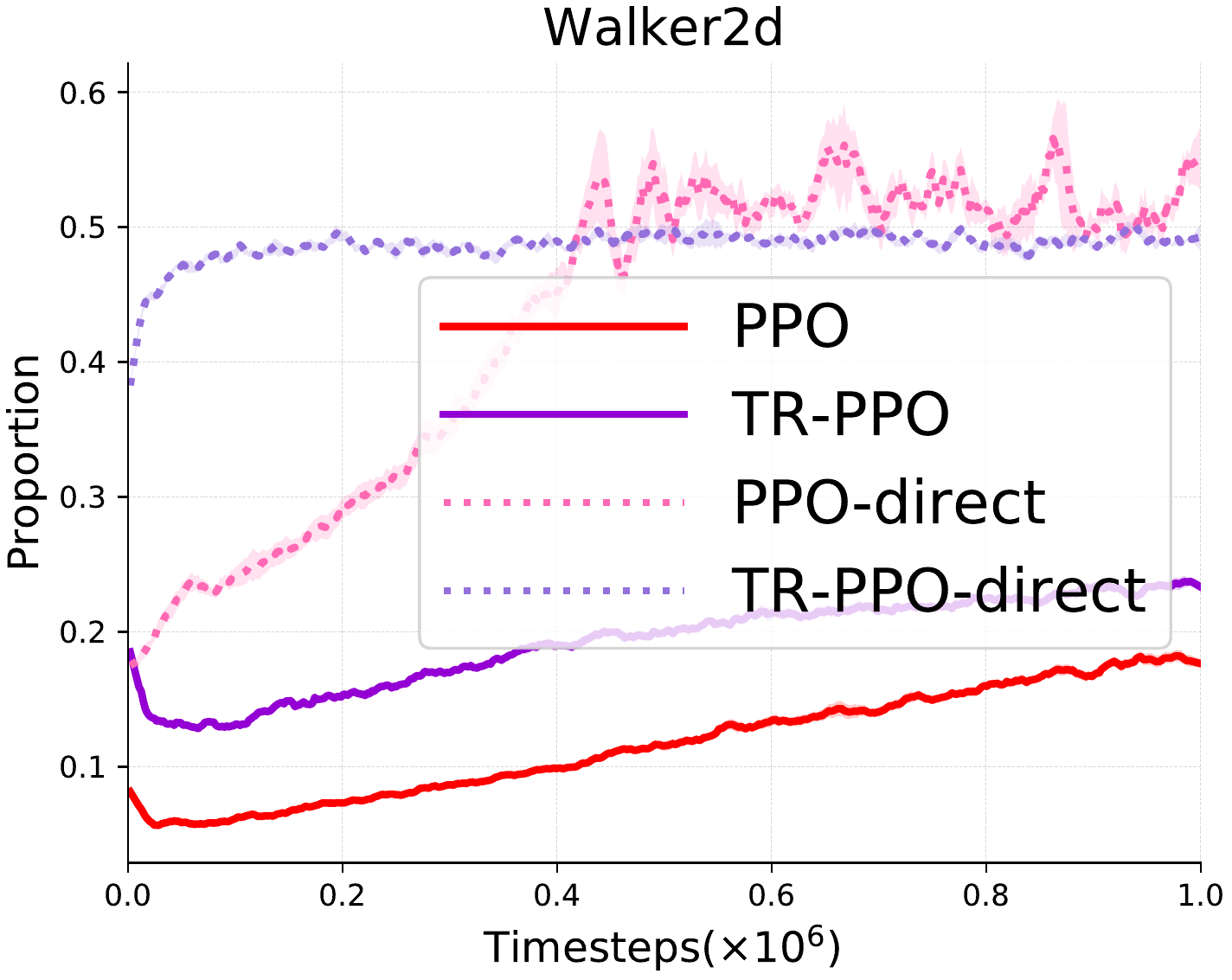}
  	}
%  	\centerline{
%  	}

	\iffastcompile
		\caption{
		}\label{fig_fraction_bad_solution}
	\else
		\caption[width=\columnwidth]{
		    The proportions of the objectives which are not improved but the policy satisfies the clipping condition. 
		    Formally, the following conditions are considered: 
		    for PPO and \pmethodratiosimple/, $\ratioa[t][\rm new] \adv[t] < \ratioa[t][\rm old] \adv[t]$ and $|\ratioa[t][\rm new]-1|\geq \epsilon$;
		    for \pmethodkl/ and \pmethodklsimple/, $\ratioa[t][\rm new] \adv[t] < \ratioa[t][\rm old] \adv[t]$ and $D_{\rm KL}^{s_t}(\theta\oldsub, \theta\newp)\geq \delta$.
	%	    The proportions are calculated over all sampled state-actions at that epoch.
		}\label{fig_fraction_bad_solution}
	\fi
\end{figure}

\subsection{{The Minimization Operation}}\label{sec_experiment_min}
We have discussed the importance of the minimization operation (the improvement condition) of PPO in \Cref{sec_PPO} (see \eqrefnop[eq_L_PPO_expectation] and \eqrefnop[eq_L_PPO_case_general]). 
We have stated that without the minimization operation, the likelihood ratios may be trapped in the ones which make the surrogate objective worse.
To validate our speculation, we evaluate two variants for PPO, which remove minimization operation for PPO and \pmethodkl/, named \emph{\pmethodratiosimple/} and \emph{\pmethodklsimple/}, respectively.
%\emph{\pmethodklsimple/}: A vanilla version of \pmethodkl/, which does not include the $\min(\cdot,\cdot)$ operation.
As shown in \Cref{fig_rew_validate}, both these two methods fail on all the tasks (dotted line).
%We have stated that without the $\min$ operation, the solution may be trapped in a bad one.
We have stated that the minimization operation provide remedy for the case when the objective is not improved while violating the constraint, e.g.,  $\ratioa[t][\rm new] \adv[t] < \ratioa[t][\rm old] \adv[t]$ and $|\ratioa[t][\rm new]-1|\geq \epsilon$ ( $D_{\rm KL}^{s_t}(\theta\oldsub, \theta\newsub) \geq \delta$ ).
We make statistics of these samples.
%Thus we also plot the proportions of the objectives which are not improved but the corresponding policy satisfy the clipping condition. For example, for ratio-based clipping methods, we make statistics for .
As can be seen in \Cref{fig_fraction_bad_solution}, with the minimization operation, the proportions of such un-improved samples are significantly reduced.
%less than that of the one without the operation.
%\ttt{These bad solution account for the failure .}

\subsection{Comparison with the State-of-the-art Methods }
We compare our methods with two state-of-the-art methods:  Soft Actor Critic (SAC) \citep{haarnoja2018soft} and Twin Delayed Deep Deterministic policy gradient (TD3) \citep{fujimoto18td3}.
We use their implementations by the authors, respectively.
In general, our methods perform stably across different tasks. 
As \Cref{fig_rew} shows and \Cref{tab_reward_hit} lists, our methods outperform SAC on 5 tasks except for HalfCheetah.
Compared to TD3, our methods perform much better on Swimmer, and are slightly better on Hopper, Walker2d and Humanoid, while performing worse on Reacher and HalfCheetah.
On Humanoid task, our methods are not as sample efficient as SAC and TD3 but achieves higher final reward. This may due to that our methods are on-policy algorithms while SAC and TD3 are off-policy ones.
Furthermore, our methods require relatively less effort on hyperparameter tuning, as we use the same hyperparameter across most of the tasks.
In addition, the variants of PPO are much more computationally efficient than SAC. 
{As can be seen in \Cref{tab_training_time}, all the variants of PPO require almost 1/10 and 1/50 time of TD3 with 1 million and 20 million timesteps, respectively.}

\begin{table}[!t]
\centering
	\setlength{\tabcolsep}{3pt}
	
	\begin{tabular}{c|ccc}
	\toprule
	   Task & Variants of PPO  & SAC & TD3 \\
	\midrule
%	Training Time (Wall-Clock)
	$1\times10^6$ timesteps & 24 min & 182 min & 238 min  \\ 
	$20\times10^6$ timesteps (Humanoid)& 2 h & 72 h  & 108 h \\
%	\footnote{We use 64 parallel environments for Humanoid to speed up the sampling procedure, thus the training time . }
	\bottomrule
	\end{tabular}
	\caption{
	Training wall-clock time of the algorithms.
%	For all the variants of PPO, we use our implementation based on \citep{.}.
%	For SAC and TD3, we use the implementations provided by the authors \citep{.}.
	}\label{tab_training_time}
\end{table}

%\subsection{Discussion}

%\begin{question}
%Is it necessary to conduct the $min(\cdot, \cdot)$ operation between the unclipped and the clipped objective?
%\end{question}
%Remember that we have analysed why it is necessary to conduct the $min(\cdot, \cdot)$ operation between $\ratioa \adv$ and ${\fnbound{\KL}}\left( {\ratioa,\delta} \right)\adv$ in \Cref{sec_method_kl}. We also experimented a version without the $min(\cdot, \cdot)$ operation, denoted as \emph{\pmethodklsimple/}, which has exactly the same architecture and hyperparameter as \pmethodkl/.
%
%The results are far beyond our expectation. \pmethodklsimple/ performs extremely bad on all tasks without exception.
%The likelihood ratio and the KL divergence of \pmethodklsimple/ are extremely far beyond that of \pmethodkl/ (we put the these results in Appendix \ref{..}, since the values of \pmethodklsimple/ are too large).
%The performance of the policy is very poor during the training phase, as \Cref{fig_rew} plots.
%\ttt{We refer you to \Cref{sec_method_kl}, where we have discussed in detail why the $min(\cdot, \cdot)$ is necessary and why these results happens.}

\section{Conclusion}\label{sec_conclusion}
%The well-known PPO algorithm adopts a heuristic clipping mechanism to improve stability during training \citep{schulman2015trust}. 
Despite the effectiveness of the well-known PPO, it somehow lacks theoretical justification, and its actual optimization behaviour is less studied.
To our knowledge, this is the first work to reveal the reason {why} PPO could neither strictly bound the likelihood ratio nor enforce a well-defined trust region constraint.
Based on this observation, we proposed a trust region-based clipping objective function with a rollback operation.
The trust region-based clipping is more theoretically justified while the rollback operation could enhance its ability in restricting policy.
%This improvement may be due to that the \ratiobased/ clipping method has been found to suffer from a local optima issue \citep{wang2019trust}, whereas trust region-based one seems not to suffer from it.
Both these two techniques significantly improve ability in confining policy and maintaining training stability.
%	which adaptively adjusts the clipping range according to the likelihood of the behavior policy to take some action. 
Extensive results show the effectiveness of the proposed methods.

\ttt{In conclusion, our results highlight the necessity to confine the policy difference, leading to a stable improvement of the policy.
Besides, different policy metric of the constraint could result in different algorithmic behavior. We found that the KL divergence-based ones outperform the likelihood ratio-based ones. We advocate more studies on how the metric affects the underlying algorithmic behavior.
%Moreover, the optimization approach could significantly decide the goodness of the resulted policy. 
Moreover, the results show that the clipping technique could be served as an effective approach to address the problems with constraint or the one dragged with an adjusted regularization term. We hope our results may inspire approaches for solving these problems.
%We show that the clipping technique could be served as an effective technique to solve the problem with constraint or the one \ttt{dragged} with an adjusted regularization term. 
%Such clipping technique could significantly reduce the optimization complexity and usually result in a better solution. 
We hope it may inspire approaches for solving other problems with a similar structure.
Last but not least, deep RL algorithms have been notorious in its tricky implementations and require much effort to tune the hyperparameters \citep{islam2017reproducibility,henderson2018deep}.
Our proposed methods are equally simple to implement and tune as PPO but perform much better. They may be considered as useful alternatives to PPO.}
%\tore{More works on clipping}.
%, while is theoretically justified and seems not suffer from the local optima issue (deriving from the \ratiobased/ clipping) \citep{wang2019trust}.

% Acknowledgements should go at the end, before appendices and references
\acks{This work is partially supported by National Science Foundation of China (61976115,61672280, 61732006), AI+ Project of NUAA(56XZA18009), research project no. 6140312020413, Postgraduate Research \& Practice Innovation Program of Jiangsu Province (KYCX19\_0195).}

% Manual newpage inserted to improve layout of sample file - not
% needed in general before appendices/bibliography.

\newpage

\appendix

%\section{Experimental Results on Atari Games}\label{sec_experiment_atari}

\section{Implementation Details}\label{sec_implementation_detail}
\Cref{tab_hyperparameters_mujoco} and \Cref{tab_hyperparameters_atari} list the hyperparameters used for Mujoco and Atari tasks respectively.

\begin{table}[h!]
\centering
\setlength{\tabcolsep}{0pt}
\renewcommand{\arraystretch}{1.2}
\begin{tabular}{+c - c}
\toprule
\rowstyle{\bfseries}
Hyperparameter  & Value  
\\ \hline 
%	\parbox{.4\linewidth}{
%	\centering
%	\vspace{.2\baselineskip}	
%	clipping range $\epsilon$  \\
%	(PPO and \pmethodfallback/)
%	}
%	& 0.2
%\\ \hline
%	\parbox{.4\linewidth}{
%		\vspace{.2\baselineskip}
%		\centering
%		trust region coefficient $\delta$
%	}
%	& 
%	
%	\parbox{.6\linewidth}{
%		\vspace{.2\baselineskip}
%		\centering
%		\pmethodkl/: 0.05 (Humanoid), 0.035 (other) \\
%		\pmethodhybrid/: 0.05 (Humanoid), 0.03 (other)
%	}
%%	LinearAnneal(0.001,0)
%\\ \hline
%	\parbox{.4\linewidth}{
%		\vspace{.2\baselineskip}
%		\centering
%		rollback coefficient $\alpha$ \\
%	}
%	& 
%	
%	\parbox{.5\linewidth}{
%		\vspace{.2\baselineskip}
%		\centering
%		-0.01 (\pmethodfallback/)\\
%		-0.05 (\pmethodhybrid/)
%	}
%\\ \hline
coefficient of PPO & 
	\parbox{.5\linewidth}{
		\vspace{.2\baselineskip}	
		\centering
		$\epsilon=0.2$ 
		\vspace{.2\baselineskip}	
	}
\\ \hline
	coefficient of \pmethodfallback/ & 
	\parbox{.5\linewidth}{
		\vspace{.2\baselineskip}	
		\centering
		$\epsilon=0.2$ \\
		$\alpha=0.02$ (Humanoid), $0.3$ (Other) \\
%		$\alpha=
		\vspace{.2\baselineskip}	
	}

\\ \hline
	coefficient of \pmethodkl/ & 
	\parbox{.5\linewidth}{
		\vspace{.2\baselineskip}	
		\centering
		$\delta=0.05$ (Humanoid), $0.035$ (Other) \\
	}
\\ \hline
	coefficient of \pmethodhybrid/ & 
	\parbox{.5\linewidth}{
		\vspace{.2\baselineskip}	
		\centering
		$\delta=0.05$ (Humanoid), $0.03$ (Other) \\
		$\alpha=5$
		\vspace{.2\baselineskip}
	}
\\ \hline

learning rate & $3\times 10^{-4}$ \\
\hline
number of parallel environments & 
\parbox{2in}{ \centering \quad\quad \\ 64 (Humanoid) \\ 2 (Other tasks) 
	\vspace{.2\baselineskip}  }  \\
\hline
timesteps per epoch & 1024 \\
\hline
initial logstd of policy &
\parbox{2in}{ \centering \quad \\ -1.34 (HalfCheetah,Humanoid) \\ 0 (Other tasks) 
	\vspace{.2\baselineskip} } \\
\hline
policy & Gaussian \\
\hline
$\lambda$ (GAE) & 0.95 \\
\bottomrule
\end{tabular}
\caption{
Hyperparameters for the proposed methods on Mujoco tasks.
}\label{tab_hyperparameters_mujoco}
\end{table}

\begin{table}[h!]
\centering
\setlength{\tabcolsep}{0pt}
\renewcommand{\arraystretch}{1.4}
\begin{tabular}{+c-c}
\toprule
\rowstyle{\bfseries}
	Hyperparameter  & Value
\\ \midrule
%	\parbox{.5\linewidth}{
%	\centering
%	\vspace{.2\baselineskip}	
%	clipping range $\epsilon$  \\
%	(PPO and \pmethodfallback/)
%	}
%	& LinearAnneal(0.1,0) 
%\\ \hline
%	\parbox{.5\linewidth}{
%	\vspace{.2\baselineskip}
%	\centering
%	trust region coefficient $\delta$ \\
%	(\pmethodkl/ and \pmethodhybrid/)
%	}
%	& 
%	LinearAnneal(0.001,0)
%\\ \hline
%	\parbox{.5\linewidth}{
%	\vspace{.2\baselineskip}
%	\centering
%	rollback coefficient $\alpha$ \\
%	}
%	& 
%	
%	\parbox{.5\linewidth}{
%	\vspace{.2\baselineskip}
%	\centering
%	\vspace{.2\baselineskip}	
%	\pmethodfallback/: -0.01\\
%	\pmethodhybrid/: -0.05
%	\vspace{.2\baselineskip}
%	}
%\\ \hline

	coefficient of PPO & 
	\parbox{.5\linewidth}{
		\vspace{.2\baselineskip}	
		\centering
		$\epsilon=\text{LinearAnneal}(0.1,0)$ \\
		\vspace{.2\baselineskip}	
	}

\\ \hline
	coefficient of \pmethodfallback/ & 
	\parbox{.5\linewidth}{
		\vspace{.2\baselineskip}	
		\centering
		$\epsilon=\text{LinearAnneal}(0.1,0)$ \\
		$\alpha=0.01$
		\vspace{.2\baselineskip}	
	}

\\ \hline
	coefficient of \pmethodkl/ & 
	\parbox{.5\linewidth}{
		\vspace{.2\baselineskip}	
		\centering
		$\delta=\text{LinearAnneal}(0.001,0)$ \\
	}

\\ \hline
	coefficient of \pmethodhybrid/ & 
	\parbox{.5\linewidth}{
		\vspace{.2\baselineskip}	
		\centering
		$\delta=\text{LinearAnneal}(0.0008,0)$ \\
		$\alpha=20$
		\vspace{.2\baselineskip}
	}	
\\ \hline
	learning rate & $2.5\times 10^{-4}$ \\
\hline
	number of parallel environments & 
	\parbox{.5\linewidth}{ \centering 8  } 
\\ \hline
	timesteps per epoch & 128 
\\ \hline
	policy & CNN 
\\ \hline
	$\lambda$ (GAE) & 0.95 
\\	\hline	
	coefficient of entropy & 0.01
\\
\bottomrule
\end{tabular}
\caption{
Hyperparameters for the proposed methods on Atari tasks.
}\label{tab_hyperparameters_atari}
\end{table}

\vskip 0.2in
\bibliography{TrulyPPO}

\end{document}